\documentclass[twoside,11pt]{article}
\usepackage{jmlr2e}

\usepackage{algorithm}
\usepackage{algorithmicx}
\usepackage[noend]{algpseudocode}
\usepackage{amsfonts}
\usepackage{amsmath}
\usepackage{amssymb}
\usepackage{bbm}
\usepackage{bm}
\usepackage{color}
\usepackage{dirtytalk}
\usepackage{dsfont}
\usepackage{enumerate}
\usepackage{graphicx}
\usepackage{listings}
\usepackage{mathtools}
\usepackage{subfigure}
\usepackage{times}
\usepackage{xspace}

\usepackage[usenames,dvipsnames]{xcolor}
\hypersetup{
  pdftex,
  pdffitwindow=true,
  pdfstartview={FitH},
  pdfnewwindow=true,
  colorlinks,
  linktocpage=true,
  linkcolor=Green,
  urlcolor=Green,
  citecolor=Green
}
\usepackage[capitalize,noabbrev]{cleveref}

\usepackage[backgroundcolor=white]{todonotes}

\newcommand{\commentout}[1]{}
\newcommand{\junk}[1]{}

\Crefname{corollary}{Corollary}{Corollaries}
\Crefname{proposition}{Proposition}{Propositions}
\Crefname{theorem}{Theorem}{Theorems}
\Crefname{definition}{Definition}{Definitions}
\Crefname{assumption}{Assumption}{Assumptions}
\Crefname{example}{Example}{Examples}
\Crefname{remark}{Remark}{Remarks}
\Crefname{setting}{Setting}{Settings}
\Crefname{lemma}{Lemma}{Lemmas}

\usepackage{thmtools}
\declaretheorem[name=Theorem,refname={Theorem,Theorems},Refname={Theorem,Theorems}]{theorem}
\declaretheorem[name=Lemma,refname={Lemma,Lemmas},Refname={Lemma,Lemmas},sibling=theorem]{lemma}

\declaretheorem[name=Example,refname={Example,Examples},Refname={Example,Examples}]{example}

\newcommand{\cI}{\mathcal{I}}
\newcommand{\cM}{\mathcal{M}}
\newcommand{\cN}{\mathcal{N}}
\newcommand{\cP}{\mathcal{P}}
\newcommand{\cW}{\mathcal{W}}
\newcommand{\eps}{\varepsilon}

\newcommand{\realset}{\mathbb{R}}

\newcommand{\E}[1]{\mathbb{E} \left[#1\right]}
\newcommand{\condE}[2]{\mathbb{E} \left[#1 \,\middle|\, #2\right]}
\newcommand{\Et}[1]{\mathbb{E}_t \left[#1\right]}
\newcommand{\prob}[1]{\mathbb{P} \left(#1\right)}
\newcommand{\condprob}[2]{\mathbb{P} \left(#1 \,\middle|\, #2\right)}
\newcommand{\probt}[1]{\mathbb{P}_t \left(#1\right)}

\newcommand{\abs}[1]{\left|#1\right|}
\newcommand{\ceils}[1]{\left\lceil#1\right\rceil}

\newcommand*\dif{\mathop{}\!\mathrm{d}}
\newcommand{\floors}[1]{\left\lfloor#1\right\rfloor}
\newcommand{\I}[1]{\mathds{1} \! \left\{#1\right\}}

\newcommand{\norm}[1]{\|#1\|}
\newcommand{\normw}[2]{\|#1\|_{#2}}

\newcommand{\set}[1]{\left\{#1\right\}}

\newcommand{\T}{^\top}

\DeclareMathOperator*{\argmax}{arg\,max\,}

\mathchardef\mhyphen="2D

\newcommand{\cosoftelim}{\ensuremath{\tt CoSoftElim}\xspace}

\newcommand{\expthree}{\ensuremath{\tt Exp3}\xspace}

\newcommand{\gradband}{\ensuremath{\tt GradBand}\xspace}

\newcommand{\lints}{\ensuremath{\tt LinTS}\xspace}
\newcommand{\linucb}{\ensuremath{\tt LinUCB}\xspace}

\newcommand{\softelim}{\ensuremath{\tt SoftElim}\xspace}

\newcommand{\ts}{\ensuremath{\tt TS}\xspace}
\newcommand{\ucb}{\ensuremath{\tt UCB1}\xspace}
\newcommand{\ucbv}{\ensuremath{\tt UCB\mhyphen V}\xspace}


\ShortHeadings{Meta-Learning Bandit Policies by Gradient Ascent}{Kveton, Mladenov, Hsu, Zaheer, Szepesv\'{a}ri, and Boutilier}

\firstpageno{1}

\begin{document}

\title{Meta-Learning Bandit Policies by Gradient Ascent}

\author{\name Branislav Kveton
  \email bkveton@google.com \\
  \addr Google Research
  \AND
  \name Martin Mladenov
  \email mmladenov@google.com \\
  \addr Google Research
  \AND
  \name Chih-Wei Hsu
  \email cwhsu@google.com \\
  \addr Google Research
  \AND
  \name Manzil Zaheer
  \email manzilzaheer@google.com \\
  \addr Google Research
  \AND
  \name Csaba Szepesv\'{a}ri
  \email szepi@google.com \\
  \addr DeepMind / University of Alberta
  \AND
  \name Craig Boutilier
  \email cboutilier@google.com \\
  \addr Google Research
}

\editor{}

\maketitle

\begin{abstract}
Most bandit policies are designed to either minimize regret in any problem instance, making very few assumptions about the underlying environment, or in a Bayesian sense, assuming a prior distribution over environment parameters. The former are often too conservative in practical settings, while the latter require assumptions that are hard to verify in practice. We study bandit problems that fall between these two extremes, where the learning agent has access to \emph{sampled bandit instances} from an unknown prior distribution $\cP$ and aims to achieve high reward on average over the bandit instances drawn from $\cP$. This setting is of a particular importance because it lays foundations for \emph{meta-learning} of bandit policies and reflects more realistic assumptions in many practical domains. We propose the use of parameterized bandit policies that are \emph{differentiable} and can be optimized using \emph{policy gradients}. This provides a broadly applicable framework that is easy to implement. We derive reward gradients that reflect the structure of bandit problems and policies, for both non-contextual and contextual settings, and propose a number of interesting policies that are both differentiable and have low regret. Our algorithmic and theoretical contributions are supported by extensive experiments that show the importance of baseline subtraction, learned biases, and the practicality of our approach on a range problems.
\end{abstract}

\begin{keywords}
  Bayesian bandits, contextual bandits, meta-learning, policy gradients, softmax
\end{keywords}

\section{Introduction}
\label{sec:introduction}

A \emph{stochastic bandit} \citep{lai85asymptotically,auer02finitetime,lattimore19bandit} is an online learning problem where a \emph{learning agent} sequentially pulls \emph{arms} with stochastic rewards. The agent aims to maximize its expected cumulative reward over some horizon, but does not know the mean rewards of the arms \emph{a priori}. Hence, it must learn them by pulling the arms. This induces the well-known \emph{exploration-exploitation trade-off}: the agent can \emph{explore} to learn more about arms, or it can \emph{exploit} what it has learned and pull the arm with the highest estimated reward. An example of a bandit is a clinical trial, where an \emph{arm} might correspond to a treatment and its \emph{reward} reflects the random outcome of that treatment for a generic patient. A \emph{contextual bandit} \citep{li10contextual,agrawal13thompson} is a generalization of the bandit where the stochastic reward of an arm depends on observed, but varying, \emph{context}. In the clinical trial example, the context might be the medical history of the treated patient, since an outcome of a treatment is likely to depend on it.

Bandit algorithms are typically evaluated by their \emph{regret}, the difference between cumulative rewards of always pulling the optimal arm and actions of the bandit algorithm. Most commonly, bandit algorithms are designed to have low regret in any problem instance in some fixed problem class \citep{lattimore19bandit}. The regret bounds are either worst-case or instance-dependent. This \emph{classical approach} to algorithm design focuses on improvements on hard problem instances, no matter how unlikely they are, and neglects easy instances, which may be more likely in practice. It may also neglect specific properties of the problem class or objective, if they are hard to capture using formalism.

An alternative to the classical design are \emph{Bayesian bandits} \citep{gittins79bandit,gittins11multiarmed}, where some prior distribution $\cP$ over problem instances is assumed. In Bayesian bandits, the aim is to design \emph{Bayes optimal policies} that perform well in expectation with respect to prior $\cP$, have high \emph{Bayes reward} or equivalently low \emph{Bayes regret}. For specific priors $\cP$, such policies often have relatively simple forms and can be computed using dynamic programming. Unfortunately, it is not known how to design these policies when $\cP$ is unknown or when the context is present, and they are computationally costly to compute in general.

The classical and Bayesian approaches fall on two ends of a spectrum that limit their utility in practice. The classical policies are often conservative and over-explore on easy problem instances, sacrificing strong regret guarantees for practicality. The strong assumptions in Bayesian methods mean that they have limited practical applicability, because realistic problems rarely meet the assumed requirements.

We study an alternative view that falls between these two extremes. Specifically, we focus on bandit problems where the learning agent has access to \emph{sampled bandit instances} from an unknown prior distribution $\cP$. Our aim is to \emph{learn} bandit policies, using these sampled instances, that have high Bayes reward with respect to $\cP$. In essence, we automate learning of Bayesian bandit policies \citep{berry85bandit} without assuming that the prior $\cP$ is known. Our techniques are a form of \emph{meta-learning} \citep{thrun96explanationbased,thrun98lifelong,baxter98theoretical,baxter00model} where we learn good bandit algorithms, specialized to problem instances drawn from $\cP$.

Our approach falls in a useful middle ground of both theoretical and practical importance. In comparison to Bayesian bandits, we make minimal assumptions about the prior $\cP$ and the form of optimized bandit policies; and also do not require that $\cP$ is known. In comparison to the classical designs, we directly exploit domain-specific characteristics encoded in distribution $\cP$, which may be hard to capture in a pure-theory design. Indeed, one prime motivation for our work is that bandit policies are rarely put into practice as analyzed. It is well-known that careful tuning often reduces their regret by an order of magnitude \citep{vermorel05multiarmed,maes12metalearning,kuleshov14algorithms,hsu19empirical}. Unfortunately, practical tuning tends to be \emph{ad hoc} in nature.

We develop a \emph{general, systematic, and data-dependent} approach to learning bandit policies. At a high level, we design bandit policies of specific parametric forms, which are optimized using \emph{policy gradients} \citep{williams92simple,sutton00policy,baxter01infinitehorizon} on sampled problem instances from some underlying distribution $\cP$. This approach is data-dependent and exploits problem structure in ways that classical designs do not, by directly minimizing the quantity of interest, the \emph{actual Bayes regret} with respect to $\cP$. In comparison to Bayesian bandits, we lose guarantees on Bayes optimality, in return for generality and lower computational cost (\cref{sec:comparison existing solutions}).

The variance in policy-gradient optimization can make optimization challenging. To make our approach practical, we incorporate novel forms of \emph{baseline subtraction} to reduce variance. We also propose special \emph{policy classes} that are well suited to our approach, because they are differentiable and have low regret for some choices of the policy parameters. We use such parameter choices as starting points in our optimization, because they guarantee that the initial policies cannot perform too poorly. Two of our proposed policies, \softelim and \cosoftelim, are algorithmically novel. They are randomized yet they do not explore by adding noise to the maximum likelihood estimate, like other randomized exploration schemes in stochastic bandits \citep{agrawal12analysis,agrawal13thompson,russo18tutorial,kveton19garbage,kveton19perturbed2}. We also derive the \emph{reward gradient of Thompson sampling}, which is a result of broad significance given the practical importance of Thompson sampling. Finally, we evaluate our methodology empirically on a range of bandit problems, which highlights its versatility.

The paper is structured as follows. In \cref{sec:setting}, we introduce basic background, notation, and our setting. In \cref{sec:bandit policy optimization}, we propose gradient-based optimization of bandit policies. In \cref{sec:reward gradient}, we derive the reward gradient and suggest baseline subtraction to make the optimization practical. In \cref{sec:mab algorithms,sec:contextual algorithms}, we present our differentiable policies for multi-armed and contextual bandits, respectively; and analyze some of them. In \cref{sec:experiments}, we evaluate our proposed policies and their optimization. We discuss related work in \cref{sec:related work} and conclude in \cref{sec:conclusions}.

Parts of this article appeared in \cite{boutilier20differentiable}. This work extends the meta-learning framework in the earlier paper to contextual bandits and provides additional exposition, theoretical analysis, and experiments.

\section{Problem Setting}
\label{sec:setting}

This section introduces basic background, notation, and assumptions used in the remainder of the paper. We introduce stochastic multi-armed bandits in \cref{sec:stochastic multi-armed bandits}, describe Bayesian bandits in \cref{sec:bayesian bandits}, and detail the assumptions underlying our techniques in \cref{sec:framework}.

We adopt the following notation. Let $[n] = \set{1, \dots, n}$. We denote by $x \oplus y$ the concatenation of vectors $x$ and $y$. For any \emph{positive semi-definite (PSD)} matrix $M$, we define $\normw{x}{M} = \sqrt{x\T M x}$. The $d \times d$ identity matrix is denoted $I_d$. We use $\tilde{O}$ for the big-O notation up to logarithmic factors.

\subsection{Stochastic Multi-Armed Bandits}
\label{sec:stochastic multi-armed bandits}

A \emph{stochastic multi-armed bandit} \citep{lai85asymptotically,auer02finitetime,lattimore19bandit} is an online learning problem where a learning agent interacts with the environment by repeatedly making decisions, or pulling \emph{arms}, over a sequence of $n$ interactions, or \emph{rounds}. The agent receives feedback for each arm pull in the form of a real-valued stochastic \emph{reward} and tries to learn the optimal arm. In the standard multi-armed bandit, the reward distributions of arms are fixed. In this work, we also consider \emph{contextual bandits} \citep{li10contextual,chu11contextual,agrawal13thompson}, where the reward distributions depend on some observed \emph{context}, which varies over time. In this case, the choice of the agent at each round can be conditioned on the context at that round. We formalize the contextual setting here, since non-contextual problems can be viewed as a special case with a single fixed context.

Formally, the setting is defined as follows. We assume $K$ arms $i\in [K]$, a horizon of $n$ rounds, and that contexts are vectors in $\realset^d$. In round $t \in [n]$, the agent observes context $x_t \in \realset^d$, pulls arm $I_t \in [K]$, and observes the reward of the pulled arm $I_t$. The \emph{mean reward} of arm $i \in [K]$ in context $x$ is $f_i(x, \theta_*)$, where $f_i: \realset^d \times \Theta \to \realset$ is a known function, $\theta_* \in \Theta$ is a vector of unknown \emph{model parameters}, and $\Theta$ is the set of \emph{feasible model parameters}. The model parameters $\theta_*$ capture all unknown elements of the environment that the learning agent interacts with and we refer to it as a \emph{problem instance}. We denote by $Y_{i, t}$ the \emph{realized reward} of arm $i$ in round $t$ and assume that it is drawn i.i.d.\ from a $\sigma^2$-sub-Gaussian distribution with mean $\condE{Y_{i, t}}{\theta_*, x_t} = f_i(x_t, \theta_*)$. We denote all realized rewards in round $t$ by $Y_t = (Y_{i, t})_{i = 1}^K$ and all $n$-round rewards by $Y = (Y_t)_{t = 1}^n \in \realset^{K \times n}$. The assumption that the rewards of all arms are realized is only to simplify notation. The learning agent only observes the rewards of pulled arms. We assume that the sequence of $n$-round contexts is fixed in advance and denote it by $x_{1 : n} = (x_t)_{t = 1}^n$.

At a high level, all learning agents for bandit problems can be viewed as policies that map the history of the agent to the next pulled arm, or a distribution over next pulled arms. The \emph{history} $H_t$ of the agent at the beginning of round $t$ is a sequence of all past observations of contexts, rewards, and actions in the first $t - 1$ rounds; and the current context in round $t$,
\begin{align*}
  H_t
  = (I_1, \dots, I_{t - 1}, x_1, \dots, x_t, Y_{I_1, 1}, \dots, Y_{I_{t - 1}, t - 1})\,.
\end{align*}
The agent is a \emph{randomized policy} $\pi_w(\cdot \mid H_t) \in \Delta_K$, where $\Delta_K$ is the $K$-dimensional probability simplex and $w \in \cW$ are \emph{policy parameters} that fall within the set of \emph{feasible policy parameters} $\cW$. Thus, $\pi_w(i \mid H_t)$ is the probability that arm $i$ is pulled in round $t$ conditioned on history $H_t$ and the actual pulled arm is drawn as $I_t \sim \pi_w(\cdot \mid H_t)$. To simplify notation, we write $\pi$ instead of $\pi_w$ when the dependence on $w$ is not germane to the discussion. Let $\probt{\cdot} = \condprob{\cdot}{H_t}$ and $\Et{\cdot} = \condE{\cdot}{H_t}$ be the conditional probability and expectation, respectively, of any generic random variable given history $H_t$. We also define $I_{i : j} = (I_\ell)_{\ell = i}^j$ and $I = I_{1 : n}$.

The \emph{expected $n$-round reward} of policy $\pi$ in problem instance $\theta_*$ is
\begin{align}
  r(n, \theta_*; \pi)
  = \condE{\sum_{t = 1}^n Y_{I_t, t}}{\theta_*}\,.
  \label{eq:n-round reward}
\end{align}
Note that the expectation is over both realized rewards $Y$ and pulled arms $I$, while the problem instance $\theta_*$ is fixed. A typical goal for bandit algorithms is to maximize $r(n, \theta_*; \pi)$. This is equivalent to minimizing the \emph{expected $n$-round regret} of $\pi$, defined as
\begin{align}
  R(n, \theta_*; \pi)
  = \condE{\sum_{t = 1}^n Y_{i_{*, t}(\theta_*), t} - Y_{I_t, t}}{\theta_*}\,,
  \label{eq:n-round regret}
\end{align}
where $i_{*, t}(\theta_*) = \argmax_{i \in [K]} f_i(x_t, \theta_*)$ is the \emph{optimal arm} in round $t$ in instance $\theta_*$.

\subsection{Bayesian Bandits}
\label{sec:bayesian bandits}

A \emph{Bayesian bandit} \citep{gittins79bandit,berry85bandit} is a bandit problem where the learning agent interacts with problem instances $\theta_*$ drawn i.i.d.\ from a \emph{prior distribution} $\cP$. Our instances are defined in \cref{sec:stochastic multi-armed bandits}. The learning agent interacts with the environment as follows. First, the instance is sampled as $\theta_* \sim \cP$ and the rewards are realized as $Y \mid \theta_*, x_{1 : n}$. As in \cref{sec:stochastic multi-armed bandits}, we assume that $x_{1 : n}$ is an arbitrary fixed sequence of contexts. Then the agent interacts with instance $\theta_*$ for $n$ rounds, from round $1$. The agent does not know $\theta_*$ and $Y$, but it knows the prior $\cP$.

The quality of Bayesian bandit policies is measured by their $n$-round Bayes reward or regret. The \emph{$n$-round Bayes reward} of policy $\pi$ is
\begin{align}
  r(n; \pi)
  = \E{r(n, \theta_*; \pi)}
  = \E{\sum_{t = 1}^n Y_{I_t, t}}\, ,
  \label{eq:bayes reward}
\end{align}
where expectation is taken over realized rewards, pulled arms, and problem instances $\theta_*$. Note that the expectation over $\theta_*$ is not present in \eqref{eq:n-round reward}. The goal of Bayesian bandit policies is to maximize $r(n; \pi_w)$ with respect to $w$. This is equivalent to minimizing the \emph{$n$-round Bayes regret}, defined as
\begin{align}
  R(n; \pi)
  = \E{R(n, \theta_*; \pi)}
  = \E{\sum_{t = 1}^n Y_{i_{*, t}(\theta_*), t} - Y_{I_t, t}}\,,
  \label{eq:bayes regret}
\end{align}
where $i_{*, t}(\theta_*)$ is defined as in \eqref{eq:n-round regret}.

To further elucidate our setting and notation, we provide examples of Bernoulli and contextual linear bandits below.

\begin{example}[Bernoulli bandit with a uniform beta prior]
\label{ex:bernoulli bandit} The model parameters are a vector of arm means $\theta_* \in [0, 1]^K$, where $(\theta_*)_i$ is the mean reward of arm $i$. Thus we have $f_i(x, \theta_*) = (\theta_*)_i$ and $Y_{i, t} \sim \mathrm{Ber}((\theta_*)_i)$. The problem-instance distribution is $\cP(\theta_*) = \prod_{i = 1}^K \mathrm{Beta}((\theta_*)_i; 1, 1)$.
\end{example}

\begin{example}[Contextual bandit with Gaussian rewards and prior]
\label{ex:contextual bandit} The model parameters are a concatenation of $K$ vectors, $\theta_* = \theta_{1, *} \oplus \dots \oplus \theta_{K, *}$, where vector $\theta_{i, *} \in \realset^d$ corresponds to arm $i$. The mean reward of arm $i$ in context $x$ is $f_i(x, \theta_*) = x\T \theta_{i, *}$ and $Y_{i, t} \sim \cN(x\T \theta_{i, *}, \sigma^2)$ for $\sigma > 0$. The problem-instance distribution is $\cP(\theta_*) = \cN(\theta_*; \mathbf{0}, \sigma_0 I_{K d})$ for some $\sigma_0 > 0$.
\end{example}

Bandit policies can be also described readily using our notation. To illustrate this, we define a non-contextual \emph{$\eps$-greedy policy}, which has a single exploration parameter $\eps$.

\begin{example}[$\eps$-greedy policy]
\label{ex:e-greedy policy} The $\eps$-greedy policy has parameter $\eps \in [0, 1]$ and works as follows. It pulls the best empirical arm with probability $1 - \eps$ and a random arm with probability $\eps$. This can be written in our notation as
\begin{align*}
  \pi_\eps(i \mid H_t)
  = (1 - \eps) \I{i = \argmax_{j \in [K]} \hat{\mu}_{j, t - 1}} + \frac{\eps}{K}\,,
\end{align*}
where $\hat{\mu}_{i, t}$ is the empirical mean of arm $i$ after $t$ rounds, $\hat{\mu}_{i, t} = \sum_{\ell = 1}^t \I{I_\ell = i} Y_{i, \ell}$.
\end{example}

\subsection{Our Framework}
\label{sec:framework}

Bayesian bandits form a  basis for our work. However, unlike in the standard Bayesian model, we do not assume that the prior $\cP$ over bandit instances is known. Specifically, we make no structural assumption about this prior, nor do we assume any prior over its possible parameters. Because of this, we also make no assumption about the structure of the optimal bandit policy. Nevertheless, we impose structure later, for the purpose of learning good policies.

Instead, we assume access to a collection of sampled problem instances $\theta_*$ from an unknown but fixed distribution $\cP$. For each instance, we assume that we can \say{simulate} interactions with it. Each simulation of policy $\pi_w$ on problem instance $\theta_*$ outputs $n$ pulled arms $I \in [K]^n$ by $\pi_w$ and all realized rewards $Y \mid \theta_*, x_{1 : n}$, even for the arms that $\pi_w$ does not pull. When $\theta_*$ is known, the simulation is always possible, by following the stochastic model of interaction in \cref{sec:stochastic multi-armed bandits}. The ability to simulate environments is common in reinforcement learning and is one of the reasons for its recent massive success \citep{silver17mastering}.

The bandit policy $\pi_\eps$ in \cref{ex:e-greedy policy} and our objective in \eqref{eq:bayes reward} showcase the two-level character of our learning problem. At the lower level, $\pi_\eps$ adapts to an unknown problem instance $\theta_*$, since it is a function of the history $H_t$ that depends on $\theta_*$. At the higher level, we optimize \eqref{eq:bayes reward} through $\eps$, so that $\pi_\eps$ adapts to $\theta_*$ as efficiently as possible, on average over $\theta_* \sim \cP$. This is why we refer to our problem as meta-learning of bandit policies.

\section{Bandit Policy Optimization}
\label{sec:bandit policy optimization}

\begin{algorithm}[t]
  \caption{\gradband: Gradient-based optimization of bandit policies.}
  \label{alg:gradband}
  \begin{algorithmic}[1]
    \State \textbf{Inputs:}
    \State \quad Initial policy parameters $w_0 \in \cW$
    \State \quad Number of iterations $L$
    \State \quad Learning rate $\alpha$
    \State \quad Batch size $m$
    \Statex
    \State $w \gets w_0$
    \For{$\ell = 1, \dots, L$}
      \For{$j = 1, \dots, m$}
        \State Obtain problem instance $\theta_*^j \sim \cP$
        \State Sample all rewards $Y^j \mid \theta_*^j, x_{1 : n}$
        \State Apply policy $\pi_w$ to contexts $x_{1 : n}$ and rewards $Y^j$, and obtain pulled arms $I^j$
      \EndFor
      \State Let $\hat{g}(n; \pi_w)$ be an estimate of $\nabla_w r(n; \pi_w)$ from $x_{1 : n}$, $(Y^j)_{j = 1}^m$, and $(I^j)_{j = 1}^m$
      \State $w \gets w + \alpha \, \hat{g}(n; \pi_w)$
    \EndFor
    \Statex
    \State \textbf{Output:} Learned policy parameters $w$
  \end{algorithmic}
\end{algorithm}

A key motivation for our work is the design of bandit policies that exploit problem-specific properties, similarly to Bayesian methods. However, we do no require a prior over problem instances, which can be difficult to obtain in many realistic problems. In this section, we present \gradband, a general data-dependent algorithm for learning bandit policies that assumes access to sampled problem instances (\cref{sec:framework}) and optimizes these policies using gradient ascent with respect to policy parameters. The algorithm is detailed in \cref{sec:gradband}. We analyze it in special cases in \cref{sec:theory} and compare it to related designs in \cref{sec:related designs}. Specific instances of \gradband for multi-armed and contextual bandits are presented in \cref{sec:mab algorithms,sec:contextual algorithms}, respectively.

\subsection{Algorithm \gradband}
\label{sec:gradband}

Our algorithm \gradband is shown in \cref{alg:gradband}. Assuming some parameterized class of bandit policies $\pi_w$, the key idea is to maximize the Bayes reward $r(n; \pi_w)$ using gradient ascent with respect to policy parameters $w$ on sampled problem instances from $\cP$. \gradband is initialized with policy parameters $w_0$. In each iteration, the parameters of the prior policy $w$ are updated by gradient ascent using $\hat{g}(n; \pi_w)$, an empirical estimate of the \emph{reward gradient}, $\nabla_w r(n; \pi_w)$, of that prior policy. To compute $\hat{g}(n; \pi_w)$, we run the policy $\pi_w$ on $m$ sampled problem instances, where $\theta_*^j$ denotes the $j$-th sampled instance, its realized rewards are $Y^j \in \realset^{K \times n}$, and its pulled arms are $I^j \in [K]^n$. We detail computation of the reward gradient in \cref{sec:reward gradient}.

The per-iteration time complexity of \gradband is $O(K m n)$, because it samples $m$ problem instances from $\cP$ at each iteration, each instance having horizon $n$ and $K$ arms, and executes one policy in each instance. As shown in \cref{sec:reward gradient}, only $x_{1 : n}$, $Y^j$, and $I^j$ are necessary to compute the empirical gradient $\hat{g}(n; \pi_w)$. Therefore, \gradband does not need to know the sampled model parameters $\theta_*^j$ or the prior distribution $\cP$ to optimize the Bayes reward. This approach to policy optimization is a form of policy gradients, and is common in reinforcement learning (\cref{sec:related designs}).

\subsection{Theory}
\label{sec:theory}

\gradband is general, data-dependent, and directly optimizes the Bayes reward $r(n; \pi)$ with respect to an implicit empirical estimate of the instance distribution $\cP$. However, since $r(n; \pi)$ is a complex function of the adaptive bandit policy $\pi$ and the environment, it is hard to provide meaningful theoretical guarantees on the optimality of the resulting policy. This is one reason why most existing work on bandit algorithms analyzes theoretically manageable regret upper bounds rather than the regret itself. We provide the first such guarantee below. Specifically, we show that the $n$-round Bayes reward of a randomized explore-then-commit policy in a $2$-armed Gaussian bandit is concave in its exploration horizon. As a result, \gradband enjoys the same convergence guarantees as gradient descent for convex functions.

\begin{restatable}[]{theorem}{concavity}
\label{thm:concavity} Consider a $2$-armed Gaussian bandit where the reward of arm $i \in [2]$ in round $t$ is $Y_{i, t} \sim \cN(\mu_i, 1)$. Consider an \emph{explore-then-commit policy} $\pi_h$ with parameter $h \in \cW = [1, \floors{n / 2}]$ that explores each arm $\bar{h} = \floors{h} + Z$ times for $Z \sim \mathrm{Ber}(h - \floors{h})$. Then for any prior distribution $\cP$ over the pair of arm means $\theta_* = (\mu_1,\mu_2) \in \realset^2$, the Bayes reward $r(n; \pi_h)$ is concave in $h$.
\end{restatable}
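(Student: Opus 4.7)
The plan is to exploit the fact that the randomization $\bar{h} = \floors{h} + Z$ with $Z \sim \mathrm{Ber}(h - \floors{h})$ makes $r(n; \pi_h)$ a continuous piecewise-linear function of $h$ with knots only at integers. Setting $t = h - \floors{h}$ and $V(k) = r(n; \pi_k)$ for integer $k \in [1, \floors{n/2}]$, conditioning on $Z$ gives $r(n; \pi_h) = (1 - t) V(\floors{h}) + t V(\floors{h} + 1)$, so concavity of $r(n; \pi_h)$ on $[1, \floors{n/2}]$ is equivalent to the midpoint inequality $V(k-1) + V(k+1) \leq 2 V(k)$ holding at every interior integer knot. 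I will establish this via the stronger property that a smooth extension of $V$ to the real interval $[0, n/2]$ is concave.

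By linearity of expectation over $\cP$, the pointwise claim can be proved for each fixed instance $\theta_* = (\mu_1, \mu_2)$ separately. For such a fixed instance, after $k$ exploration pulls per arm the difference $\hat{\mu}_1 - \hat{\mu}_2 \sim \cN(\Delta, 2/k)$ with $\Delta = \mu_1 - \mu_2$, so the probability of committing to the better arm equals $\Phi(|\Delta|\sqrt{k/2})$. A short calculation then yields the closed form
\begin{align*}
  r(n, \theta_*; \pi_k)
  = \frac{n(\mu_1 + \mu_2)}{2} + |\Delta|\,\psi(k),
  \qquad
  \psi(k) = \frac{n - 2k}{2}\bigl(2\Phi(|\Delta|\sqrt{k/2}) - 1\bigr).
\end{align*}
The first term does not depend on $k$, so it remains to show that the continuous extension of $\psi$ is concave on $[0, n/2]$.

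For the main computation, set $c = |\Delta|/\sqrt{2}$ and $F(k) = 2\Phi(c\sqrt{k}) - 1$, so that $\psi(k) = (n - 2k) F(k)/2$. Using $\phi'(u) = -u\phi(u)$, the chain rule yields $F'(k) = c\phi(c\sqrt{k})/\sqrt{k} > 0$ and $F''(k) = -\bigl(c\phi(c\sqrt{k})/(2k^{3/2})\bigr)(1 + c^2 k) < 0$, so $F$ is increasing and strictly concave on $(0, \infty)$. Differentiating $\psi$ gives $\psi''(k) = -2 F'(k) + (n - 2k) F''(k)/2$, which is strictly negative on $(0, n/2)$ because both summands are non-positive and the first is strictly negative. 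Extending by continuity to the endpoints (where $\psi$ vanishes), $\psi$ is concave on $[0, n/2]$. Averaging over $\theta_* \sim \cP$ preserves concavity, so $V$ admits a concave smooth extension, the midpoint inequalities hold, and the piecewise-linear representation from the first paragraph yields concavity of $r(n; \pi_h)$ in $h$.

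The only nontrivial step is the sign analysis of $F''$, and it is essentially bookkeeping once the common factor $\phi(c\sqrt{k})$ is extracted. The conceptual ingredient that makes the argument possible is the randomization of the integer exploration horizon: it is precisely what turns a discrete combinatorial quantity into a smooth function of $h$ on which standard convex-analytic reasoning applies.
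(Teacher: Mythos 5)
Your proof is correct and follows essentially the same route as the paper: fix an instance, write the expected reward in closed form via $\Phi$, show concavity in the (continuous) exploration horizon, and lift to the randomized policy as a piecewise-linear interpolation whose concavity follows from the second-difference inequality. The only cosmetic difference is that you verify concavity of the closed form by an explicit computation of $\psi''$, whereas the paper packages the same fact as "a product of non-negative, decreasing, convex functions is convex" applied to $\Phi\left(-\Delta\sqrt{h/2}\right)$ and $n-2h$.
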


The theorem is proved in \cref{sec:concave bayes reward}. The key insight is that $r(n; \pi_h)$ under the explore-then-commit policy in a $2$-armed Gaussian bandit has a closed form that is differentiable with respect to $h$. The randomization in \cref{thm:concavity} is only needed to extend $\pi_h$ to continuous exploration horizons $h$. In this case, \gradband enjoys the same convergence guarantees as gradient descent for convex functions \citep[Section 9.3]{boyd04convex}.

We can readily generalize \cref{thm:concavity} to the contextual setting.

\begin{restatable}[]{theorem}{coconcavity}
\label{thm:contextual concavity} Consider a contextual $2$-armed Gaussian bandit with $L$ contexts, where the reward of arm $i$ in context $x \in [L]$ at any round $t$ is $Y_{i, t} \sim \cN(\mu_{i, x}, 1)$. Consider a contextual bandit policy $\pi_h$ that applies the explore-then-commit policy (\cref{thm:concavity}) \emph{separately} in each context. Then for any sequence of contexts $x_{1 : n} \in [L]^n$ and prior distribution $\cP$ over model parameters
\begin{align*}
  \theta_*
  = (\mu_{1, 1}, \mu_{2, 1}) \oplus \dots \oplus (\mu_{1, L}, \mu_{2, L})\,,
\end{align*}
the Bayes reward $r(n; \pi_h)$ is concave in $h$.
\end{restatable}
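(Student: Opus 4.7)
The plan is to reduce the contextual statement to an additive decomposition over contexts and then apply \cref{thm:concavity} summand by summand. Fix the context sequence $x_{1 : n} \in [L]^n$, let $T_x = \set{t \in [n] : x_t = x}$, and write $n_x = |T_x|$. Because $\pi_h$ applies the explore-then-commit rule independently within each context, and because (conditional on $\theta_*$) the rewards $Y_{i, t}$ are independent across rounds, the sequence of pulls made in the rounds $t \in T_x$ depends only on the marginal parameters $(\mu_{1, x}, \mu_{2, x})$ and on exogenous randomness, and coincides in distribution with $\pi_h$ executed on a stand-alone $2$-armed Gaussian bandit of horizon $n_x$ with means $(\mu_{1, x}, \mu_{2, x})$.

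Writing the total reward as
\begin{align*}
  \sum_{t = 1}^n Y_{I_t, t}
  = \sum_{x \in [L]} \sum_{t \in T_x} Y_{I_t, t}
\end{align*}
and taking expectations under $\cP$, the per-context independence of the induced sub-trajectories yields a decomposition of the form $r(n; \pi_h) = \sum_{x \in [L]} r_x(n_x; \pi_h)$, where $r_x(n_x; \pi_h)$ denotes the $n_x$-round Bayes reward of $\pi_h$ on the corresponding stand-alone $2$-armed Gaussian sub-bandit under the $\cP$-marginal prior on $(\mu_{1, x}, \mu_{2, x})$. \cref{thm:concavity} applies directly to each summand and gives that $r_x(n_x; \pi_h)$ is concave in $h$. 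A nonnegative sum of concave functions is concave, which is the conclusion.

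The main obstacle is justifying the decomposition rigorously: one must verify that the pulls made in rounds $t \in T_x$ have exactly the same joint law as a freestanding ETC execution of horizon $n_x$ under prior $\cP_x$, and that the expectation of the total reward decouples across contexts. Both facts follow from the hypotheses---independent Gaussian rewards across rounds and the separability of $\pi_h$ across contexts---and importantly do not require the joint $\cP$ to factorize across contexts; only the per-context marginals enter the summands. A secondary technicality concerns the domain of $h$. \cref{thm:concavity} is stated for a single bandit with $h \in [1, \floors{n / 2}]$, so the clean regime here is $h \in [1, \floors{\min_x n_x / 2}]$; if $h$ exceeds $\floors{n_x / 2}$ for some $x$, the within-context policy merely exhausts its horizon by exploring, contributing a linear (hence still concave) term in $h$, so concavity is preserved on the full feasible interval.
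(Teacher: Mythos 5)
Your proof is correct in its essentials and follows the same route as the paper: decompose the cumulative reward additively over the $L$ contexts, observe that the sub-trajectory in rounds $\set{t : x_t = x}$ is a freestanding instance of the two-armed problem of \cref{thm:concavity} with horizon $n_x$, and conclude by summing concave functions (the paper decomposes the per-instance reward $r(n, \theta_*; \pi_h)$ and then takes the expectation over $\theta_* \sim \cP$, whereas you decompose the Bayes reward directly into Bayes rewards under the per-context marginals of $\cP$; the two are equivalent by linearity of expectation, and you are right that no factorization of $\cP$ across contexts is needed). One correction to your closing remark: for $h$ beyond $\floors{n_x / 2}$ the context-$x$ contribution is \emph{constant} in $h$ (the policy alternates arms for all $n_x$ rounds regardless of how much larger $h$ gets), not linear, and gluing a strictly decreasing concave piece to a constant piece violates concavity --- the one-sided derivative jumps from a negative value up to zero at the junction, so the derivative is not non-increasing. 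The paper avoids this by asserting concavity only on $h \in [1, \min_{x \in [L]} \floors{n_x / 2}]$, and you should restrict to that interval rather than claim the extension.
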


The theorem is proved in \cref{sec:concave bayes reward}. As in \cref{thm:concavity}, we rely on the fact that the expected $n$-round reward of the explore-then-commit policy in a $2$-armed Gaussian bandit has a closed form, for any context and problem instance.

Guarantees like these, and other empirical evidence \citep{hsu19empirical}, justify \emph{gradient-based optimization} of bandit policies. It is difficult to provide guarantees of this form in general. For this reason, we validate the convergence of policy-gradient optimization to good policies empirically in \cref{sec:experiments}.

\subsection{Related Designs}
\label{sec:related designs}

Our approach to meta-learning of bandit policies using policy gradients lies at the intersection of stochastic bandits and reinforcement learning. We briefly contrast these related designs below.

\paragraph{Stochastic multi-armed bandits:} Our objective, maximizing the Bayes reward $r(n; \pi)$ with respect to instance distribution $\cP$, is different from classical designs, which maximize the expected $n$-round reward $r(n, \theta_*; \pi)$ in \emph{any} problem instance $\theta_*$ \citep{lai85asymptotically,auer02finitetime,lattimore19bandit}. The latter is more demanding and guards against worst-case failures, high regret on some problem instance. Our objective is more natural when the prior $\cP$ can be estimated from data and poor performance on unlikely instances can be tolerated.

\paragraph{Bayesian bandits:} Early work on Bayesian bandits \citep{gittins79bandit,berry85bandit,gittins11multiarmed} focused on deriving Bayes optimal policies for specific conjugate priors $\cP$. While these are provably Bayes optimal, such policies must be crafted for specific families of priors, and are only optimal under such priors. By contrast, we make no such assumptions on $\cP$. However, we do lose Bayes optimality guarantees. In particular, the optimal policy may not take the form of the parameterized policy $\pi$ that \gradband optimizes. Since \gradband differentiates policies, it can be computationally costly (\cref{sec:gradband}). Nevertheless, it is less costly and easier to parallelize than the computation of typical Bayes optimal policies, as we detail in \cref{sec:comparison existing solutions}.

\paragraph{Reinforcement learning:} Our approach to learning a bandit policy $\pi$ can be viewed as an instance of \emph{reinforcement learning (RL)} \citep{sutton88learning} where the \emph{state} in round $t$ is the history $H_t$, the \emph{action} is the pulled arm $I_t$, and the \emph{reward} is the realized reward $Y_{I_t, t}$. The main challenge in applying RL methods directly is the fact that the number of dimensions in $H_t$ increases linearly with round $t$.\footnote{Our framing of the problem can be viewed as a \emph{partially-observable} RL problem. Indeed, under the full Bayesian formulation with a known prior $\cP$, the history can be summarized succinctly as a posterior over model parameters (or \say{belief state}), resolving the curse of dimensionality induced by the horizon. Since we assume no knowledge of the prior, this cannot be done in our setting.} Any RL method that solves this problem must introduce some structure to deal with the \emph{curse of dimensionality}. Since it is not clear what shape the value function might take in general, we opt for optimizing parametric bandit policies using policy gradients \citep{williams92simple}, a commonly used technique in RL. The main novelty in our application of policy gradients is the use of specific baseline subtraction techniques that are tailored to the bandit structure of our problem, as detailed in the next section.

\section{Reward Gradient}
\label{sec:reward gradient}

\gradband is a meta-algorithm that maximizes the Bayes reward $r(n; \pi_w)$ of policy $\pi_w$ by gradient ascent. To apply it, we need to choose a parameterized policy class $\pi_w$ and be able to compute the gradient of $r(n; \pi_w)$ with respect to policy parameters $w$. In this section, we derive the gradient for any policy $\pi_w$, its empirical approximation, and propose baseline subtraction to reduce its variance. We introduce specific differentiable policy classes in \cref{sec:mab algorithms,sec:contextual algorithms}.

\subsection{Reward Gradient Derivation}
\label{sec:reward gradient derivation}

The Bayes reward $r(n; \pi_w)$ has two main structural properties that we exploit in the derivation of its gradient. First, it is additive over rounds. Second, the reward in round $t$ does not depend on the actions taken, arms pulled, by the policy after that round. We rely on both properties to differentiate $r(n; \pi_w)$, which yields the following result.

\begin{restatable}[]{lemma}{gradient}
\label{lem:gradient} The gradient of the $n$-round Bayes reward of policy $\pi_w$ with respect to $w$ is
\begin{align*}
  \nabla_w r(n; \pi_w)
  = \sum_{t = 1}^n \E{\nabla_w \log \pi_w(I_t \mid H_t) \sum_{s = t}^n Y_{I_s, s}}\,.
\end{align*}
\end{restatable}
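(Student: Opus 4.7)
The plan is to derive this via the standard log-derivative (REINFORCE) trick applied to the joint distribution of a trajectory, followed by a causality argument that zeroes out the rewards received \emph{before} round $t$. The two structural properties flagged in the text (additivity of the reward over rounds, and the fact that the reward in round $t$ depends only on the history up to and including round $t$) are exactly what make this argument go through.

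First I would factor the joint density of a trajectory $\tau = (\theta_*, I_{1:n}, Y)$ induced by running $\pi_w$ on a context sequence $x_{1:n}$:
\begin{align*}
  p_w(\tau)
  = \cP(\theta_*) \, p(Y \mid \theta_*, x_{1:n}) \, \prod_{s=1}^n \pi_w(I_s \mid H_s)\,.
\end{align*}
Only the last factor depends on $w$, so $\nabla_w \log p_w(\tau) = \sum_{s=1}^n \nabla_w \log \pi_w(I_s \mid H_s)$. Writing $r(n; \pi_w) = \E{\sum_{t=1}^n Y_{I_t,t}}$ as an integral against $p_w(\tau)$ and pulling the gradient inside (justified by standard dominated convergence / score-function regularity, which I would invoke as a mild assumption on the parametrization), the log-derivative trick gives
\begin{align*}
  \nabla_w r(n; \pi_w)
  = \sum_{t=1}^n \sum_{s=1}^n \E{\nabla_w \log \pi_w(I_s \mid H_s)\, Y_{I_t,t}}\,.
\end{align*}

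The remaining step is to discard all terms with $s > t$, yielding the inner sum $\sum_{s=t}^n Y_{I_s,s}$ stated in the lemma. The key identity is the classical score-function property: conditioned on $H_s$, the policy probabilities sum to one, so
\begin{align*}
  \Et[s]{\nabla_w \log \pi_w(I_s \mid H_s)}
  = \sum_{i \in [K]} \nabla_w \pi_w(i \mid H_s)
  = \nabla_w 1
  = 0\,,
\end{align*}
where $\Et[s]{\cdot} = \condE{\cdot}{H_s}$. For $t < s$, the reward $Y_{I_t,t}$ is $H_s$-measurable, so by the tower rule $\E{\nabla_w \log \pi_w(I_s \mid H_s)\, Y_{I_t,t}} = \E{Y_{I_t,t}\, \Et[s]{\nabla_w \log \pi_w(I_s \mid H_s)}} = 0$. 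Swapping the two summation indices and keeping only the $s \le t$ diagonal (equivalently, only $t \ge s$ for each fixed $s$) yields the claimed expression after re-indexing.

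The main obstacle, and the only place where one has to be careful, is the conditioning argument in the last step: one has to argue that $Y_{I_t,t}$ for $t < s$ is indeed determined by $H_s$ (this uses the convention, flagged in \cref{sec:stochastic multi-armed bandits}, that the observed history contains the realized rewards of all \emph{pulled} arms through round $s-1$, combined with the fact that $I_t$ for $t<s$ is also $H_s$-measurable). Once that measurability is in place, the score-function zero-mean property collapses the double sum to the single causal sum and gives the lemma.
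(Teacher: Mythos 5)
Your proof is correct, but it reaches the lemma by a different route than the paper. The paper never forms the full $n \times n$ double sum: it writes $r(n;\pi_w) = \E{\condE{\sum_t Y_{I_t,t}}{Y}}$, notes that for each fixed $t$ the reward $Y_{I_t,t}$ depends on the policy only through $\condprob{I_{1:t}=i_{1:t}}{Y}$, factors that probability by the chain rule into $\prod_{s=1}^{t}\pi_w(i_s\mid H_s)$, and applies the score-function identity to this \emph{prefix} likelihood, so that only the terms $s \le t$ ever appear and the reward-to-go form falls out by swapping the order of summation. You instead differentiate the likelihood of the \emph{entire} trajectory, obtain all $n^2$ cross terms, and then kill the acausal ones ($s > t$) by the tower rule together with the zero-mean property $\condE{\nabla_w\log\pi_w(I_s\mid H_s)}{H_s}=0$, using that $Y_{I_t,t}$ for $t<s$ is $H_s$-measurable (it is literally a component of $H_s$, so this measurability claim is fine; one also needs, as you implicitly use, that the conditional law of $I_s$ given $H_s$ is unchanged by further conditioning on $\theta_*$ and the full reward table $Y$, since the policy's randomization is exogenous given $H_s$). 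Both arguments are sound. The paper's version exploits causality upfront and is slightly leaner; your version makes the lemma an instance of the generic REINFORCE estimator and has the advantage that its final step is \emph{exactly} the mechanism the paper reuses in \cref{lem:baseline gradient} for baseline subtraction, so the two results come out of a single identity. One cosmetic caution: the paper's macro for the conditional expectation given $H_t$ takes no round index argument, so the notation $\mathbb{E}_s[\cdot]$ you want should be written out as $\condE{\cdot}{H_s}$ rather than by abusing that macro.
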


Here the expectation is taken with respect to problem instances, the arms pulled by the policy, and the realized rewards. The proof of \cref{lem:gradient} is provided in \cref{sec:gradient proofs}. The gradient is a sum of $n$ terms, one for each round $t$. The $t$-th term is itself is sum of $n - t + 1$ terms, reflecting the fact that a change in the policy in round $t$, $\nabla_w \log \pi_w(I_t \mid H_t)$, affects $n - t + 1$ future rewards, $\sum_{s = t}^n Y_{I_s, s}$. This implies that $\nabla_w r(n; \pi_w)$ involves an expectation over $O(n^2)$ random quantities. As a consequence, a naive empirical estimate of $\nabla_w r(n; \pi_w)$ is expected to have large variance, an issue that we address next.

\subsection{Baseline Subtraction}
\label{sec:baseline subtraction}

To reduce variance in reward gradients, we apply the idea of \emph{baseline subtraction} \citep{williams92simple,sutton00policy,greensmith04variance,munos06geometric,zhao11analysis,dick15policy,liu18actiondependent}, a standard concept in statistical Monte-Carlo estimation. Let
\begin{align*}
  b_t: [K]^{t - 1} \times \realset^{K n} \times \realset^{d n} \times \Theta \to \realset
\end{align*}
be any function of $t - 1$ previously pulled arms, all realized rewards, all contexts, and the problem instance itself. A \emph{baseline} is any collection of such functions $b = (b_t)_{t = 1}^n$, one per round $t$. As we show below, any baseline can be subtracted from realized rewards in the reward gradient without impacting it.

\begin{restatable}[]{lemma}{baselinegradient}
\label{lem:baseline gradient} For any baseline $b = (b_t)_{t = 1}^n$,
\begin{align*}
  \nabla_w r(n; \pi_w)
  = \sum_{t = 1}^n \E{\nabla_w \log \pi_w(I_t \mid H_t)
  \left(\sum_{s = t}^n Y_{I_s, s} - b_t(I_{1 : t - 1}, Y, x_{1 : n}, \theta_*)\right)}\,.
\end{align*}
\end{restatable}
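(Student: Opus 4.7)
The plan is to reduce \cref{lem:baseline gradient} directly to \cref{lem:gradient} by showing that the baseline contributes zero in expectation, round-by-round. By linearity, it suffices to establish, for every $t \in [n]$ and every admissible $b_t$,
\begin{align*}
  \E{\nabla_w \log \pi_w(I_t \mid H_t) \, b_t(I_{1 : t - 1}, Y, x_{1 : n}, \theta_*)}
  = 0\,,
\end{align*}
because subtracting this zero term inside the expectation in \cref{lem:gradient} yields exactly the claimed identity.

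To prove this, I would condition on the tuple $(\theta_*, x_{1 : n}, Y, I_{1 : t - 1})$ and use the tower property. The key observation is that, by construction, $b_t$ is a function only of these variables, so it is measurable with respect to this $\sigma$-algebra and can be pulled out of the conditional expectation. Moreover, the history $H_t = (I_{1 : t - 1}, x_{1 : t}, Y_{I_1, 1}, \dots, Y_{I_{t - 1}, t - 1})$ is also a function of this tuple, so $\pi_w(\cdot \mid H_t)$ is determined. Finally, by the generative model in \cref{sec:stochastic multi-armed bandits,sec:framework}, the conditional distribution of $I_t$ given this tuple is simply $\pi_w(\cdot \mid H_t)$: the randomness in $I_t$ is independent of the future rewards $Y_{i, s}$ for $s \geq t$ once $H_t$ is fixed, because the policy only reads $H_t$ when sampling $I_t$.

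Once this conditional reduction is done, the remaining step is the standard log-likelihood (\say{score function}) identity
\begin{align*}
  \condE{\nabla_w \log \pi_w(I_t \mid H_t)}{\theta_*, x_{1 : n}, Y, I_{1 : t - 1}}
  = \sum_{i = 1}^K \pi_w(i \mid H_t) \nabla_w \log \pi_w(i \mid H_t)
  = \nabla_w \sum_{i = 1}^K \pi_w(i \mid H_t)
  = \nabla_w 1 = 0\,,
\end{align*}
where the interchange of sum and gradient is justified because the sum is finite (there are only $K$ arms). Multiplying by $b_t$ and taking outer expectation then gives the desired zero, completing the proof.

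I do not anticipate a serious obstacle; the only subtlety worth stating explicitly is the allowed dependence of $b_t$ on the full reward matrix $Y$ (not just on rewards observed through round $t - 1$) and on $\theta_*$, which is why conditioning on $(\theta_*, x_{1 : n}, Y, I_{1 : t - 1})$ rather than on $H_t$ alone is the right move. Once that is clear, the argument is a textbook application of the REINFORCE baseline trick adapted to our bandit filtration.
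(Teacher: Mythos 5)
Your proposal is correct and follows essentially the same route as the paper: both reduce the claim to showing the baseline term has zero expectation per round, condition on everything except $I_t$ so that $b_t$ factors out, and conclude via the score-function identity $\sum_i \pi_w(i \mid H_t)\nabla_w \log \pi_w(i \mid H_t) = \nabla_w 1 = 0$. Your explicit conditioning on $(\theta_*, x_{1:n}, Y, I_{1:t-1})$ is a slightly more careful statement of the same step the paper performs by conditioning on $(I_{1:t-1}, Y)$.
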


The proof of \cref{lem:baseline gradient} in provided in \cref{sec:gradient proofs}, and relies on the observation that $b$ is independent of the future actions taken by $\pi_w$. An empirical approximation to the gradient based on $m$ sampled instances from $\cP$ can then incorporate baseline subtraction as
\begin{align}
  \hat{g}(n; \pi_w)
  = \frac{1}{m} \sum_{j = 1}^m \sum_{t = 1}^n
  \nabla_w \log \pi_w(I_t^j \mid H_t^j)
  \left(\sum_{s = t}^n Y_{I_s^j, s}^j -
  b_t(I_{1 : t - 1}^j, Y^j, x_{1 : n}, \theta_*^j)\right)\,,
  \label{eq:empirical baseline gradient}
\end{align}
where $j$ indexes the $j$-th random experiment in \gradband.

\subsection{Baselines}
\label{sec:baselines}

We now consider several specific baselines that can be used to reduce variance of empirical reward gradient estimates. These baselines are motivated by the structure of our bandit problem. Most bandit policies are designed to have sublinear regret in any problem instance \citep{lattimore19bandit}. This means that when the number of arms is finite, the policies pull optimal arms with increasing frequency. Therefore, one natural baseline in \cref{lem:baseline gradient} would be the sum of the rewards of the optimal arms,
\begin{align}
  b_t^\textsc{opt}(I_{1 : t - 1}, Y, x_{1 : n}, \theta_*)
  = \sum_{s = t}^n Y_{i_{*, s}(\theta_*), s}\,.
  \label{eq:baseline opt}
\end{align}
The optimal arm in round $s$, $i_{*, s}(\theta_*)$, is defined in \eqref{eq:n-round regret}. Note that the baseline $b^\textsc{opt}$ can be used only because of the specific form of our gradient (\cref{lem:baseline gradient}), where $b_t$ is a function of the same realized rewards, contexts, and model parameters as policy $\pi$.

Unfortunately, $b^\textsc{opt}$ may reduce variance poorly when the policy $\pi$ has high regret. In this case, subtracting the sum of the rewards of an \emph{independent run} of $\pi$ may be more effective. We call this baseline \say{self} and define it as
\begin{align*}
  b_t^\textsc{self}(I_{1 : t - 1}, Y, x_{1 : n}, \theta_*)
  = \sum_{s = t}^n Y_{J_s, s}\,,
\end{align*}
where $(J_t)_{t = 1}^n$ are the pulled arms in an independent run of the policy $\pi$. Critically, $b^\textsc{self}$ has the same expected reward as $\pi$, and thus performs well even if $\pi$ has high regret. On the other hand, when $\pi$ has low regret, $b^\textsc{self}$ is by definition comparable to pulling optimal arms, and also reduces variance effectively. Thus, we expect $b^\textsc{self}$ to always outperform $b^\textsc{opt}$.

\subsection{Regret-Minimizing Policies}
\label{sec:regret-minimizing policies}

The baseline $b^\textsc{opt}$ can be also used to prove that the reward gradient is small when the optimized bandit policy has low regret, a fact that we exploit later in the construction of meta-learned bandit policies. Specifically, we prove that $\normw{\nabla_w r(n; \pi_w)}{2} = o(n^2)$ when the regret of $\pi_w$ is sublinear in any problem instance $\theta_*$.

\begin{restatable}[]{lemma}{baselineopt}
\label{lem:baseline opt} Let the reward gradient $\nabla_w r(n; \pi_w)$ be defined as in \cref{lem:baseline gradient} and the baseline be $b^\textsc{opt}$ in \eqref{eq:baseline opt}. Let $\normw{\nabla_w \log \pi_w(I_t \mid H_t)}{2} \leq c$ hold for all $I_t$ and $H_t$. Let the $n$-round regret of $\pi_w$ in any problem instance $\theta_*$ be $R(n, \theta_*; \pi_w) = O(n^\alpha)$. Then
\begin{align*}
  \normw{\nabla_w r(n; \pi_w)}{2}
  \leq c \frac{\Delta_{\max}}{\Delta_{\min}} O(n^{\alpha + 1})\,,
\end{align*}
where
\begin{align*}
  \Delta_{\min}
  & = \min_{\theta_* \in \Theta} \min_{t \in [n]}
  \left[f_{i_{*, t}(\theta_*)}(x_t, \theta_*) -
  \max_{i \neq i_{*, t}(\theta_*)} f_i(x_t, \theta_*)\right]\,, \\
  \Delta_{\max}
  & = \max_{\theta_* \in \Theta} \max_{t \in [n]}
  \left[f_{i_{*, t}(\theta_*)}(x_t, \theta_*) -
  \max_{i \neq i_{*, t}(\theta_*)} f_i(x_t, \theta_*)\right]\,,
\end{align*}
are the minimum and maximum gaps, respectively, over all problem instances and rounds.
\end{restatable}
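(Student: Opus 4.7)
The plan is to start from the representation in \cref{lem:baseline gradient} with $b = b^\textsc{opt}$, convert the realized rewards into expected instantaneous gaps via the tower property, and then invoke a standard regret-to-suboptimal-pulls argument to control the resulting expectation.

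For the first step, I would fix $t \in [n]$ and $s \geq t$ and condition on the $\sigma$-algebra generated by $(H_s, I_s, \theta_*)$. The score $\nabla_w \log \pi_w(I_t \mid H_t)$ is measurable with respect to this $\sigma$-algebra, since $H_t \subseteq H_s$ and $I_t$ is recorded in $H_s$. The i.i.d.\ reward model of \cref{sec:stochastic multi-armed bandits} then gives
\begin{align*}
  \condE{Y_{I_s, s}}{H_s, I_s, \theta_*} = f_{I_s}(x_s, \theta_*), \qquad \condE{Y_{i_{*, s}(\theta_*), s}}{H_s, I_s, \theta_*} = f_{i_{*, s}(\theta_*)}(x_s, \theta_*).
\end{align*}
Setting $\Delta_{i, s}(\theta_*) = f_{i_{*, s}(\theta_*)}(x_s, \theta_*) - f_i(x_s, \theta_*) \geq 0$ and $G_t = \sum_{s = t}^n \Delta_{I_s, s}(\theta_*) \geq 0$, the tower property turns the expression in \cref{lem:baseline gradient} into
\begin{align*}
  \nabla_w r(n; \pi_w) = -\sum_{t = 1}^n \E{\nabla_w \log \pi_w(I_t \mid H_t) G_t}.
\end{align*}

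Next, I would push the norm inside by Jensen's inequality, apply Cauchy--Schwarz pointwise, and use the hypothesis $\normw{\nabla_w \log \pi_w(I_t \mid H_t)}{2} \leq c$ together with $G_t \geq 0$ to obtain
\begin{align*}
  \normw{\nabla_w r(n; \pi_w)}{2} \leq c \sum_{t = 1}^n \E{G_t}.
\end{align*}
Each summand in $G_t$ is at most $\Delta_{\max}$ and vanishes whenever arm $I_s$ is optimal, so $G_t \leq \Delta_{\max} N$, where $N = \sum_{s = 1}^n \I{I_s \neq i_{*, s}(\theta_*)}$ is the total number of suboptimal pulls. Conversely, $R(n, \theta_*; \pi_w) = \condE{G_1}{\theta_*} \geq \Delta_{\min} \condE{N}{\theta_*}$, so $\condE{N}{\theta_*} \leq R(n, \theta_*; \pi_w)/\Delta_{\min}$. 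Marginalizing over $\theta_* \sim \cP$ and using the uniform hypothesis $R(n, \theta_*; \pi_w) = O(n^\alpha)$ yields $\E{G_t} \leq (\Delta_{\max}/\Delta_{\min}) O(n^\alpha)$, and summing the $n$ terms gives the claimed bound $c (\Delta_{\max}/\Delta_{\min}) O(n^{\alpha + 1})$.

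The main obstacle is the conditional-expectation bookkeeping in the first step: one must verify that conditioning on $(H_s, I_s, \theta_*)$ leaves the conditional means of both $Y_{I_s, s}$ and $Y_{i_{*, s}(\theta_*), s}$ intact. This is a direct consequence of the i.i.d.\ reward noise across rounds and arms given $\theta_*$ and $x_{1:n}$, together with the fact that $\pi_w$ depends on past rewards only through $H_s$; but it requires care because $H_s$ generally contains earlier realized rewards of the very arms $I_s$ and $i_{*, s}(\theta_*)$, and one must argue that those past realizations do not influence the round-$s$ noise.
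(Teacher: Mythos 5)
Your proposal is correct and takes essentially the same route as the paper's proof: both replace the realized reward differences $Y_{I_s, s} - Y_{i_{*, s}(\theta_*), s}$ by their conditional-mean gaps via the tower property, pull the norm inside using the uniform bound $c$ on the score, and convert the accumulated gaps into the number of suboptimal pulls, which is controlled by $R(n, \theta_*; \pi_w) / \Delta_{\min}$, yielding the factor $\Delta_{\max} / \Delta_{\min}$ and the extra factor of $n$ from the double sum. The only cosmetic difference is that the paper first swaps the order of the double summation before taking norms, whereas you bound each $\E{G_t}$ directly; the two computations give the identical final bound.
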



\cref{lem:baseline opt} is proved in \cref{sec:gradient proofs}. The fact that the gradient is small when the policy has low regret motivates a particular way of using our meta-learning framework. Specifically, \gradband is initialized with sound bandit policies that are then optimized to maximize the Bayes reward. As an example, consider the \say{soft elimination} policy that we introduce later in \cref{sec:softelim}. This policy has $O(\log n)$ regret in any problem instance (\cref{thm:softelim regret bound}). Therefore, $\normw{\nabla_w r(n; \pi_w)}{2} = O(n^{\alpha + 1})$ for $\alpha = \log \log n /  \log n$ and $w = \sqrt{8}$.

\section{Differentiable Multi-Armed Bandit Algorithms}
\label{sec:mab algorithms}

Now we consider specific non-contextual policy classes that \gradband can optimize. The contextual case is examined in \cref{sec:contextual algorithms}. First we offer some general observations, then examine three policy classes: \expthree, \softelim, and recurrent neural network policies.

\subsection{General Considerations}
\label{sec:mab intro}

\cref{lem:baseline opt} shows that the reward gradient is small when the bandit policy has sublinear regret. This suggests that policy classes with provably sublinear regret are especially suitable for optimization by \gradband. In this way, we can exploit existing bandit algorithms in the literature, but improve their performance by optimizing their Bayes reward with respect to problem distribution $\cP$. We do require that these policy classes be parameterized, and hence for some choice of their parameters, the policy has sublinear regret. In such a case, \gradband is initialized with policy $\pi_{w_0}$ where $w_0$ is such a parameter choice.

A necessary condition for computing the reward gradient in \cref{lem:gradient} is that the gradient of the probability of pulling arm $i$ conditioned on history, $\nabla_w \log \pi_w(i \mid H_t)$, exists. Unfortunately, few classical bandit policies satisfy this assumption. For example, UCB algorithms \citep{auer02finitetime,dani08stochastic,abbasi-yadkori11improved} do not since $\pi_w(i \mid H_t) \in \set{0, 1}$ is a step function. Hence these cannot be used directly. In contrast, Thompson sampling \citep{thompson33likelihood,agrawal12analysis,agrawal13thompson} is randomized. However, $\pi_w(i \mid H_t)$ is induced by a strict maximization over randomized posterior means. Therefore, a unique gradient may not exist. Even when it does, $\pi_w(i \mid H_t)$ does not have a closed form and therefore may be hard to differentiate computationally efficiently. This presents challenges to the use of Thompson sampling in \gradband. We return to this issue in \cref{sec:ts}, where we derive $\nabla_w r(n; \pi_w)$ in an alternate manner.

In the rest of this section, we introduce three \emph{softmax} designs that can be differentiated analytically and derive a gradient for each. We note that other softmax policies, such as Boltzmann exploration \citep{sutton98reinforcement,cesabianchi17boltzmann}, can be differentiated similarly.

To simplify notation, we let $\pi_{i, t} = \pi_w(i \mid H_t)$. We also assume that all realized rewards are bounded on $[0, 1]$, and thus are $\sigma^2$-sub-Gaussian for $\sigma = 1 / 2$. Finally, we denote the mean reward of arm $i$ by $\mu_i$, and relate it to model parameters $\theta_*$ as $\mu_i = (\theta_*)_i$.

\subsection{Algorithm \expthree}
\label{sec:exp3}

\expthree \citep{auer95gambling} is a well-known algorithm for non-stochastic bandits. The algorithm pulls arm $i$ in round $t$ with probability
\begin{align}
  \pi_{i, t}
  = (1 - w) \frac{\exp[\eta S_{i, t}]}{\sum_{j = 1}^K \exp[\eta S_{j, t}]} +
  \frac{w}{K}\,,
  \label{eq:exp3 distribution}
\end{align}
where $S_{i, t} = \sum_{\ell = 1}^{t - 1} \I{I_\ell = i} \pi_{i, \ell}^{-1} Y_{i, \ell}$ is the inverse propensity score estimate of the cumulative reward of arm $i$ in the first $t - 1$ rounds, $\eta$ is a learning rate, and $w$ is a parameter that guarantees sufficient exploration. When rewards are in $[0, 1]$, $\expthree$ attains $O(\sqrt{K n})$ regret for $\eta = w / K$ and $w = \min \set{1, \sqrt{K \log K} / \sqrt{(e - 1) n}}$. In this work, we \emph{optimize} $w$ using \gradband. When $\eta$ is set as above, we obtain the following reward gradient.

\begin{restatable}[]{lemma}{expthreederivative}
\label{lem:exp3 derivative} Define $\pi_{i, t}$ as in \eqref{eq:exp3 distribution}. Let $\eta = w / K$, $V_{i, t} = \exp[w S_{i, t} / K]$, and $V_t = \sum_{j = 1}^K V_{j, t}$. Then
\begin{align*}
  \nabla_w \log \pi_{i, t}
  = \frac{1}{\pi_{i, t}} \left[\frac{V_{i, t}}{V_t}
  \left[(1 - w) \left[\frac{S_{i, t}}{K} -
  \sum_{j = 1}^K \frac{V_{j, t}}{V_t} \frac{S_{j, t}}{K}\right] - 1\right] +
  \frac{1}{K}\right]\,.
\end{align*}
\end{restatable}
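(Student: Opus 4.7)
The plan is a direct calculation exploiting the softmax structure of \expthree after substituting $\eta = w / K$. First I would rewrite the policy compactly: with $V_{i,t} = \exp[w S_{i,t} / K]$ and $V_t = \sum_{j=1}^K V_{j,t}$, equation \eqref{eq:exp3 distribution} becomes
\begin{align*}
  \pi_{i,t} = (1-w) \frac{V_{i,t}}{V_t} + \frac{w}{K}\,.
\end{align*}
Since $\nabla_w \log \pi_{i,t} = \pi_{i,t}^{-1} \nabla_w \pi_{i,t}$, it suffices to differentiate the right-hand side. As in standard policy-gradient derivations, I treat $S_{i,t}$ as a function of the history $H_t$ only (so it is held fixed when differentiating $\pi_w(\cdot \mid H_t)$ with respect to $w$); this is the same convention used throughout \cref{sec:reward gradient derivation}.

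Next I would compute the two elementary derivatives. Since $V_{i,t} = \exp[w S_{i,t}/K]$, we have $\nabla_w V_{i,t} = (S_{i,t}/K) V_{i,t}$, and hence $\nabla_w V_t = \sum_{j=1}^K (S_{j,t}/K) V_{j,t}$. Applying the quotient rule gives
\begin{align*}
  \nabla_w \frac{V_{i,t}}{V_t}
  = \frac{V_{i,t}}{V_t}\left[\frac{S_{i,t}}{K} - \sum_{j=1}^K \frac{V_{j,t}}{V_t}\frac{S_{j,t}}{K}\right]\,,
\end{align*}
which is the familiar softmax-derivative identity (the bracketed term is a centered score). Then, differentiating $\pi_{i,t}$ by the product rule in $w$ (noting the $-V_{i,t}/V_t$ term from differentiating the $(1-w)$ factor and the $1/K$ term from differentiating $w/K$),
\begin{align*}
  \nabla_w \pi_{i,t}
  = \frac{V_{i,t}}{V_t}\left[(1-w)\left[\frac{S_{i,t}}{K} - \sum_{j=1}^K \frac{V_{j,t}}{V_t}\frac{S_{j,t}}{K}\right] - 1\right] + \frac{1}{K}\,.
\end{align*}
Dividing through by $\pi_{i,t}$ yields the claimed expression.

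The derivation is essentially routine; there is no single hard step, but the main thing to be careful about is the bookkeeping of the two sources of $w$-dependence in $\pi_{i,t}$—the exploration mixing coefficient $(1-w)$ and the inverse-temperature-like factor $w/K$ inside every $V_{i,t}$—so that the $-V_{i,t}/V_t$ term and the centered softmax-score term are combined correctly inside the brackets. A secondary point worth flagging briefly in the proof is the convention above regarding $S_{i,t}$: we do not backpropagate through its implicit dependence on past $w$ values via the propensity weights $\pi_{i,\ell}^{-1}$, which is consistent with computing $\nabla_w \log \pi_w(I_t \mid H_t)$ with $H_t$ treated as given, exactly as in \cref{lem:gradient}.
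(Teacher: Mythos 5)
Your proposal is correct and follows essentially the same route as the paper's proof: treat $S_{i,t}$ as constant given the history, differentiate the mixture $(1-w)V_{i,t}/V_t + w/K$ by the product rule, and use the standard softmax-derivative identity $\nabla_w (V_{i,t}/V_t) = (V_{i,t}/V_t)\bigl[S_{i,t}/K - \sum_{j=1}^K (V_{j,t}/V_t)(S_{j,t}/K)\bigr]$. The bookkeeping of the two sources of $w$-dependence that you flag is exactly how the paper organizes the calculation.
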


This lemma is proved in \cref{sec:gradient proofs}. Although \expthree is differentiable, it tends to be relatively conservative in stochastic problems, even after we optimize $w$ (\cref{sec:experiments}).

\subsection{Algorithm \softelim}
\label{sec:softelim}

To address the overly conservative nature of \expthree, we propose a new parameterized class of bandit policies called \softelim. The algorithm works as follows. Let $\hat{\mu}_{i, t}$ be the empirical mean reward of arm $i$ after $t$ rounds and $T_{i, t}$ be the respective number of pulls. \softelim begins by pulling each arm once. Subsequently, in round $t > K$, arm $i$ is pulled with probability
\begin{align*}
  \pi_{i, t}
  = \frac{\exp[- S_{i, t} / w^2]}{\sum_{j = 1}^K \exp[- S_{j, t} / w^2]}\,,
\end{align*}
where
\begin{align}
  S_{i, t}
  = 2 \, (\max_{j \in [K]} \hat{\mu}_{j, t - 1} - \hat{\mu}_{i, t - 1})^2 T_{i, t - 1}
  \label{eq:softelim score}
\end{align}
is the \emph{score} of arm $i$ in round $t$ and $w > 0$ is a tunable \emph{exploration parameter}. Since $S_{i, t} \geq 0$ and $\pi_{i, t} \propto \exp[- S_{i, t} / w^2]$, higher values of $w$ lead to more exploration. Also note that $\exp[- S_{i, t} / w^2] \in [0, 1]$. As a result, \softelim can be viewed as an elimination algorithm \citep{auer10ucb} where $\exp[- S_{i, t} / w^2]$ measures the degree to which arm $i$ has been eliminated. Because an arm is never completely eliminated, the elimination is \say{soft,} and hence the name of the algorithm.

Since $\log \pi_{i, t} = - w^{-2} S_{i, t} - \log \sum_{j = 1}^K \exp[- S_{j, t} / w^2]$, we have
\begin{align*}
  \nabla_w \log \pi_{i, t}
  = 2 w^{-3} \left(S_{i, t} - \sum_{j = 1}^K S_{j, t}
  \frac{\exp[- S_{j, t} / w^2]}{\sum_{j = 1}^K \exp[- S_{j, t} / w^2]}\right)\,.
\end{align*}
Therefore, \softelim can be easily differentiated and optimized by \gradband.

\softelim is designed such that an arm is unlikely to be pulled if it has been pulled \say{often} and its empirical mean is low relative to the arm with the highest empirical mean. This follows from the definition of score $S_{i, t}$. Moreover, when a suboptimal arm has been pulled \say{often} and has the highest empirical mean, the optimal arm is pulled proportionally to how much its empirical mean deviates from the actual mean. This indicates that our algorithm has suitable optimism to admit a regret analysis, similarly to follow-the-perturbed-leader bandit algorithms \citep{kveton19perturbed,kveton19perturbed2}. We rely on this property in our analysis, the result of which is presented below.

\begin{restatable}[]{theorem}{softelimregretbound}
\label{thm:softelim regret bound} Consider a $K$-armed bandit problem where arm $1$ is optimal, that is $\mu_1 > \max_{i > 1} \mu_i$. Let $\Delta_i = \mu_1 - \mu_i$ and $w = \sqrt{8}$. Then
\begin{align*}
  R(n, \theta_*; \pi_w)
  \leq \sum_{i = 2}^K (2 e + 1) \left(\frac{16}{\Delta_i} \log n +
  \Delta_i\right) + 5 \Delta_i\,.
\end{align*}
\end{restatable}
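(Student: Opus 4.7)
The plan is to apply the standard pull-count decomposition $R(n, \theta_*; \pi_w) = \sum_{i = 2}^K \Delta_i \E{T_{i,n}}$ with $T_{i,n} = \sum_{t=1}^n \I{I_t = i}$, so that it suffices to show $\E{T_{i,n}} \leq (2e+1)(16 \log n / \Delta_i^2 + 1) + 5$ for each suboptimal arm $i$. Fix $i \geq 2$, write $\Delta = \Delta_i$, and set $\tau = \ceils{16 \log n / \Delta^2}$. Since at most $\tau + 1$ rounds can satisfy $I_t = i$ together with $T_{i,t-1} \leq \tau$, the usual split yields $\E{T_{i,n}} \leq 1 + \tau + \sum_{t = K+1}^n \prob{I_t = i, T_{i,t-1} > \tau}$, and the task reduces to controlling this last sum.

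Two structural facts of the softmax design drive the remaining analysis. First, because the empirically best arm has $S_{j,t} = 0$, the normalizer satisfies $\sum_j \exp[-S_{j,t}/w^2] \geq 1$, and hence $\pi_{i,t} \leq \exp[-S_{i,t}/w^2]$. Second, with $w^2 = 8$, an empirical gap $\max_j \hat{\mu}_{j,t-1} - \hat{\mu}_{i,t-1} \geq \Delta/2$ combined with $T_{i,t-1} > \tau$ yields $S_{i,t}/w^2 \geq (\Delta/2)^2 \tau / 4 \geq \log n$. Define the concentration event $G_t = \{\hat{\mu}_{1,t-1} \geq \mu_1 - \Delta/4\} \cap \{\hat{\mu}_{i,t-1} \leq \mu_i + \Delta/4\}$. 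On $G_t$ we have $\max_j \hat{\mu}_{j,t-1} \geq \hat{\mu}_{1,t-1} \geq \hat{\mu}_{i,t-1} + \Delta/2$, so $\pi_{i,t} \leq 1/n$ on $G_t \cap \{T_{i,t-1} > \tau\}$, contributing $O(1)$ after summation over $t \leq n$.

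On $\bar G_t$ I would split further into (i) $\{\hat{\mu}_{i,t-1} > \mu_i + \Delta/4\}$ and (ii) $\{\hat{\mu}_{1,t-1} < \mu_1 - \Delta/4\}$. Piece (i) is routine: partitioning on $T_{i,t-1} = s > \tau$ and applying Hoeffding gives $\sum_{s > \tau} \exp[-s \Delta^2/8] = O(1)$, since $\tau \Delta^2/8 \geq 2\log n$ makes the geometric tail negligible. Piece (ii) is the main difficulty and drives the $(2e+1)$ constant. Here I would follow the perturbed-history-style argument of the related FPL work, exploiting the softmax ratio identity $\pi_{i,t}/\pi_{1,t} = \exp[(S_{1,t} - S_{i,t})/w^2]$ to charge each pull of arm $i$ on this event to a (would-be) pull of arm $1$; an exponential-moment bound on the inverse of arm $1$'s selection probability then yields a $(2e+1)\tau + O(1)$ contribution.

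The main obstacle is precisely piece (ii): when $\hat{\mu}_{1,t-1}$ falls below $\mu_1 - \Delta/4$, the ratio $\exp[(S_{1,t} - S_{i,t})/w^2]$ can be large, since $S_{1,t}$ is no longer zero and may inflate quadratically in the empirical gap. Controlling its exponential moment uniformly over $t$ requires conditioning on the martingale structure of the score process $S_{1,t}$ and aggregating the under-exploration probability of arm $1$ across all possible sample counts $s \leq \tau$ via Hoeffding-type tail bounds. Once the four pieces are combined, multiplying by $\Delta_i$ and summing over $i$ produces the stated bound, with the $\Delta_i$ and $5\Delta_i$ terms absorbing the various $O(1)$ residuals from the initial split, the good-event term, and piece (i).
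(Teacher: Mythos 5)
Your skeleton is essentially the paper's: the same threshold $m=\ceils{16\Delta_i^{-2}\log n}$, the same observation that the softmax normalizer is at least $1$ so $\pi_{i,t}\leq\exp[-S_{i,t}/w^2]$, and the same ratio trick $\pi_{i,t}=\exp[(S_{1,t}-S_{i,t})/w^2]\,\pi_{1,t}$ with an exponential-moment bound yielding the factor $2e$. But the step you yourself flag as the main difficulty --- your piece (ii) --- is left unresolved, and the decomposition you chose makes it harder than it needs to be. The paper does \emph{not} split the bad event by whether $\hat{\mu}_{1,t-1}$ concentrates; it splits by the \emph{pull count} of arm $1$, i.e.\ into $\set{T_{1,t-1}\leq m}$ and $\set{T_{1,t-1}>m}$. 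This matters because the charging argument converts $\pi_{1,t}$ into $\I{I_t=1}$ in expectation and then reindexes by $s=T_{1,t-1}$, so the number of terms in $\sum_s \E{\exp[2\gamma(\mu_1-\hat{\mu}_{1,s})^2 s]}$ is exactly the cap on $T_{1,t-1}$. With your concentration-based split, $T_{1,t-1}$ is unrestricted on piece (ii) and the sum runs over $s$ up to $n$, giving $2en$ --- vacuous. You mention ``sample counts $s\leq\tau$'' but give no reason large $s$ can be excluded; the reason is that when $T_{1,t-1}>m$ the event $\hat{\mu}_{1,t-1}<\mu_1-\Delta_i/4$ has probability at most $n^{-1}$ after a union bound over counts (this is precisely how the paper disposes of its Term~$3$). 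So the count-based split on arm $1$ is the missing organizing idea, not a cosmetic choice.

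A second gap: the ratio bound $\pi_{i,t}\leq\exp[2\gamma(\mu_1-\hat{\mu}_{1,t-1})^2T_{1,t-1}]\,\pi_{1,t}$ requires $\hat{\mu}_{1,t-1}\leq\max_j\hat{\mu}_{j,t-1}\leq\mu_1$, and for $K>2$ the empirical maximum may be attained by a third arm exceeding $\mu_1$, so this is not implied by your events on arms $1$ and $i$ alone. The paper handles it with an extra case split on whether $\max_j\hat{\mu}_{j,t-1}>\mu_1-\Delta_i/4$: if so, arm $i$'s empirical gap is already at least $\Delta_i/2$ (on its own concentration event) and $\pi_{i,t}\leq n^{-8\gamma}=n^{-1}$; if not, the numerator $\exp[-S_{1,t}/w^2]$ is legitimately bounded by $\exp[2\gamma(\mu_1-\hat{\mu}_{1,t-1})^2T_{1,t-1}]$. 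Finally, the quantity whose exponential moment is controlled is not ``the inverse of arm $1$'s selection probability'' but $\exp[2\gamma(\mu_1-\hat{\mu}_{1,s})^2 s]$, bounded by $2e$ for $\gamma=1/8$ via a layer-cake peeling with Hoeffding; no martingale argument is needed. With these repairs your outline becomes the paper's proof: Term~$1$ gives $m-1$, Term~$2$ gives $2em+O(1)$, Term~$3$ gives $O(1)$, and $(2e+1)m$ plus the constants yields the stated bound.
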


\cref{thm:softelim regret bound} is proved in \cref{sec:softelim analysis}, which also includes a proof sketch. The value of $w = \sqrt{8}$ is obtained by tuning. A similar bound, with worse constants, can be derived for any $w \in (1, \sqrt{8}]$. This can be seen in the proof, which requires only that $\gamma = 1 / w^2 \in [1 / 8, 1)$. Finally, note that our regret bound scales with the gaps $\Delta_i$ and $\log n$ similar to that of \ucb \citep{auer02finitetime}. Thus \softelim is near-optimal.

\subsection{Recurrent Neural Network}
\label{sec:rnn}

Both \expthree and \softelim involve taking softmaxes over arm scores, which are essentially \say{hand-crafted} features that summarize the history of the policy. It is this careful feature selection which allows theoretical analyses. However, from a practical perspective, it is also possible to \emph{learn the features} that summarize this history by a \emph{recurrent neural network (RNN)} \citep{rumelhart86learning,hochreiter97long}.

We devise a novel class of RNN bandit policies that work as follows. We assume an RNN that summarizes the history as a state. In round $t$, the RNN takes its prior state $s_{t - 1}$, the pulled arm $I_t$, and its realized reward $Y_{I_t, t}$ as inputs. Then it updates the state into a new state $s_t$, and outputs the probability $\pi_{i, t + 1}$ of pulling any arm $i$ in round $t + 1$,
\begin{align*}
  s_t
  = \textsc{RNN}_\Phi(s_{t - 1}, (I_t, Y_{I_t, t}))\,, \quad
  \pi_{i, t + 1}
  = \frac{\exp[v_i\T s_t]}{\sum_{j = 1}^K \exp[v_j\T s_t]}\,.
\end{align*}
The update is parameterized by $w = (\Phi, \set{v_i}_{i = 1}^K)$, where $\Phi$ are the RNN parameters and $v_i$ are per-arm parameters. It is these parameters that we optimize using \gradband. The key insight behind this policy class is that the RNN can learn to summarize the history of the policy using its internal state $s_t$, that adapts to a specific problem structure uncovered in the prior distribution $\cP$. Indeed, such policies demonstrate the generality and flexibility of \gradband.

Details of our particular RNN implementation can be found in \cref{sec:rnn implementation}. We use an LSTM \citep{hochreiter97long} with a $d$-dimensional latent state as our RNN architecture. We assume that the rewards are Bernoulli and the initial state is $s_0 = \mathbf{0}$.

\section{Differentiable Contextual Bandit Algorithms}
\label{sec:contextual algorithms}

In this section, we generalize the ideas of \cref{sec:mab algorithms} to contextual bandits. We pose the problem of learning a contextual bandit algorithm as learning a \emph{projection of contexts} into a relevant subspace where we employ a linear bandit algorithm. This design is motivated by three observations. First, uncertainty in linear models is relatively well understood \citep{dani08stochastic,abbasi-yadkori11improved}. Second, state-of-the-art approaches in meta-learning of linear models project features into a relevant subspace to speed up learning \citep{bullins19generalize,tripuraneni20provable}. Finally, our early experiments with recurrent neural networks (\cref{sec:rnn experiment}) revealed that it is difficult to learn contextual bandit policies without additional structural assumptions, such as in this section.

As in \cref{sec:mab algorithms}, we first offer some general observations, then examine several policy classes that \gradband can optimize. In \cref{sec:cosoftelim}, we generalize \softelim (\cref{sec:softelim}) to contextual bandits. \cref{sec:ts} is devoted to Thompson sampling, which is arguably the most practical bandit algorithm. Finally, we differentiate the $\eps$-greedy policy in \cref{sec:e-greedy policy}. As in \cref{sec:mab algorithms}, to simplify notation, we let $\pi_{i, t} = \pi_w(i \mid H_t)$.

\subsection{General Considerations}
\label{sec:contextual intro}

All of our contextual bandit policies have the following structure. They rely on a projection matrix $W \in \realset^{d \times d}$ to project the context in round $t$, $x_t$, to a relevant subspace via $W x_t$. \gradband is used to \emph{optimize} $W$ to maximize the Bayes reward of \emph{learning to act in the induced subspace}. To stress that the optimized parameters form a matrix, we write $W$ instead of $w$. Learning of $W$ can help in many scenarios. For instance, if the $i$-th feature in the context vector is irrelevant for determining reward, we might expect \gradband to learn to zero out the $i$-th column of $W$. Once projected, we assume that the mean arm reward is linear in $W x_t$. Thus, in this induced subspace, the \emph{maximum likelihood estimate (MLE)} of the model parameters of arm $i$ after $t$ observations is
\begin{align}
  \hat{\theta}_{i, t}
  = G_{i, t}^{-1} \sum_{\ell = 1}^t \I{I_\ell = i} W x_\ell Y_{i, \ell}\,, \quad
  G_{i, t}
  = \sum_{\ell = 1}^t \I{I_\ell = i} W x_\ell x_\ell\T W\T + \lambda I_d\,,
  \label{eq:mle}
\end{align}
where $G_{i, t}$ is a \emph{sample covariance matrix} and $\lambda > 0$ is a regularization parameter. In round $t$, the learning agent acts based on its past $t - 1$ observations. Thus the mean reward of arm $i$ in context $x$ is $(W x)\T \hat{\theta}_{i, t - 1}$ and the variance of this estimate is $\normw{W x}{G_{i, t - 1}^{-1}}^2 = (W x)\T G_{i, t - 1}^{-1} W x$.

Intuitively, this design learns to project context into a subspace such that the transformed problem is a lower-dimensional linear bandit. In the remainder of this section, we propose three differentiable contextual bandit policies and derive their reward gradients. The gradients can be directly used in the empirical reward gradient in \eqref{eq:empirical baseline gradient}.

\subsection{Contextual Soft Elimination}
\label{sec:cosoftelim}

We first propose \cosoftelim, a contextual softmax or \say{soft elimination} policy, which generalizes the \softelim policy in \cref{sec:softelim} to contextual bandits. \cosoftelim pulls arm $i$ in round $t$ with probability
\begin{align}
  \pi_{i, t}
  = \frac{\exp[- S_{i, t}]}{\sum_{j = 1}^K \exp[- S_{j, t}]}\,,
  \label{eq:softmax}
\end{align}
where $S_{i, t} \geq 0$ is the \emph{score} of arm $i$ in round $t$, which depends on history $H_t$ that includes context $x_t$, and projection matrix $W$. Much like \softelim, because $\exp[- S_{i, t}] \in [0, 1]$, \cosoftelim can be viewed as a \say{soft} elimination algorithm, where $S_{i, t}$ reflects the degree to which arm $i$ has been eliminated.

The score of arm $i$ in round $t$ is defined as
\begin{align}
  S_{i, t}
  = \frac{\gamma (\hat{\mu}_{\max, t} - \hat{\mu}_{i, t})^2}
  {\normw{W x_t}{G_{i, t - 1}^{-1}}^2}\,,
  \label{eq:cosoftelim score}
\end{align}
where $\hat{\mu}_{i, t} = (W x_t)\T \hat{\theta}_{i, t - 1}$ is the estimated reward of arm $i$, $I_{\max, t} = \argmax_{i \in [K]} \hat{\mu}_{i, t}$ is the arm with the highest reward, $\hat{\mu}_{\max, t}$ is its reward, and $\gamma > 0$. Thus $S_{i, t}$ in \eqref{eq:cosoftelim score} is a natural generalization of \eqref{eq:softelim score}. Specifically, $S_{i, t}$ in \eqref{eq:softelim score} is a ratio of two quantities, the squared empirical suboptimality gap of arm $i$ and the variance of the mean reward estimate of arm $i$, $1 / T_{i, t - 1}$. The score $S_{i, t}$ in \eqref{eq:cosoftelim score} is also function of these quantities, but specialized to linear models. It ensures that arm $i$ is unlikely to be pulled if its empirical gap is large relative to the uncertainty in the direction of that arm. This exploration scheme has sublinear regret, as we prove below, and performs well in practice.

The conditional probability of pulling arm $i$ given history $H_t$, $\pi_{i, t}$, can be differentiated with respect to the projection matrix $W$ as
\begin{align*}
  \nabla_W \log \pi_{i, t}
  = - \nabla_W S_{i, t} - \nabla_W \log \sum_{j = 1}^K \exp[- S_{j, t}]
  = \frac{\sum_{j = 1}^K \exp[- S_{j, t}] \nabla_W S_{j, t}}
  {\sum_{j = 1}^K \exp[- S_{j, t}]} - \nabla_W S_{i, t}\,.
\end{align*}
In our experiments, we compute $\nabla_W S_{i, t}$ using automatic differentiation in TensorFlow. Note that $S_{i, t}$ depends only on history $H_t$ and projection matrix $W$. We experiment with $\gamma = 1 / \sigma^2$, where $\sigma$ is the sub-Gaussian noise parameter of rewards. A theory-justified $\gamma = \tilde{O}(1 / (\sigma^2 K d))$ is derived in our analysis below.

Now we turn to bounding the regret of \cosoftelim, under the assumption of a linear model. Specifically, we assume, for any arm $i$ and context $x$, that $f_i(x, \theta_*) = x\T \theta_{i, *}$ holds for some fixed unknown $\theta_{i, *} \in \realset^d$. The joint parameter vector is $\theta_* = \theta_{1, *} \oplus \dots \oplus \theta_{K, *}$. Our bound holds for any $\theta_*$ and contexts $x_{1 : n}$. To simplify exposition, we set $W = I_d$, which is equivalent to removing $W$ from \eqref{eq:mle}. Let $L = \max_{t \in [n]} \normw{x_t}{2}$ be the maximum length of a feature vector, $L_* = \normw{\theta_*}{2}$ be the parameter vector length, and $\Delta_{\max}$ be the maximum gap. Then we have the following result.

\begin{restatable}[]{theorem}{cosoftelimregretbound}
\label{thm:cosoftelim regret bound} Consider a contextual $K$-armed bandit where $f_i(x, \theta_*) = x\T \theta_{i, *}$ holds for some fixed unknown $\theta_{i, *} \in \realset^d$. Then for any $\lambda \geq L^2$ in \eqref{eq:mle} and $\gamma = c_1^{-2}$ in \eqref{eq:cosoftelim score}, we have
\begin{align*}
  R(n, \theta_*; \pi_W)
  \leq (12 K e + 3) c_1 \sqrt{c_2 n \log n} + (K + 1) \Delta_{\max}\,,
\end{align*}
where $c_1 = \tilde{O}(\sqrt{K d})$ (\cref{lem:concentration} in \cref{sec:cosoftelim analysis}) and $c_2 = \tilde{O}(K d)$ (\cref{lem:sum of squared confidence widths} in \cref{sec:cosoftelim analysis}).
\end{restatable}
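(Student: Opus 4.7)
The plan is to mirror the analysis of \softelim (\cref{thm:softelim regret bound}) but replace the scalar count $T_{i,t-1}$ and pointwise gaps with the self-normalized geometry of a linear model. Throughout, fix $W = I_d$, write $\hat\mu_{i,t}(x) = x^\top \hat\theta_{i,t-1}$, $\sigma_{i,t}(x) = \|x\|_{G_{i,t-1}^{-1}}$, and $\Delta_i(t) = f_{i_{*,t}(\theta_*)}(x_t,\theta_*) - f_i(x_t,\theta_*)$, so the score becomes $S_{i,t} = \gamma (\hat\mu_{\max,t}-\hat\mu_{i,t})^2/\sigma_{i,t}^2(x_t)$. The first $K$ rounds are treated separately (one pull per arm, contributing $K\Delta_{\max}$).

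\textbf{Step 1 (concentration).} I would first invoke \cref{lem:concentration} to define a high-probability event $\mathcal{E}$ on which $|\hat\mu_{i,t}(x_t) - f_i(x_t,\theta_*)| \leq c_1 \sigma_{i,t}(x_t)$ simultaneously for all $i \in [K]$ and $t \in [n]$, with $c_1 = \tilde{O}(\sqrt{Kd})$. Choosing the confidence parameter so that $\prob{\bar{\mathcal{E}}} \le 1/n$ makes the off-event contribution at most $\Delta_{\max}$, which fuses with the initialization to give the $(K+1)\Delta_{\max}$ term.

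\textbf{Step 2 (per-round regret on $\mathcal{E}$).} For a round $t$ with $i_* = i_{*,t}(\theta_*)$ and any suboptimal $i$, concentration gives
\begin{equation*}
\hat\mu_{\max,t} - \hat\mu_{i,t} \;\geq\; \hat\mu_{i_*,t} - \hat\mu_{i,t} \;\geq\; \Delta_i(t) - c_1\bigl(\sigma_{i_*,t}(x_t) + \sigma_{i,t}(x_t)\bigr).
\end{equation*}
Either the empirical gap already exceeds $\Delta_i(t)/2$, and then with $\gamma = c_1^{-2}$ the exponent $-S_{i,t} \leq -\Delta_i(t)^2/(4 c_1^2 \sigma_{i,t}^2(x_t))$ drives $\pi_{i,t}$ below the exploration floor $c_1 \sigma_{i,t}(x_t)/\Delta_i(t)$ (standard $x e^{-x^2}$ argument); or $c_1(\sigma_{i_*,t}(x_t)+\sigma_{i,t}(x_t)) \geq \Delta_i(t)/2$, in which case the gap itself can be charged directly to the sum of confidence widths. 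In both cases $\pi_{i,t}\Delta_i(t) \lesssim c_1 \sigma_{i,t}(x_t)$.

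\textbf{Step 3 (optimism for the empirical winner).} The technical obstacle is the case $I_{\max,t}\ne i_*$: since $\pi_{i,t}$ uses the empirical maximum rather than the true optimum, I need to show $\pi_{i_*,t}$ is not over-eliminated. Copying the optimism step of the \softelim analysis, if some suboptimal $j$ has the largest empirical mean, concentration forces $\sigma_{j,t}(x_t) + \sigma_{i_*,t}(x_t) \geq \Delta_j(t)/c_1$, and plugging this into $S_{i_*,t} \leq c_1^{-2}\bigl(c_1(\sigma_{j,t}(x_t)+\sigma_{i_*,t}(x_t))\bigr)^2/\sigma_{i_*,t}^2(x_t)$ yields a bound of the form $S_{i_*,t} \leq S_{j,t} + O(1)$. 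Consequently $\pi_{i_*,t} \geq e^{-O(1)}\pi_{j,t}$, so any probability mass placed on an empirical winner $j \neq i_*$ can be paid for by the matching mass on $i_*$, which itself gets pulled and contributes to the elliptical potential. Combining with Step~2 yields the conditional bound
\begin{equation*}
\condE{\Delta_{I_t}(t)}{H_t} \;\leq\; (12 K e + 3)\, c_1 \sum_{i=1}^K \pi_{i,t}\, \sigma_{i,t}(x_t) \qquad \text{on } \mathcal{E},
\end{equation*}
the constant absorbing the union over $K$ candidate empirical winners and the $e^{O(1)}$ optimism loss.

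\textbf{Step 4 (elliptical potential).} Summing over $t$, applying Cauchy--Schwarz, and taking expectations,
\begin{equation*}
\sum_{t=1}^n \sum_{i=1}^K \pi_{i,t}\, \sigma_{i,t}(x_t) \;\leq\; \sqrt{n \sum_{t=1}^n \sum_{i=1}^K \pi_{i,t}\, \sigma_{i,t}^2(x_t)},
\end{equation*}
and the inner sum telescopes in expectation to $\E\bigl[\sum_{t=1}^n \sigma_{I_t,t}^2(x_t)\bigr] \leq c_2 \log n$ with $c_2 = \tilde{O}(Kd)$, via \cref{lem:sum of squared confidence widths} applied per arm. Combining with Steps~1--3 gives the claimed $(12Ke+3) c_1\sqrt{c_2 n \log n} + (K+1)\Delta_{\max}$.

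The main obstacle is Step~3: the choice $\gamma = c_1^{-2}$ must be tight enough that softly-eliminated suboptimal arms contribute negligibly, yet loose enough that the optimal arm retains comparable softmax weight when transiently dominated by a suboptimal empirical winner. Getting these two inequalities to close simultaneously — and tracking the constants they impose on the per-round bound — is the real work, and it is exactly what pins down $\gamma$ and produces the explicit $Ke$ factor.
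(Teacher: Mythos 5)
Your overall architecture matches the paper's: a concentration event (\cref{lem:concentration}), a per-round bound of the form $\Et{\Delta_{I_t, t}} \lesssim K e\, c_1 \sqrt{\log n}\, \Et{\normw{x_t}{G_{I_t, t - 1}^{-1}}}$, then Cauchy--Schwarz and the elliptical potential (\cref{lem:sum of squared confidence widths}). But the optimism inequality you lean on in Step~3 is false as stated, and it is the crux of the whole argument. You claim that whenever a suboptimal $j$ is the empirical winner, $S_{i_*, t} \le S_{j, t} + O(1)$ and hence $\pi_{i_*, t} \ge e^{-O(1)} \pi_{j, t}$. Your own chain only gives $S_{i_*, t} \le \bigl(\normw{x_t}{G_{j, t-1}^{-1}} + \normw{x_t}{G_{i_*, t-1}^{-1}}\bigr)^2 / \normw{x_t}{G_{i_*, t-1}^{-1}}^2$, which is unbounded when the winner $j$ is very uncertain while the optimal arm is well estimated. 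Concretely: if $\normw{x_t}{G_{i_*, t-1}^{-1}}$ is tiny with $\hat{\mu}_{i_*, t} \approx \mu_{*, t}$, and a rarely-pulled suboptimal $j$ has $\hat{\mu}_{j, t}$ near the top of its wide confidence interval, then $S_{i_*, t} \approx \gamma (\hat{\mu}_{j, t} - \mu_{*, t})^2 / \normw{x_t}{G_{i_*, t-1}^{-1}}^2 \gg 1$, so $\pi_{i_*, t}$ is exponentially suppressed while $\pi_{j, t} \ge 1/K$; charging the round's regret to the mass on $i_*$ then fails catastrophically. The paper's fix (proof of \cref{lem:per-round regret}) is a dichotomy on the empirical winner $i_{\max}$: if $i_{\max}$ is \emph{undersampled} (i.e., $c_1 \normw{x_t}{G_{i_{\max}, t-1}^{-1}} \ge \Delta_{i_{\max}, t} / (3\sqrt{\log(1/\delta)})$), all arms' regret is charged to $i_{\max}$ itself via \cref{lem:gap upper bound} with $j = i_{\max}$, using only $\pi_{i_{\max}, t} \ge 1/K$ --- this is exactly the scenario above, where the winner is pulled often enough to pay for its own width; only when $i_{\max}$ is \emph{oversampled} does concentration force $\hat{\mu}_{\max, t} \le \mu_{*, t}$, whence $S_{i_*, t} \le \gamma(\mu_{*, t} - \hat{\mu}_{i_*, t})^2 / \normw{x_t}{G_{i_*, t-1}^{-1}}^2 \le \gamma c_1^2 = 1$ and $\pi_{i_*, t} \ge 1/(Ke)$, so that charging to $i_*$ is legitimate. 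Without this case split the argument does not close, and this split is precisely what produces the $Ke$ factor you were trying to pin down.

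A secondary bookkeeping point: the $\sqrt{\log n}$ in the final bound comes from $\sqrt{\log(1/\delta)}$ with $\delta = 1/n$ inside the per-round lemma (the undersampled threshold and the $\Delta_{\max}\delta$ slack, whose sum over rounds also supplies the $K\Delta_{\max}$ term --- there is no separate initialization phase in \cosoftelim), not from the elliptical potential: \cref{lem:sum of squared confidence widths} is a deterministic bound by $c_2$ with no additional $\log n$. Your Step~4 inserts the logarithm in the wrong place; the end result happens to coincide, but the accounting should be corrected.
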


\cref{thm:cosoftelim regret bound} is proved in \cref{sec:cosoftelim analysis}, which also includes a proof sketch. Our regret bound is $\tilde{O}(K^2 d \sqrt{n})$ and has optimal dependence on the number of rounds $n$. As the number of features is $K d$, \lints \citep{agrawal13thompson} has $\tilde{O}(K^\frac{3}{2} d^\frac{3}{2} \sqrt{n})$ regret in our setting. This means that our bound is tighter for $K < d$. The difference is due to the fact that Thompson sampling adds noise in $K d$ directions, to each entry of the estimate of $\theta_*$. By contrast, \cosoftelim only samples from a softmax over $K$ arms. \linucb \citep{abbasi-yadkori11improved} has $\tilde{O}(K d \sqrt{n})$ regret in our setting. So our bound is worse by a factor of $K$. This does not mean that \cosoftelim would be worse in practice. Our bound is worse because of a conservative analysis that relies on over-exploration, as in other randomized designs, such as \lints \citep{agrawal13thompson}.

\subsection{Contextual Thompson Sampling}
\label{sec:ts}

\emph{Thompson sampling (TS)} \citep{thompson33likelihood,chapelle11empirical,agrawal12analysis} is a state-of-the-art randomized bandit algorithm and \lints \citep{agrawal13thompson} generalizes it to the contextual setting. We adapt \lints to our linear model in \cref{sec:contextual intro} as follows. In round $t$, the estimated reward of arm $i$ is sampled as
\begin{align*}
  \tilde{\mu}_{i, t}
  \sim \mathcal{N}((W x_t)\T \hat{\theta}_{i, t - 1},
  \sigma^2 \normw{W x_t}{G_{i, t - 1}^{-1}}^2)\,.
\end{align*}
Then we pull the arm with the highest estimated reward $I_t = \argmax_{i \in [K]} \tilde{\mu}_{i, t}$.

As discussed in \cref{sec:mab intro}, it is hard to compute $\nabla_W \log \pi_{I_t, t}$ in Thompson sampling because the log probability is a result of an argmax over posterior-sampled arm means $\tilde{\mu}_{i, t}$, which does not have a closed form. To address this issue, we derive the reward gradient $\nabla_W r(n; \pi_W)$ below with explicitly included posterior samples $\tilde{\mu}_{i, t}$.

\begin{restatable}[]{lemma}{tsgradient}
\label{lem:ts gradient} The gradient of the $n$-round Bayes reward of Thompson sampling is
\begin{align*}
  \nabla_W r(n; \pi_W)
  = \sum_{t = 1}^n \E{\left(\sum_{i = 1}^K
  \nabla_W \log p_i(\tilde{\mu}_{i, t} \mid H_t; W)\right)
  \left(\sum_{s = t}^n Y_{I_s, s} - b_t(I_{1 : t - 1}, Y, x_{1 : n}, \theta_*)\right)}\,,
\end{align*}
where $b = (b_t)_{t = 1}^n$ is any baseline and $p_i(\cdot \mid H_t; W)$ is the posterior distribution of arm $i$ in round $t$. That is, $\tilde{\mu}_{i, t} \sim p_i(\cdot \mid H_t; W)$ and $I_t = \argmax_{i \in [K]} \tilde{\mu}_{i, t}$.
\end{restatable}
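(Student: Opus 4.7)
The plan is to mimic the proofs of \cref{lem:gradient,lem:baseline gradient}, but to work around the fact that $\pi_W(i \mid H_t)$ has no closed form by explicitly exposing the posterior samples $\tilde{\mu}_t = (\tilde{\mu}_{1,t}, \dots, \tilde{\mu}_{K,t})$ as auxiliary random variables. Conditional on $H_t$, the entries of $\tilde{\mu}_t$ are drawn independently across $i$ from $p_i(\cdot \mid H_t; W)$, and the action is the deterministic function $I_t = \argmax_{i \in [K]} \tilde{\mu}_{i, t}$. This reparameterization pushes all $W$-dependence out of the non-differentiable action distribution and into the posterior densities, which are analytically differentiable in $W$.

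First, I would write $r(n; \pi_W) = \sum_{t=1}^n \E{Y_{I_t, t}}$ by additivity and expand each summand as an integral against the joint density of $(\theta_*, \tilde{\mu}_{1:n}, I_{1:n}, Y)$. The $W$-dependent factors in that joint density are precisely $\prod_{s=1}^n \prod_{i=1}^K p_i(\tilde{\mu}_{i,s} \mid H_s; W)$; the prior $\cP(\theta_*)$, the reward kernel (which depends only on $\theta_*$ and $x_{1:n}$), and the deterministic action factors $\I{I_s = \argmax_i \tilde{\mu}_{i,s}}$ are all $W$-free. Applying the log-derivative trick yields
\begin{align*}
  \nabla_W \E{Y_{I_t, t}}
  = \E{Y_{I_t, t} \sum_{s=1}^n \sum_{i=1}^K \nabla_W \log p_i(\tilde{\mu}_{i, s} \mid H_s; W)}.
\end{align*}

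Next, I would invoke causality to drop the $s > t$ contributions. For each such $s$, the centering identity
\begin{align*}
  \condE{\nabla_W \log p_i(\tilde{\mu}_{i,s} \mid H_s; W)}{H_s}
  = \nabla_W \int p_i(\tilde{\mu} \mid H_s; W)\, d\tilde{\mu}
  = 0
\end{align*}
together with $H_s$-measurability of $Y_{I_t,t}$ kills the cross terms via the tower property. Swapping the order of summation over $t$ and $s$ then yields the stated identity with $b_t \equiv 0$. Incorporating a general baseline $b_t$ proceeds exactly as in \cref{lem:baseline gradient}: conditioning on $(\theta_*, x_{1:n}, Y, I_{1:t-1})$ fixes $b_t$, and the same centering identity shows that $\E{b_t \sum_i \nabla_W \log p_i(\tilde{\mu}_{i,t} \mid H_t; W)} = 0$, so subtracting $b_t$ does not change the gradient.

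The main obstacle is being scrupulous about which factors of the trajectory density actually depend on $W$. In particular, $H_s$ depends on $W$ through the earlier actions $I_{1:s-1}$, but once the full joint measure is used this dependence is already absorbed in the integration, and only the explicit parameterization inside each $p_i(\cdot \mid H_s; W)$ contributes a derivative. The argmax indicator is effectively differentiable in a distributional sense because ties $\tilde{\mu}_{i,s} = \tilde{\mu}_{j,s}$ occur on a measure-zero set under the absolutely continuous product posterior, so no boundary term arises. The remainder is routine bookkeeping that mirrors the arguments given for \cref{lem:gradient,lem:baseline gradient}.
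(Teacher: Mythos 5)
Your proposal is correct and follows essentially the same route as the paper: expose the posterior samples $\tilde{\mu}_{1:n}$ as auxiliary variables, observe that the argmax action factors are $W$-free so the score-function identity applies only to the product of posterior densities, and cancel the baseline via the centering identity $\condE{\nabla_W \log p_i(\tilde{\mu}_{i,t} \mid H_t; W)}{H_t} = 0$. The only cosmetic difference is that the paper applies the chain rule to $\condprob{I_{1:t} = i_{1:t}}{Y}$ so the score sum naturally stops at $s = t$, whereas you differentiate the full trajectory density and then discard the $s > t$ terms by causality and the same centering identity.
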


\cref{lem:ts gradient} is proved in \cref{sec:gradient proofs}. Interestingly, the gradient has the same algebraic form as that in \cref{lem:baseline gradient}, except that
\begin{align*}
  \nabla_W \log \pi_W(I_t \mid H_t)
  = \sum_{i = 1}^K \nabla_W \log \mathcal{N}(\tilde{\mu}_{i, t};
  (W x_t)\T \hat{\theta}_{i, t - 1}, \sigma^2 \normw{W x_t}{G_{i, t - 1}^{-1}}^2)\,,
\end{align*}
where $\mathcal{N}(\cdot; \mu, s^2)$ is a normal density with mean $\mu$ and variance $s^2$. To the best of our knowledge, this is the first derivation of the reward gradient for TS. Given the practical importance of TS, this is a major result.

We note that $\nabla_W \log \pi_W(I_t \mid H_t)$ in \cref{lem:baseline gradient} depends only on a single random pulled arm in round $t$, $I_t$; while the corresponding quantity in \cref{lem:ts gradient}, $\sum_{i = 1}^K \nabla_w \log p_i(\tilde{\mu}_{i, t} \mid H_t; W)$, depends on $K$ random posterior-sampled arm means $\tilde{\mu}_{i, t}$. The additional randomness due to $\tilde{\mu}_{i, t}$ is expected to increase the variance of reward gradients, which we observe empirically (\cref{sec:experiments}).

\subsection{$\eps$-Greedy Policy}
\label{sec:e-greedy policy}

The $\eps$-greedy policy \citep{sutton98reinforcement,auer02finitetime} can be easily applied to any generalization model and thus is popular in practice. The policy pulls arm $i$ in round $t$ with probability
\begin{align*}
  \pi_{i, t}
  = (1 - \eps) \I{I_{\max, t} = i} + \frac{\eps}{K}\,,
\end{align*}
where $\eps \in [0, 1]$ is the exploration rate of the policy and $I_{\max, t}$ is the arm with the highest estimated reward in round $t$. Unfortunately, $\I{I_{\max, t} = i}$ is a step function. Thus, even if the generalization model was parameterized, such as in \cref{sec:contextual intro} with $W$, the indicator would not be differentiable with respect to those parameters. Nevertheless, $\pi_{i, t}$ can be still differentiated with respect to $\eps$ as
\begin{align*}
  \nabla_\eps \log \pi_{i, t}
  = \frac{1}{\pi_{i, t}} \nabla_\eps \pi_{i, t}
  = \frac{1}{\pi_{i, t}} \left(\frac{1}{K} - \I{I_{\max, t} = i}\right)\,.
\end{align*}
This gradient depends only on scalar $\eps$ and can be used in \cref{lem:baseline gradient}, where $\eps$ plays the role of $w$.

We use this policy as a baseline in our experiments, to show the benefits of learning the projection matrix $W$. The generalization model is the same as in \eqref{eq:mle}, where we set $W = I_d$.

\section{Experiments}
\label{sec:experiments}

We conduct a number of experiments to demonstrate the generality and efficacy of \gradband. We also examine the performance of various policy classes proposed in \cref{sec:mab algorithms,sec:contextual algorithms}.

The first six experiments are non-contextual and showcase the policies from \cref{sec:mab algorithms}. In \cref{sec:reward gradient experiment}, we study reward gradients and their variance in a simple problem. In \cref{sec:policy optimization experiment,sec:comparison existing solutions}, we optimize \expthree and \softelim policies in this problem. In \cref{sec:more complex problems,sec:robustness experiment}, we experiment with more complex models and study the robustness of \gradband to its parameters. In \cref{sec:rnn experiment}, we evaluate the optimization of RNN policies. The last three experiments are contextual and showcase the algorithms from \cref{sec:contextual algorithms}. In \cref{sec:contextual policy optimization experiment}, we study the benefit of baseline subtraction. In \cref{sec:subspace meta-learning experiment}, we demonstrate meta-learning of relevant subspaces. Finally, in \cref{sec:multi-class classification experiment}, we apply our policies to real-world classification problems.

The performance of policies is measured by their Bayes regret. We use the Bayes regret instead of the Bayes reward because the former directly indicates how near-optimal a policy is. We note that lower Bayes regret implies higher Bayes reward, and vice versa. In all experiments, the Bayes regret is estimated from $1\,000$ i.i.d.\ problem instances sampled from $\cP$, which are \emph{independent} of those used by \gradband to optimize the policy in question. The shaded areas in our plots indicate standard errors in our estimates. All experiments are implemented in TensorFlow \citep{tensorflow} and PyTorch \citep{pytorch}, on a computer with $112$ cores and $392$ GB RAM.

\subsection{Reward Gradient}
\label{sec:reward gradient experiment}

\begin{figure*}[t]
  \centering
  \includegraphics[width=6in]{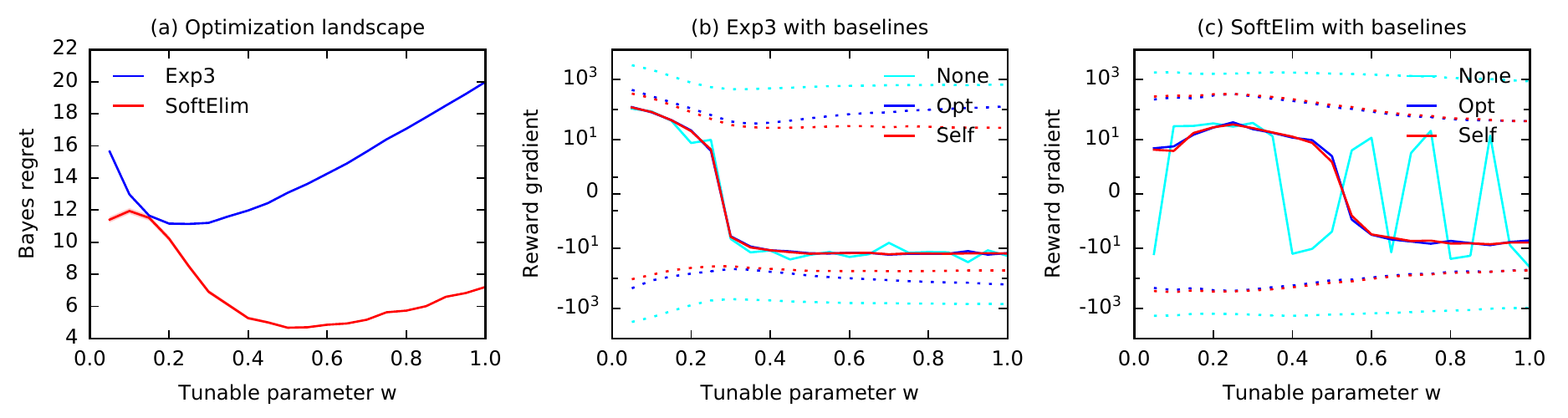} \\
  \vspace{-0.1in}
  \caption{The Bayes regret and reward gradients of \expthree and \softelim policies. In the last two plots, the solid lines are estimated reward gradients for $m = 10\,000$ in \eqref{eq:empirical baseline gradient} and the dotted lines mark high-probability regions for $m = 1$.}
  \label{fig:gradients}
\end{figure*}

Our first experiment is on a simple Bayesian bandit with $K = 2$ arms and Bernoulli reward distributions. The prior distribution $\cP$ is a mixture of two symmetric problem instances,
\begin{align*}
  \cP(\theta_*)
  = \frac{1}{2} \I{\theta_* = (0.6, 0.4)} + \frac{1}{2} \I{\theta_* = (0.4, 0.6)}\,,
\end{align*}
each with prior probability $0.5$. The horizon is $n = 200$ rounds. This is essentially a latent variable problem, where the bandit policy $\pi_w$ attempts to identify the optimal arm without knowing which instance it interacts with. In this case, maximizing \eqref{eq:bayes reward} with respect to $w$ is akin to attaining this as fast as statistically possible, without incurring high regret. The symmetry of the problem instances in this example is not instrumental.

The Bayes regret of \expthree (\cref{sec:exp3}) and \softelim (\cref{sec:softelim}) is shown in \cref{fig:gradients}a, as a function of their parameter $w$. Both are unimodal in $w$ and suitable for optimization by \gradband. \softelim has lower regret than \expthree for all values of $w\in [0,1]$. In fact, the minimum \emph{tuned} regret of \expthree is higher than that of \softelim without tuning. The untuned default is at $w = 1$.

Next we examine empirical gradients of the Bayes reward. The gradients of \expthree are reported in \cref{fig:gradients}b, as estimated using \eqref{eq:empirical baseline gradient}. When the number of samples in the estimator is large ($m = 10\,000$), all three baselines (\cref{sec:baselines}) yield similar gradients (solid lines in \cref{fig:gradients}b). When the number of samples is small ($m = 1$), baselines $b^\textsc{opt}$ and $b^\textsc{self}$ lead to an order of magnitude lower variance than no baseline substraction $b^\textsc{none}$. In particular, the dotted lines in \cref{fig:gradients}b show high-probability regions of the gradient estimates for $m = 1$. The gradients with baselines $b^\textsc{opt}$ and $b^\textsc{self}$ are generally between $-100$ and $100$, while those with $b^\textsc{none}$ are between $-1\,000$ and $1\,000$.

The gradients of \softelim are reported in \cref{fig:gradients}c. We observe similar trends to \cref{fig:gradients}b. The main difference is that $b^\textsc{none}$ performs even worse. Even when the number of samples in \eqref{eq:empirical baseline gradient} is large ($m = 10\,000$), the gradients with no baseline subtraction are highly noisy.

\subsection{Policy Optimization}
\label{sec:policy optimization experiment}

\begin{figure*}[t]
  \centering
  \includegraphics[width=6in]{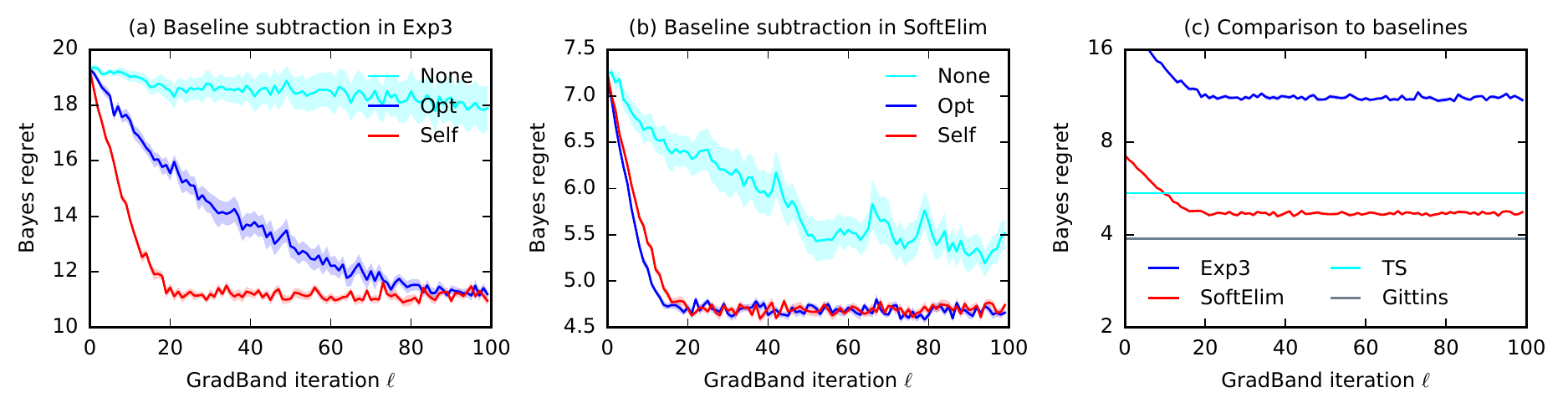} \\
  \vspace{-0.1in}
  \caption{The Bayes regret of \expthree and \softelim policies, as a function of \gradband iterations. All experiments are on a Bernoulli bandit with $K = 2$ arms (\cref{sec:policy optimization experiment}). The regret is averaged over $20$ runs.}
  \label{fig:policy optimization}
\end{figure*}

In the second experiment, we optimize \expthree and \softelim by \gradband on the simple problem from \cref{sec:reward gradient experiment}. The parameters of \gradband are set as $w_0 = 1$, $L = 100$ iterations, learning rate $\alpha = c^{-1} L^{- \frac{1}{2}}$, and batch size $m = 1\,000$. We select $c$ automatically so that $\norm{\hat{g}(n; \pi_{w_0})} \leq c$ holds with a high probability, to avoid manual tuning of the learning rate in our experiments.

In \cref{fig:policy optimization}a, we show the results of optimizing \expthree with all baselines. With $b^\textsc{self}$, \gradband learns a near-optimal policy in fewer than $20$ iterations. This is consistent with \cref{fig:gradients}b, where $b^\textsc{self}$ has the lowest variance. \cref{fig:policy optimization}b shows the equivalent results for \softelim. In this case, performance with $b^\textsc{opt}$ and $b^\textsc{self}$ is comparable. This consistent with \cref{fig:gradients}c, where the variances of $b^\textsc{opt}$ and $b^\textsc{self}$ are comparable. We conclude that $b^\textsc{self}$ is the best baseline overall and use it in all remaining experiments.

\subsection{Comparison to Existing Solutions}
\label{sec:comparison existing solutions}

To evaluate the quality of the policies learned by \gradband in \cref{sec:policy optimization experiment}, we compare them to four well-known bandit policies: \ucb \citep{auer02finitetime}, Bernoulli \ts \citep{agrawal12analysis} with $\mathrm{Beta}(1, 1)$ prior, \ucbv \citep{audibert09exploration}, and the Gittins index \citep{gittins79bandit}. These benchmarks are ideal points of comparison: (i) \ucb is arguably the most popular multi-armed bandit algorithm for $[0, 1]$ rewards. (ii) Bernoulli \ts is near-optimal for Bernoulli bandits, which is the problem structure in the support of our prior above. (iii) \ucbv adapts the sub-Gaussian parameter of reward distributions to observed rewards. This is akin to optimizing $w$ in \softelim. (iv) The Gittins index is the optimal solution to our problem, if the mean arm rewards were drawn i.i.d.\ from $\mathrm{Beta}(1, 1)$. Finally, we also compare to the Dopamine \citep{dopamine} implementation of a deep Q-learning RL algorithm DQN \citep{mnih13playing}, where the state is a concatenation of the following statistics for each arm: the number of observed ones, the number of observed zeros, the logarithm of both counts incremented by $1$, the empirical mean, and a constant bias term.

The Bayes regret of our baselines over $n = 200$ rounds is, respectively, $9.95 \pm 0.03$ (\ucb), $5.47 \pm 0.05$ (\ts), $15.79 \pm 0.03$ (\ucbv), $3.89 \pm 0.07$ (Gittins index), and $16.81 \pm 1.05$ (DQN). The Bayes regret of \softelim is $4.74 \pm 0.03$ (\cref{fig:policy optimization}b), and falls between that of \ts and the Gittins index. The regret of \expthree is $10.96 \pm 0.16$ (\cref{fig:policy optimization}a), which is not competitive. We plot the regret of the best benchmarks, the Gittins index and \ts, together with our policies in \cref{fig:policy optimization}c. This plot shows that a mere $10$ iterations of \gradband with \softelim policy are sufficient to learn a better policy than \ts. 

We conclude that tuned \softelim outperforms a strong baseline, \ts, and  performs almost as well as the Gittins index. We note that while the Gittins index is optimal in some limited settings, such as Bernoulli rewards, it is computationally costly. For instance, it took us almost two days to compute the Gittins index for horizon $n = 200$. This stands in a stark contrast with the tuning of \softelim by \gradband, which takes about $20$ seconds.

Now we discuss failures of some of the benchmarks. \ucbv fails because its variance optimism induces too much initial exploration. This is harmful for the somewhat short learning horizons in our experiment. DQN policies are unstable and need significant tuning to learn policies that barely outperform random actions. This is in a stark contrast with the simplicity of \gradband, which can learn near-optimal policies using vanilla gradient ascent. In the remaining experiments, we report only most competitive benchmarks from this experiment, the Gittins index and \ts.

\subsection{More Complex Problems}
\label{sec:more complex problems}

Now we apply \gradband to three more bandit problems. The first is a beta bandit, where rewards of arm $i$ are drawn from $\mathrm{Beta}(v (\theta_*)_i, v (1 - (\theta_*)_i))$ distribution and their variance is controlled by $v = 4$. The rest of the problem is set as in \cref{sec:policy optimization experiment}. The other two problems are variants of our Bernoulli and beta bandits, where the number of arms is $K = 10$, the prior distribution is defined as $\cP(\theta_*) = \prod_{i = 1}^K \mathrm{Beta}((\theta_*)_i; 1, 1)$, and the horizon is $n = 1\,000$ rounds. These problems are harder variants of our earlier problems, which have $2$ arms and a fixed gap. To apply \ts to $[0, 1]$ rewards, we use randomized Bernoulli rounding \citep{agrawal12analysis}.

\begin{figure*}[t]
  \centering
  \includegraphics[width=6in]{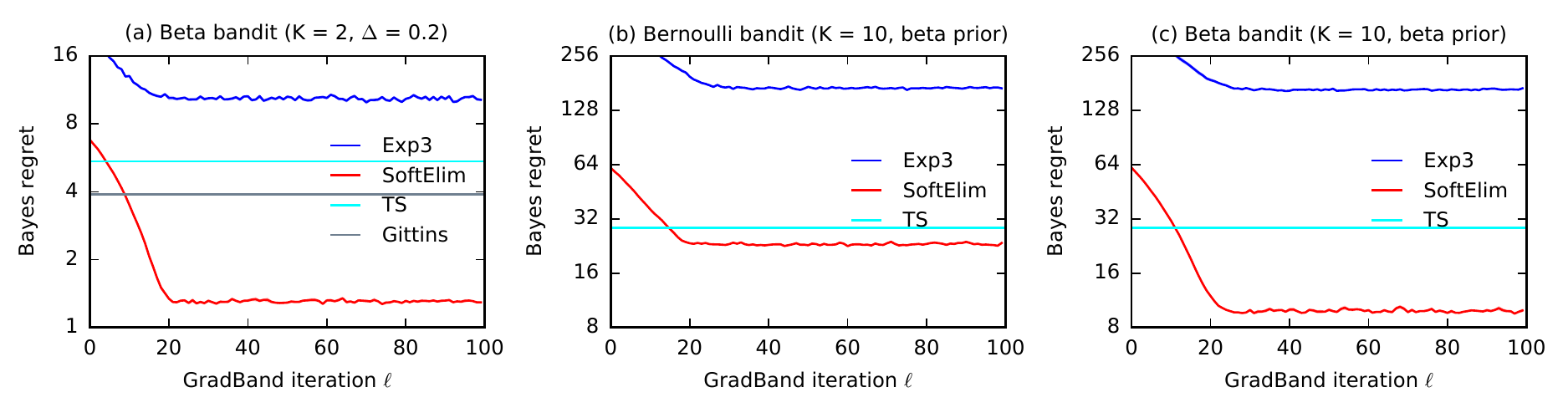} \\
  \vspace{-0.1in}
  \caption{The Bayes regret of \expthree and \softelim policies, as a function of \gradband iterations. The regret is averaged over $20$ runs.}
  \label{fig:large-scale policy optimization}
\end{figure*}

The regret of our policies is reported in \cref{fig:large-scale policy optimization}. In all problems, the regret of \softelim is lower than that of \ts, which is a competitive baseline. The greatest performance gains are in beta bandits, where \softelim adapts to low-variance rewards. \ts performs poorly due to the Bernoulli rounding, which replaces low-variance beta rewards with high-variance Bernoulli rewards. As observed earlier, tuned \expthree is not competitive.

\subsection{Robustness to Model and Algorithm Parameters}
\label{sec:robustness experiment}

In this section, we study the robustness of \gradband to its parameters and model misspecification. We conduct three experiments on the larger Bernoulli and beta problems in \cref{sec:more complex problems}.

\begin{figure*}[t]
  \centering
  \includegraphics[width=6in]{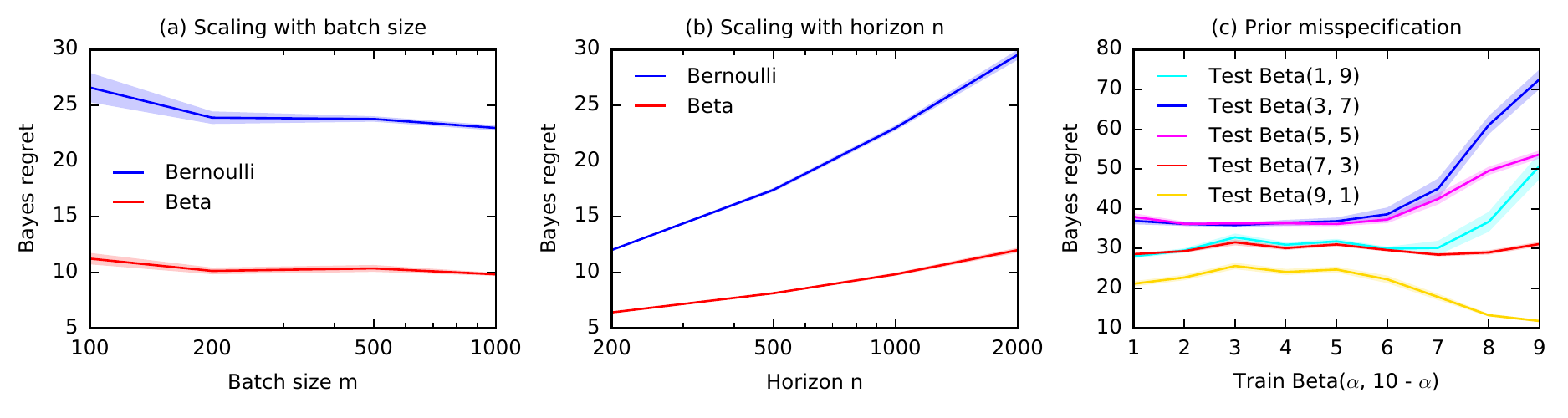} \\
  \vspace{-0.1in}
  \caption{The Bayes regret of \softelim policies as a function of batch size $m$ in \gradband, horizon $n$, and prior distribution $\cP$.}
  \label{fig:robustness}
\end{figure*}

In \cref{fig:robustness}a, we report the $n$-round regret of tuned \softelim as a function of batch size $m$ in \gradband. We observe that the regret is relatively stable as we decrease the batch size from $1\,000$ to $100$, but as expected the variance increases. Setting $m = 100$ reduces the run time of \gradband ten fold, when compared to $m = 1\,000$ used in our earlier experiments.

In \cref{fig:robustness}b, we report the $n$-round regret of tuned \softelim as we vary horizon $n$ from $200$ to $2\,000$ rounds. The regret is roughly linear in $\log n$, indicating theoretically optimal scaling. We expect this trend when the variance of reward gradients does not dominate \gradband. In this case, \gradband can optimize policies equally well at both shorter and longer horizons.

In \cref{fig:robustness}c, we investigate the robustness of tuned \softelim to prior misspecification. To do so, we tune \softelim on a Bernoulli bandit with prior $\cP(\theta_*) = \prod_{i = 1}^K \mathrm{Beta}((\theta_*)_i; \alpha, 10 - \alpha)$ for $\alpha \in [9]$, but measure its regret on a Bernoulli bandit with another $\alpha$. Unsurprisingly, the regret is larger when we train and test on different priors, compared to using the same prior for training and testing. For instance, when we train and test on $\alpha = 1$, the regret is about $30$. However, when we train on $\alpha = 9$, the regret becomes about $50$. Nevertheless, we do not observe catastrophic failures, that is an order of magnitude increase in regret. We conclude that \gradband is relatively robust to prior misspecification.

\subsection{RNN Policies}
\label{sec:rnn experiment}

\begin{figure*}[t]
  \centering
  \includegraphics[width=6in]{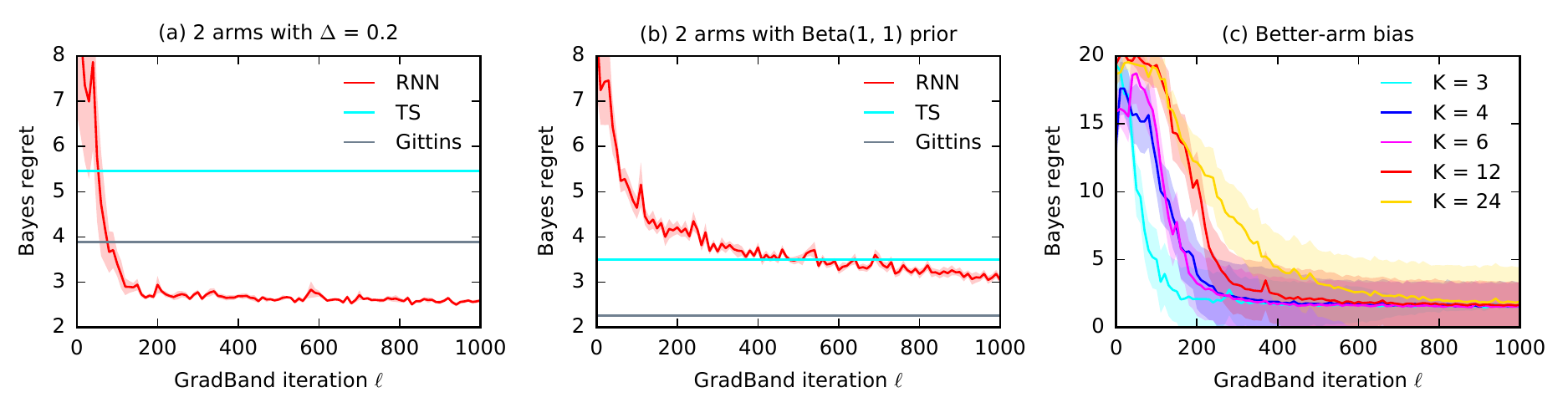} \\
  \vspace{-0.1in}
  \caption{The Bayes regret of RNN policies, as a function of \gradband iterations. We report the average over $10$ runs in the first two plots and the median in the last. The median excludes a few failed runs, which would skew the average.}
  \label{fig:rnn}
\end{figure*}

Now we optimize RNN policies (\cref{sec:rnn}) by \gradband. In our preliminary experiments (unreported), we observed that learning effective RNN policies over longer horizons, $n = 200$ rounds or longer, is challenging if we rely solely on our variance reduction baselines. Therefore, we propose the use of \emph{curriculum learning} \citep{bengio09curriculum} to further reduce the variance. The key idea is to apply \gradband successively to problems with \emph{increasing horizons}. In this experiment, we try a simple instance of this idea with two distinct horizons, $n' = 20$ and $n = 200$. First, we optimize the RNN policy using \gradband at the shorter horizon $n'$. Then we use this learned policy as the initial policy when we apply \gradband at the longer horizon $n$.


Results from the second optimization phase are reported in \cref{fig:rnn}. The number of \gradband iterations is $L = 1\,000$ and we made no attempt to optimize this scheme. In \cref{fig:rnn}a, we apply the RNN policy to the bandit problem in \cref{sec:policy optimization experiment}. The policy outperforms both \ts and the Gittins index. While unexpected, it does not contradict theory, since the Gittins index is not Bayes optimal for this problem. In \cref{fig:rnn}b, we consider a variant of this problem where arm means are drawn i.i.d.\ from $\mathrm{Beta}(1, 1)$. The Gittins index is Bayes optimal in this problem and thus outperforms the learned RNN policy. Nevertheless, the RNN policy still has a lower regret than \ts.

The last experiment is on a $K$-armed Bayesian bandit with Bernoulli rewards. We vary $K$ from $3$ to $24$. The prior distribution $\cP$ is a mixture of two problem instances,
\begin{align*}
  \cP(\theta_*)
  = \frac{1}{2} \I{\theta_* = (0.6, 0.9, 0.7, 0.7, \dots, 0.7)} +
  \frac{1}{2} \I{\theta_* = (0.2, 0.7, 0.9, 0.7, \dots, 0.7)}\,,
\end{align*}
each with prior probability $0.5$. This problem has an interesting structure. The optimal arm can be easily identified by pulling arm $1$. This arm has the largest difference in mean rewards between the problem instances and thus can be used to identify the instance. Arms $4$ through $K$ are \emph{distractors}, they are suboptimal and have identical mean rewards in both instances. Interestingly, our RNN policies do not learn to pull arm $1$ to identify the instance, but learn a different strategy specialized to this problem. The strategy pulls only arms $2$ or $3$, as these are the only arms that can be optimal. Thus the RNN policies learn to ignore the distractors and their Bayes regret does not increase with $K$ (\cref{fig:rnn}c), in contrast to the behavior of classical bandit algorithms.

\subsection{Contextual Policy Optimization}
\label{sec:contextual policy optimization experiment}

\begin{figure}
  \centering
  \vspace{-0.2in}
  \includegraphics[width=2.8in]{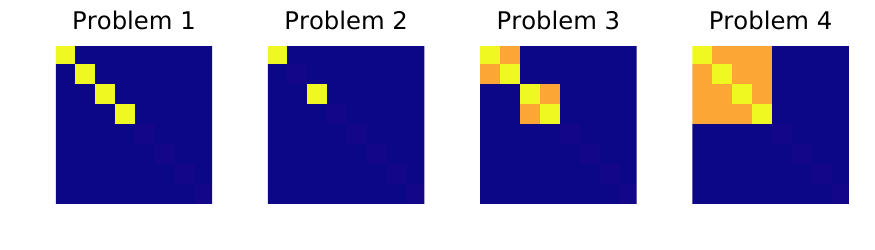} \\
  \vspace{-0.2in}
  \caption{Covariance matrices $\Sigma_\theta$ of synthetic problems in our contextual experiments. The yellow, orange, and navy colors represent $1$, $0.95$, and $0$, respectively.}
  \label{fig:problems}
\end{figure}

\begin{figure*}[t]
  \centering
  \includegraphics[width=6in]{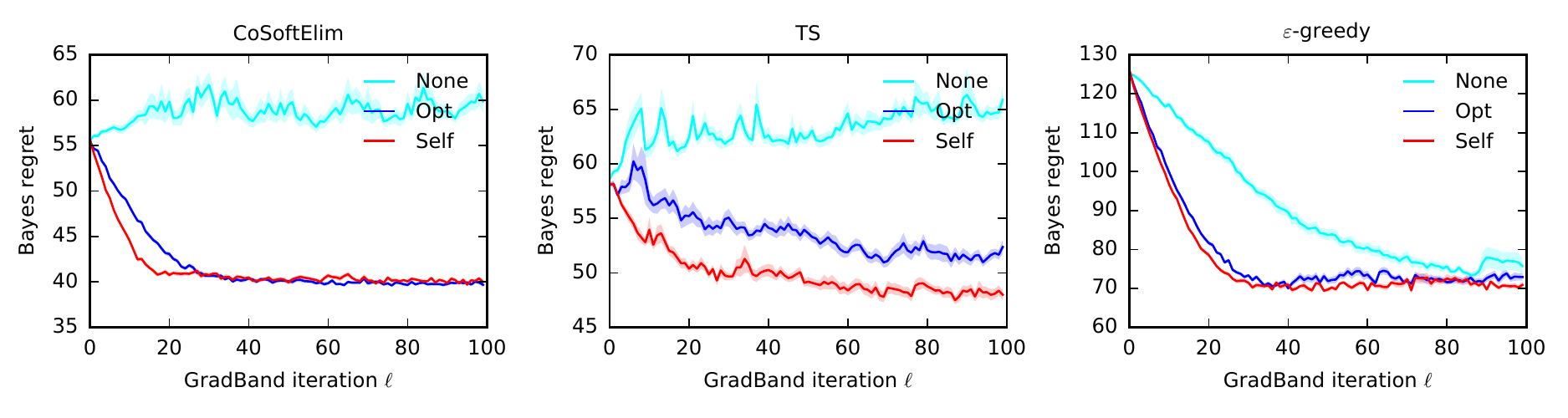} \\
  \vspace{-0.1in}
  \caption{The Bayes regret of \cosoftelim, \ts, and $\eps$-greedy policies with baselines $b^\textsc{none}$, $b^\textsc{opt}$, and $b^\textsc{self}$ in Problem 1 (\cref{sec:contextual policy optimization experiment}). The regret is averaged over $20$ runs.}
  \label{fig:baselines}
\end{figure*}

In the rest of this section, we apply \gradband to contextual bandit problems. In synthetic experiments, we use a set of Bayesian bandits dubbed Problems 1--4, which we describe below.

As in \cref{sec:policy optimization experiment}, we first show the benefit of baseline subtraction. We start with Problem 1, a contextual Bayesian bandit with $K = 4$ arms, $d = 8$ features, and horizon $n = 200$. The context in round $t$ is generated as $x_t \sim \cN(\mu_x, \Sigma_x)$, where $\mu_x = \mathbf{1}$ and $\Sigma_x = I_d$. The parameter vector of arm $i$ is $\theta_{i, *} \sim \cN(\mu_\theta, \Sigma_\theta)$, where $\mu_\theta = \mathbf{0}$ and its covariance matrix $\Sigma_\theta$ is visualized in \cref{fig:problems}. The reward of arm $i$ in round $t$ is $Y_{i, t} = x_t\T \theta_{i, *} + Z_{i, t}$, where $Z_{i, t} \sim \cN(0, \sigma^2)$ and $\sigma = 0.5$. In this problem, the parameter vectors $\theta_{i, *}$ lie in $4$ dimensions out of $8$. As a consequence, we expect \gradband to learn policies that ignore the other $4$ dimensions.

The optimization of \cosoftelim (\cref{sec:cosoftelim}) and \ts (\cref{sec:ts}) by \gradband is initialized using a projection matrix $W_0 = I_d$. The $\eps$-greedy policy (\cref{sec:e-greedy policy}) is parameterized by $\eps$ and initialized with $\eps_0 = 0.2$. The policies are optimized for $L = 100$ iterations with learning rate $\alpha = c^{-1} L^{- \frac{1}{2}}$ and batch size $m = 500$. As in \cref{sec:policy optimization experiment}, we set $c$ automatically so that $\norm{\hat{g}(n; \pi_{W_0})} \leq c$ holds with a high probability. This obviates the need for tuning the learning rate for each problem.

Our results are shown in \cref{fig:baselines}. We observe five trends. First, optimization of \cosoftelim and \ts by \gradband performs poorly without baseline subtraction, which shows the importance of the baselines in variance reduction. Second, the best optimized policy is \cosoftelim with $b^\textsc{self}$. Its regret decreases by $28\%$, from $55.74$ to $40.01$, and reaches a much lower point than any other tuned \ts or $\eps$-greedy policy. Third, optimization of $\ts$ is noisier than that of the other methods, due to the additional randomness in its reward gradient (\cref{lem:ts gradient}). This negatively impacts the quality of learned policies. Fourth, the $\eps$-greedy policy performs poorly because it optimizes only a single parameter. Finally, $b^\textsc{self}$ is the best baseline overall, across all three policy classes. Therefore, we use it in all remaining experiments.

\subsection{Relevant Subspace Meta-Learning}
\label{sec:subspace meta-learning experiment}

\begin{figure*}[t]
  \centering
  \includegraphics[width=6in]{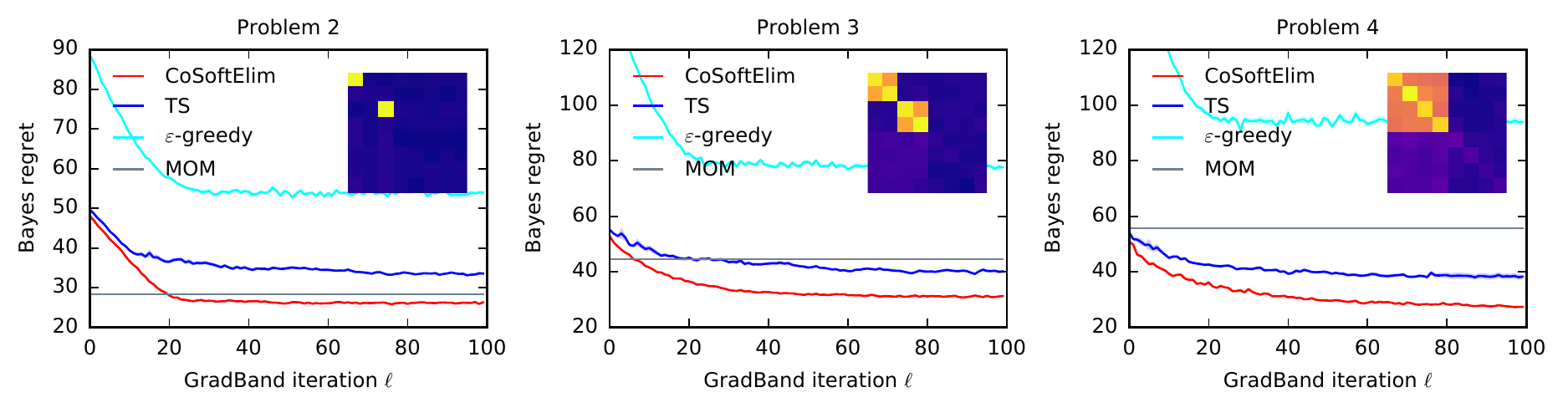} \\
  \vspace{-0.1in}
  \caption{The Bayes regret of \cosoftelim, \ts, and $\eps$-greedy policies on problems in \cref{sec:subspace meta-learning experiment}. The regret is averaged over $20$ runs. We also show the learned projection matrix $W$ in \cosoftelim. The gray line is \lints with the MOM estimated subspace in \cref{sec:subspace meta-learning experiment}.}
  \label{fig:subspace 8}
\end{figure*}

Now we show that \gradband can learn projection matrices that represent relevant subspaces, those containing $\theta_{i, *}$. We experiment with Problems 2--4, which differ from Problem 1 only in the configuration of the covariance matrix $\Sigma_\theta$, as shown in \cref{fig:problems}. In Problem 2, $\theta_{i, *}$ lie in dimensions $1$ and $3$. In Problem 3, $\theta_{i, *}$ lie in strongly correlated dimensions $1$--$2$ and $3$--$4$. In Problem 4, $\theta_{i, *}$ lie in $4$ strongly correlated dimensions $1$--$4$.

Our results are reported in \cref{fig:subspace 8}. The best optimized policy is \cosoftelim. After tuning by \gradband, its regret always decreases by roughly $50\%$. The $\eps$-greedy policy performs poorly, as its optimization is limited to a single parameter. \cref{fig:subspace 8} also shows the learned projections $W$ in \cosoftelim at $\ell = 100$. The projections resemble $\Sigma_\theta$ in \cref{fig:problems}, which indicates that \gradband learns the relevant subspace.

\cref{fig:subspace 8} also compares \cosoftelim to other bandit approaches. First, note that \ts at iteration $\ell = 1$ is vanilla \lints, as introduced in \cref{sec:ts} except that $W = I_d$. Both optimized \ts and \cosoftelim outperform it by a large margin. This shows the importance of learning suitable $W$.

The learning objective is critical as well. To show this, we implement \lints with a projection matrix $W$ that is estimated by a \emph{method of moments (MOM)} for linear meta-learning \citep{tripuraneni20provable}. The details of how $W$ is estimated are in \cref{sec:supplementary experiments}. \cosoftelim can outperform this approach in all problems in \cref{fig:subspace 8}. The reason is that efficient exploration is not only about relevant subspace recovery, but also about scaling $W$ in \eqref{eq:mle} with the goal of regret minimization.

\citet{cella20metalearning} recently proposed a \linucb-like method for meta-learning in linear bandits, where the learned parameter vector is biased towards the mean task vector. The mean task vector in our problems is $\mu_\theta = \mathbf{0}$. Under the assumption that $\mu_\theta$ is known, their method reduces to \linucb, which has regret of $116.02 \pm 0.78$, $99.01 \pm 0.69$, $134.57 \pm 0.98$, and $156.98 \pm 1.15$ in Problems 1--4, respectively. This is inferior to our proposed approaches.

\subsection{Multi-Class Classification}
\label{sec:multi-class classification experiment}

\begin{figure*}[t]
  \centering
  {\scriptsize
  \begin{tabular}{@{\ }l|r@{\ \ }r@{\ \ }r@{\ }} \hline
    Dataset & $K$ & $d$ & Examples \\ \hline
    Adult & $2$ & $123$ & $32\,561$ \\
    Australian Statlog & $2$ & $14$ & $690$ \\
    Breast Cancer & $2$ & $10$ & $683$ \\
    Covertype & $7$ & $54$ & $581\,012$ \\
    Iris & $3$ & $4$ & $150$ \\
    Image Segmentation & $7$ & $19$ & $2\,310$ \\
    Landsat Satellite Statlog & $6$ & $36$ & $4\,435$ \\
    Mushroom & $2$ & $112$ & $8\,124$ \\
    Shuttle Statlog & $7$ & $9$ & 43\,500 \\
    Vehicle Statlog & $4$ & $18$ & $846$ \\
    Wine & $3$ & $13$ & $178$ \\ \hline
  \end{tabular}
  }
  \raisebox{-0.775in}{
  \includegraphics[width=1.8in]{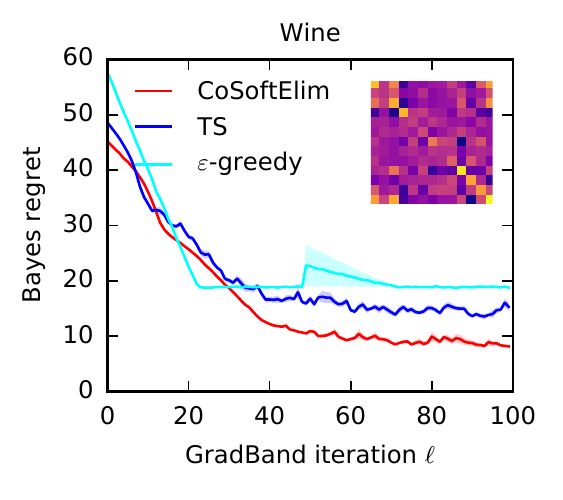}
  \includegraphics[width=1.8in]{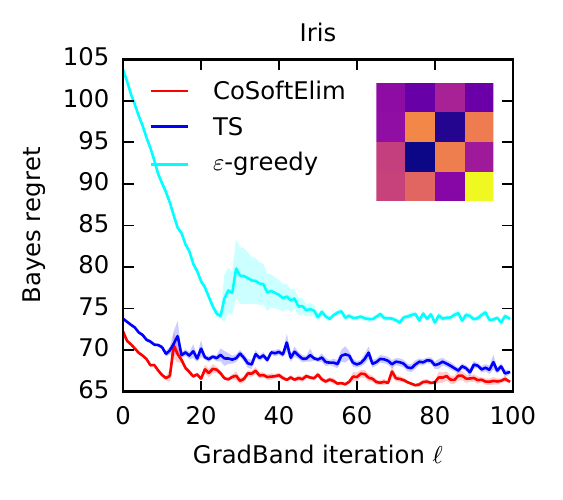}
  } \\
  \vspace{-0.05in}
  \caption{\textbf{Left.} Multi-class classification bandit problems in \cref{sec:multi-class classification experiment}. \textbf{Center.} The Bayes regret of \cosoftelim, \ts, and $\eps$-greedy policies on the Wine problem. The regret is averaged over $20$ runs. We also show the learned projection matrix $W$ in \cosoftelim. \textbf{Right.} An analogous plot for the Iris problem.}
  \label{fig:uci ml}
\end{figure*}

To demonstrate the generality of our approach, we apply it to multi-class classification bandit problems where arms correspond to labels of training examples \citep{agarwal14taming,riquelme18deep}. In round $t$, the agent observes feature vector $x_t$, which represents a training example in the classification problem, and then pulls an arm that represents the chosen label. The agent receives a reward of one if the pulled arm is the correct label, and zero otherwise. We experiment with $11$ datasets from the UCI ML Repository \citep{ucimlrepository}, comprising classification problems with up to $7$ classes and $123$ features. Properties of these datasets are listed in \cref{fig:uci ml}. The horizon is $n = 500$ rounds. When the number of features $d$ is smaller than $20$, the policies are optimized for $L = 100$ iterations with batch size $m = 1\,000$. Otherwise the policies are optimized for $L = 1\,000$ iterations with batch size $m = 80$. We increase $L$ since the solved problems become more challenging. We decrease $m$ to avoid memory overflow, as our current implementation stores $m n K$ sample covariance matrices in memory.

Detailed results for all problems are presented in \cref{sec:supplementary experiments}. Overall, we observe the same trends as in \cref{sec:subspace meta-learning experiment}. \cref{fig:uci ml} shows our best and worst results. In the Wine problem, optimization by \gradband reduces the regret of \cosoftelim more than fivefold, from $45.16$ to $8.18$. In the Iris problem, the regret of \cosoftelim is reduced by a mere $8\%$, from $72.21$ to $66.27$.

\section{Related Work}
\label{sec:related work}

Now we briefly discuss closely related work, either technically or in terms of its motivation. Similarly to our work, \citet{yang20differentiable} also extend \citet{boutilier20differentiable} to contextual bandits. The two extensions are different in multiple ways. For instance, we frame the problem of learning a contextual policy as a projection into a relevant subspace, whereas \citet{yang20differentiable} tune only the confidence interval width in a contextual extension of \softelim. In addition, we provide theoretical guarantees on policy-gradient optimization of contextual bandit policies in \cref{thm:contextual concavity}. Finally, we also propose a differentiable variant of Thompson sampling. Differentiable Thompson sampling was simultaneously proposed by \citet{min20policy}, but they do not consider its application to contextual problems.

Our problem is an instance of reinforcement learning (RL) \citep{sutton88learning} where the state is the history $H_t$ of the learning agent. The challenge is that the number of dimensions in $H_t$ is linear in the number of rounds $t$. Hence, in the absence of any additional structure, RL methods must deal with the \emph{curse of dimensionality}. Our approach is a policy-gradient method \citep{williams92simple,sutton00policy} with Monte-Carlo returns \citep{baxter01infinitehorizon}. Policy gradients typically have high variance when using Monte-Carlo methods, which gave rise to a number of variance reduction techniques based on baseline subtraction \citep{greensmith04variance,munos06geometric,zhao11analysis,dick15policy,liu18actiondependent}. Notably, \citet{munos06geometric} attain geometric variance reduction using sequential control variates while \citet{liu18actiondependent} propose control variates that also depend on the action. Our baselines (\cref{sec:baselines}) differ from typical baselines in RL. For instance, $b^\textsc{opt}$ relies on knowing the best arm in hindsight; and both $b^\textsc{opt}$ and $b^\textsc{self}$ use the fact that we can simulate all rewards in any problem instance $\theta_*$.

\citet{agarwal19optimality} and \citet{mei20global} recently proved asymptotic and finite bounds, respectively, on the quality of policy-gradient optimization of softmax policies. These bounds apply to problems with small state spaces, discounted rewards, or noise-free gradients. So, in their current form, they do not provide meaningful insight on \gradband, where the state space is exponential in the number of rounds, rewards are undiscounted, and gradients are noisy.

Our approach is a form of meta-learning \citep{thrun96explanationbased,thrun98lifelong}, where we learn from a sample of tasks to perform well across tasks from the same distribution \citep{baxter98theoretical,baxter00model}. Meta-learning has recently shown considerable promise \citep{finn17modelagnostic,finn18probabilistic,mishra18simple}. Sequential multitask learning \citep{caruana97multitask} is studied in multi-armed bandits by \citet{azar13sequential} and in contextual bandits by \citet{deshmukh17multitask}. In comparison, our setting is offline. A template for meta-learning of sequential strategies is developed by \citet{ortega19metalearning}. Meta-learning in linear models was recently analyzed \citep{bullins19generalize,tripuraneni20provable}. These approaches motivate our algorithms and we compare to them in \cref{sec:experiments}. \citet{cella20metalearning} propose a UCB algorithm for meta-learning in linear bandits but only model a simple bias, the mean task vector. This approach does not seem competitive with learning subspaces (\cref{sec:subspace meta-learning experiment}). Our setting is also more general, but we do not provide guarantees on meta-learning of \cosoftelim and \ts policies.

The regret of bandit algorithms can be reduced by tuning and this topic is addressed by many prior works \citep{vermorel05multiarmed,maes12metalearning,kuleshov14algorithms,hsu19empirical}. However, none of these works use policy gradients, neural network policies, or even the sequential structure of $n$-round rewards. \citet{duan16rl2} optimize a policy class similar to our RNN policies (\cref{sec:rnn}) by an existing optimizer. However, they do not formalize the objective clearly, relate their approach to Bayesian bandits, nor study policies that are provably sound (\cref{thm:concavity,thm:contextual concavity}). \citet{silver14deterministic} apply policy gradients to a continuous bandit problem with a quadratic cost function. Since their cost is convex in arms, this exploration problem is easier than with discrete arms.

The Bayes regret of classical bandit policies can be bounded \citep{russo14learning,wen15efficient,russo16information}. Since these policies also have instance-dependent regret bounds, they are more conservative than the policies that we adopt and tune in this work, where we directly minimize the Bayes regret.

\citet{maillard11thesis} propose \softelim with $w = 1$ and bound the number of pulls of suboptimal arms in Theorem 1.10. Their bound has a large $O(K \Delta^{-4})$ constant, which does not seem easy to eliminate. We introduce $w$ and have a tighter analysis (\cref{thm:softelim regret bound}) with a $O(1)$ constant. We also generalize \softelim to contextual bandits. \softelim resembles Boltzmann exploration \citep{sutton98reinforcement,cesabianchi17boltzmann} and \expthree. The key difference is in the design of $S_{i, t}$. In \expthree and Boltzmann exploration, $S_{i, t}$ only depends on the history of arm $i$. In \softelim, $S_{i, t}$ depends on all arms, which makes \softelim sufficiently optimistic.

\section{Conclusions}
\label{sec:conclusions}

We take first steps towards understanding policy-gradient optimization of bandit policies. Our work addresses two main challenges that emerge in this problem. First, we derive the reward gradient of optimized policies that reflects the structure of our problem and show how to estimate it efficiently from samples. Second, we propose several differentiable bandit policies that, once meta-learned by \gradband, outperform state-of-the-art baselines. We study both the non-contextual and contextual setting. Our approach is general and works well in practice, as validated by extensive experiments.

Our goal was to demonstrate benefits of \emph{learning to explore} over state-of-the-art bandit policies. This is why we focus on two canonical classes of bandit problems, multi-armed and linear bandits. However, since our approach is general, we believe that it can be readily applied to other problem structures, such as non-linear generalized linear bandits \citep{filippi10parametric}, combinatorial actions in combinatorial semi-bandits \citep{gai12combinatorial,chen14combinatorial,kveton15combinatorial,wen15efficient}, and partial monitoring in online learning to rank \citep{radlinski08learning,kveton15cascading,lattimore18toprank}. We hope to extend our work to even more general structures, such as those studied by \citet{degenne20structure}, \citet{tirinzoni20novel}, and \citet{yu20graphical}.

We leave open several questions of interest. First, the variance of empirical reward gradients can be high, especially in RNN policies. So any progress in variance reduction would be of a great importance. Second, except for \cref{thm:concavity,thm:contextual concavity}, we are unaware of other bandit policy-instance pairs where the Bayes reward would be concave in the policy parameters, a property under which gradient ascent converges to optimal solutions. Our empirical observations (\cref{fig:gradients}a) suggest that this may be common. Finally, we believe that convergence guarantees for softmax exploration can be established based on recent advances in analyzing policy gradients in RL \citep{agarwal19optimality,russo19global,mei20global}, an important avenue for future research.

\bibliography{References}

\begin{thebibliography}{87}
\providecommand{\natexlab}[1]{#1}
\providecommand{\url}[1]{\texttt{#1}}
\expandafter\ifx\csname urlstyle\endcsname\relax
  \providecommand{\doi}[1]{doi: #1}\else
  \providecommand{\doi}{doi: \begingroup \urlstyle{rm}\Url}\fi

\bibitem[Abbasi-Yadkori et~al.(2011)Abbasi-Yadkori, Pal, and
  Szepesvari]{abbasi-yadkori11improved}
Yasin Abbasi-Yadkori, David Pal, and Csaba Szepesvari.
\newblock Improved algorithms for linear stochastic bandits.
\newblock In \emph{Advances in Neural Information Processing Systems 24}, pages
  2312--2320, 2011.

\bibitem[Agarwal et~al.(2014)Agarwal, Hsu, Kale, Langford, Li, and
  Schapire]{agarwal14taming}
Alekh Agarwal, Daniel Hsu, Satyen Kale, John Langford, Lihong Li, and Robert
  Schapire.
\newblock Taming the monster: A fast and simple algorithm for contextual
  bandits.
\newblock In \emph{Proceedings of the 31st International Conference on Machine
  Learning}, pages 1638--1646, 2014.

\bibitem[Agarwal et~al.(2019)Agarwal, Kakade, Lee, and
  Mahajan]{agarwal19optimality}
Alekh Agarwal, Sham Kakade, Jason Lee, and Gaurav Mahajan.
\newblock Optimality and approximation with policy gradient methods in {Markov}
  decision processes.
\newblock \emph{CoRR}, abs/1908.00261, 2019.
\newblock URL \url{http://arxiv.org/abs/1908.00261}.

\bibitem[Agrawal and Goyal(2012)]{agrawal12analysis}
Shipra Agrawal and Navin Goyal.
\newblock Analysis of {Thompson} sampling for the multi-armed bandit problem.
\newblock In \emph{Proceeding of the 25th Annual Conference on Learning
  Theory}, pages 39.1--39.26, 2012.

\bibitem[Agrawal and Goyal(2013)]{agrawal13thompson}
Shipra Agrawal and Navin Goyal.
\newblock Thompson sampling for contextual bandits with linear payoffs.
\newblock In \emph{Proceedings of the 30th International Conference on Machine
  Learning}, pages 127--135, 2013.

\bibitem[Aleksandrov et~al.(1968)Aleksandrov, Sysoyev, and
  Shemeneva]{aleksandrov68stochastic}
V.~M. Aleksandrov, V.~I. Sysoyev, and V.~V. Shemeneva.
\newblock Stochastic optimization.
\newblock \emph{Engineering Cybernetics}, 5:\penalty0 11--16, 1968.

\bibitem[Asuncion and Newman(2007)]{ucimlrepository}
A.~Asuncion and D.~J. Newman.
\newblock {UCI} machine learning repository, 2007.
\newblock URL \url{http://www.ics.uci.edu/$\sim$mlearn/{MLR}epository.html}.

\bibitem[Audibert et~al.(2009)Audibert, Munos, and
  Szepesvari]{audibert09exploration}
Jean-Yves Audibert, Remi Munos, and Csaba Szepesvari.
\newblock Exploration-exploitation tradeoff using variance estimates in
  multi-armed bandits.
\newblock \emph{Theoretical Computer Science}, 410\penalty0 (19):\penalty0
  1876--1902, 2009.

\bibitem[Auer and Ortner(2010)]{auer10ucb}
Peter Auer and Ronald Ortner.
\newblock {UCB} revisited: Improved regret bounds for the stochastic
  multi-armed bandit problem.
\newblock \emph{Periodica Mathematica Hungarica}, 61\penalty0 (1-2):\penalty0
  55--65, 2010.

\bibitem[Auer et~al.(1995)Auer, Cesa-Bianchi, Freund, and
  Schapire]{auer95gambling}
Peter Auer, Nicolo Cesa-Bianchi, Yoav Freund, and Robert Schapire.
\newblock Gambling in a rigged casino: The adversarial multi-armed bandit
  problem.
\newblock In \emph{Proceedings of the 36th Annual Symposium on Foundations of
  Computer Science}, pages 322--331, 1995.

\bibitem[Auer et~al.(2002)Auer, Cesa-Bianchi, and Fischer]{auer02finitetime}
Peter Auer, Nicolo Cesa-Bianchi, and Paul Fischer.
\newblock Finite-time analysis of the multiarmed bandit problem.
\newblock \emph{Machine Learning}, 47:\penalty0 235--256, 2002.

\bibitem[Azar et~al.(2013)Azar, Lazaric, and Brunskill]{azar13sequential}
Mohammad~Gheshlaghi Azar, Alessandro Lazaric, and Emma Brunskill.
\newblock Sequential transfer in multi-armed bandit with finite set of models.
\newblock In \emph{Advances in Neural Information Processing Systems 26}, pages
  2220--2228, 2013.

\bibitem[Baxter(1998)]{baxter98theoretical}
Jonathan Baxter.
\newblock Theoretical models of learning to learn.
\newblock In \emph{Learning to Learn}, pages 71--94. Springer, 1998.

\bibitem[Baxter(2000)]{baxter00model}
Jonathan Baxter.
\newblock A model of inductive bias learning.
\newblock \emph{Journal of Artificial Intelligence Research}, 12:\penalty0
  149--198, 2000.

\bibitem[Baxter and Bartlett(2001)]{baxter01infinitehorizon}
Jonathan Baxter and Peter Bartlett.
\newblock Infinite-horizon policy-gradient estimation.
\newblock \emph{Journal of Artificial Intelligence Research}, 15:\penalty0
  319--350, 2001.

\bibitem[Bellemare et~al.(2018)Bellemare, Castro, Gelada, Kumar, and
  Moitra]{dopamine}
Marc Bellemare, Pablo Castro, Carles Gelada, Saurabh Kumar, and Subhodeep
  Moitra.
\newblock Dopamine.
\newblock https://github.com/google/dopamine, 2018.

\bibitem[Bengio et~al.(2009)Bengio, Louradour, Collobert, and
  Weston]{bengio09curriculum}
Yoshua Bengio, Jerome Louradour, Ronan Collobert, and Jason Weston.
\newblock Curriculum learning.
\newblock In \emph{Proceedings of the 26th International Conference on Machine
  Learning}, pages 41--48, 2009.

\bibitem[Berry and Fristedt(1985)]{berry85bandit}
Donald Berry and Bert Fristedt.
\newblock \emph{Bandit Problems: Sequential Allocation of Experiments}.
\newblock Springer, 1985.

\bibitem[Bhandari and Russo(2019)]{russo19global}
Jalaj Bhandari and Daniel Russo.
\newblock Global optimality guarantees for policy gradient methods.
\newblock \emph{CoRR}, abs/1906.01786, 2019.
\newblock URL \url{http://arxiv.org/abs/1906.01786}.

\bibitem[Boutilier et~al.(2020)Boutilier, Hsu, Kveton, Mladenov, Szepesvari,
  and Zaheer]{boutilier20differentiable}
Craig Boutilier, Chih-Wei Hsu, Branislav Kveton, Martin Mladenov, Csaba
  Szepesvari, and Manzil Zaheer.
\newblock Differentiable meta-learning of bandit policies.
\newblock In \emph{Advances in Neural Information Processing Systems 33}, 2020.

\bibitem[Boyd and Vandenberghe(2004)]{boyd04convex}
Stephen Boyd and Lieven Vandenberghe.
\newblock \emph{Convex Optimization}.
\newblock Cambridge University Press, Cambridge, United Kingdom, 2004.

\bibitem[Bullins et~al.(2019)Bullins, Hazan, Kalai, and
  Livni]{bullins19generalize}
Brian Bullins, Elad Hazan, Adam Kalai, and Roi Livni.
\newblock Generalize across tasks: Efficient algorithms for linear
  representation learning.
\newblock In \emph{Proceedings of the 30th International Conference on
  Algorithmic Learning Theory}, pages 235--246, 2019.

\bibitem[Caruana(1997)]{caruana97multitask}
Rich Caruana.
\newblock Multitask learning.
\newblock \emph{Machine Learning}, 28:\penalty0 41--75, 1997.

\bibitem[Cella et~al.(2020)Cella, Lazaric, and Pontil]{cella20metalearning}
Leonardo Cella, Alessandro Lazaric, and Massimiliano Pontil.
\newblock Meta-learning with stochastic linear bandits.
\newblock In \emph{Proceedings of the 37th International Conference on Machine
  Learning}, 2020.

\bibitem[Cesa-Bianchi et~al.(2017)Cesa-Bianchi, Gentile, Lugosi, and
  Neu]{cesabianchi17boltzmann}
Nicolo Cesa-Bianchi, Claudio Gentile, Gabor Lugosi, and Gergely Neu.
\newblock Boltzmann exploration done right.
\newblock In \emph{Advances in Neural Information Processing Systems 30}, pages
  6284--6293, 2017.

\bibitem[Chapelle and Li(2012)]{chapelle11empirical}
Olivier Chapelle and Lihong Li.
\newblock An empirical evaluation of {Thompson} sampling.
\newblock In \emph{Advances in Neural Information Processing Systems 24}, pages
  2249--2257, 2012.

\bibitem[Chen et~al.(2014)Chen, Wang, and Yuan]{chen14combinatorial}
Wei Chen, Yajun Wang, and Yang Yuan.
\newblock Combinatorial multi-armed bandit and its extension to
  probabilistically triggered arms.
\newblock \emph{CoRR}, abs/1407.8339, 2014.
\newblock URL \url{http://arxiv.org/abs/1407.8339}.

\bibitem[Chu et~al.(2011)Chu, Li, Reyzin, and Schapire]{chu11contextual}
Wei Chu, Lihong Li, Lev Reyzin, and Robert Schapire.
\newblock Contextual bandits with linear payoff functions.
\newblock In \emph{Proceedings of the 14th International Conference on
  Artificial Intelligence and Statistics}, pages 208--214, 2011.

\bibitem[Dani et~al.(2008)Dani, Hayes, and Kakade]{dani08stochastic}
Varsha Dani, Thomas Hayes, and Sham Kakade.
\newblock Stochastic linear optimization under bandit feedback.
\newblock In \emph{Proceedings of the 21st Annual Conference on Learning
  Theory}, pages 355--366, 2008.

\bibitem[Degenne et~al.(2020)Degenne, Shao, and Koolen]{degenne20structure}
Remy Degenne, Han Shao, and Wouter Koolen.
\newblock Structure adaptive algorithms for stochastic bandits.
\newblock In \emph{Proceedings of the 37th International Conference on Machine
  Learning}, 2020.

\bibitem[Deshmukh et~al.(2017)Deshmukh, Dogan, and Scott]{deshmukh17multitask}
Aniket~Anand Deshmukh, Urun Dogan, and Clayton Scott.
\newblock Multi-task learning for contextual bandits.
\newblock In \emph{Advances in Neural Information Processing Systems 30}, pages
  4848--4856, 2017.

\bibitem[Dick(2015)]{dick15policy}
Travis Dick.
\newblock Policy gradient reinforcement learning without regret.
\newblock Master's thesis, University of Alberta, 2015.

\bibitem[Duan et~al.(2016)Duan, Schulman, Chen, Bartlett, Sutskever, and
  Abbeel]{duan16rl2}
Yan Duan, John Schulman, Xi~Chen, Peter Bartlett, Ilya Sutskever, and Pieter
  Abbeel.
\newblock {RL}$^2$: Fast reinforcement learning via slow reinforcement
  learning.
\newblock \emph{CoRR}, abs/1611.02779, 2016.
\newblock URL \url{http://arxiv.org/abs/1611.02779}.

\bibitem[Filippi et~al.(2010)Filippi, Cappe, Garivier, and
  Szepesvari]{filippi10parametric}
Sarah Filippi, Olivier Cappe, Aurelien Garivier, and Csaba Szepesvari.
\newblock Parametric bandits: The generalized linear case.
\newblock In \emph{Advances in Neural Information Processing Systems 23}, pages
  586--594, 2010.

\bibitem[Finn et~al.(2017)Finn, Abbeel, and Levine]{finn17modelagnostic}
Chelsea Finn, Pieter Abbeel, and Sergey Levine.
\newblock Model-agnostic meta-learning for fast adaptation of deep networks.
\newblock In \emph{Proceedings of the 34th International Conference on Machine
  Learning}, pages 1126--1135, 2017.

\bibitem[Finn et~al.(2018)Finn, Xu, and Levine]{finn18probabilistic}
Chelsea Finn, Kelvin Xu, and Sergey Levine.
\newblock Probabilistic model-agnostic meta-learning.
\newblock In \emph{Advances in Neural Information Processing Systems 31}, pages
  9537--9548, 2018.

\bibitem[Gai et~al.(2012)Gai, Krishnamachari, and Jain]{gai12combinatorial}
Yi~Gai, Bhaskar Krishnamachari, and Rahul Jain.
\newblock Combinatorial network optimization with unknown variables:
  Multi-armed bandits with linear rewards and individual observations.
\newblock \emph{IEEE/ACM Transactions on Networking}, 20\penalty0 (5):\penalty0
  1466--1478, 2012.

\bibitem[Gittins(1979)]{gittins79bandit}
John Gittins.
\newblock Bandit processes and dynamic allocation indices.
\newblock \emph{Journal of the Royal Statistical Society. Series B
  (Methodological)}, 41:\penalty0 148--177, 1979.

\bibitem[Gittins et~al.(2011)Gittins, Glazebrook, and
  Weber]{gittins11multiarmed}
John Gittins, Kevin Glazebrook, and Richard Weber.
\newblock \emph{Multi-Armed Bandit Allocation Indices}.
\newblock John Wiley \& Sons, 2011.

\bibitem[Greensmith et~al.(2004)Greensmith, Bartlett, and
  Baxter]{greensmith04variance}
Evan Greensmith, Peter Bartlett, and Jonathan Baxter.
\newblock Variance reduction techniques for gradient estimates in reinforcement
  learning.
\newblock \emph{Journal of Machine Learning Research}, 5:\penalty0 1471--1530,
  2004.

\bibitem[Hochreiter and Schmidhuber(1997)]{hochreiter97long}
Sepp Hochreiter and Jurgen Schmidhuber.
\newblock Long short-term memory.
\newblock \emph{Neural Computation}, 9\penalty0 (8):\penalty0 1735--1780, 1997.

\bibitem[Hsu et~al.(2019)Hsu, Kveton, Meshi, Mladenov, and
  Szepesvari]{hsu19empirical}
Chih-Wei Hsu, Branislav Kveton, Ofer Meshi, Martin Mladenov, and Csaba
  Szepesvari.
\newblock Empirical {Bayes} regret minimization.
\newblock \emph{CoRR}, abs/1904.02664, 2019.
\newblock URL \url{http://arxiv.org/abs/1904.02664}.

\bibitem[Kuleshov and Precup(2014)]{kuleshov14algorithms}
Volodymyr Kuleshov and Doina Precup.
\newblock Algorithms for multi-armed bandit problems.
\newblock \emph{CoRR}, abs/1402.6028, 2014.
\newblock URL \url{http://arxiv.org/abs/1402.6028}.

\bibitem[Kveton et~al.(2015{\natexlab{a}})Kveton, Szepesvari, Wen, and
  Ashkan]{kveton15cascading}
Branislav Kveton, Csaba Szepesvari, Zheng Wen, and Azin Ashkan.
\newblock Cascading bandits: Learning to rank in the cascade model.
\newblock In \emph{Proceedings of the 32nd International Conference on Machine
  Learning}, 2015{\natexlab{a}}.

\bibitem[Kveton et~al.(2015{\natexlab{b}})Kveton, Wen, Ashkan, and
  Szepesvari]{kveton15combinatorial}
Branislav Kveton, Zheng Wen, Azin Ashkan, and Csaba Szepesvari.
\newblock Combinatorial cascading bandits.
\newblock In \emph{Advances in Neural Information Processing Systems 28}, pages
  1450--1458, 2015{\natexlab{b}}.

\bibitem[Kveton et~al.(2019{\natexlab{a}})Kveton, Szepesvari, Ghavamzadeh, and
  Boutilier]{kveton19perturbed}
Branislav Kveton, Csaba Szepesvari, Mohammad Ghavamzadeh, and Craig Boutilier.
\newblock Perturbed-history exploration in stochastic multi-armed bandits.
\newblock In \emph{Proceedings of the 28th International Joint Conference on
  Artificial Intelligence}, 2019{\natexlab{a}}.

\bibitem[Kveton et~al.(2019{\natexlab{b}})Kveton, Szepesvari, Ghavamzadeh, and
  Boutilier]{kveton19perturbed2}
Branislav Kveton, Csaba Szepesvari, Mohammad Ghavamzadeh, and Craig Boutilier.
\newblock Perturbed-history exploration in stochastic linear bandits.
\newblock In \emph{Proceedings of the 35th Conference on Uncertainty in
  Artificial Intelligence}, 2019{\natexlab{b}}.

\bibitem[Kveton et~al.(2019{\natexlab{c}})Kveton, Szepesvari, Vaswani, Wen,
  Ghavamzadeh, and Lattimore]{kveton19garbage}
Branislav Kveton, Csaba Szepesvari, Sharan Vaswani, Zheng Wen, Mohammad
  Ghavamzadeh, and Tor Lattimore.
\newblock Garbage in, reward out: Bootstrapping exploration in multi-armed
  bandits.
\newblock In \emph{Proceedings of the 36th International Conference on Machine
  Learning}, pages 3601--3610, 2019{\natexlab{c}}.

\bibitem[Lai and Robbins(1985)]{lai85asymptotically}
T.~L. Lai and Herbert Robbins.
\newblock Asymptotically efficient adaptive allocation rules.
\newblock \emph{Advances in Applied Mathematics}, 6\penalty0 (1):\penalty0
  4--22, 1985.

\bibitem[Langford and Zhang(2008)]{langford08epochgreedy}
John Langford and Tong Zhang.
\newblock The epoch-greedy algorithm for multi-armed bandits with side
  information.
\newblock In \emph{Advances in Neural Information Processing Systems 20}, pages
  817--824, 2008.

\bibitem[Lattimore and Szepesvari(2019)]{lattimore19bandit}
Tor Lattimore and Csaba Szepesvari.
\newblock \emph{Bandit Algorithms}.
\newblock Cambridge University Press, 2019.

\bibitem[Lattimore et~al.(2018)Lattimore, Kveton, Li, and
  Szepesvari]{lattimore18toprank}
Tor Lattimore, Branislav Kveton, Shuai Li, and Csaba Szepesvari.
\newblock {TopRank}: A practical algorithm for online stochastic ranking.
\newblock In \emph{Advances in Neural Information Processing Systems 31}, pages
  3949--3958, 2018.

\bibitem[Li et~al.(2010)Li, Chu, Langford, and Schapire]{li10contextual}
Lihong Li, Wei Chu, John Langford, and Robert Schapire.
\newblock A contextual-bandit approach to personalized news article
  recommendation.
\newblock In \emph{Proceedings of the 19th International Conference on World
  Wide Web}, 2010.

\bibitem[Liu et~al.(2018)Liu, Feng, Mao, Zhou, Peng, and
  Liu]{liu18actiondependent}
Hao Liu, Yihao Feng, Yi~Mao, Dengyong Zhou, Jian Peng, and Qiang Liu.
\newblock Action-dependent control variates for policy optimization via
  {Stein's} identity.
\newblock In \emph{Proceedings of the 6th International Conference on Learning
  Representations}, 2018.

\bibitem[Maes et~al.(2012)Maes, Wehenkel, and Ernst]{maes12metalearning}
Francis Maes, Louis Wehenkel, and Damien Ernst.
\newblock Meta-learning of exploration/exploitation strategies: The multi-armed
  bandit case.
\newblock In \emph{Proceedings of the 4th International Conference on Agents
  and Artificial Intelligence}, pages 100--115, 2012.

\bibitem[Maillard(2011)]{maillard11thesis}
Odalric-Ambrym Maillard.
\newblock \emph{APPRENTISSAGE SEQUENTIEL: Bandits, Statistique et
  Renforcement}.
\newblock PhD thesis, University of Lille, 2011.

\bibitem[Mei et~al.(2020)Mei, Xiao, Szepesvari, and Schuurmans]{mei20global}
Jincheng Mei, Chenjun Xiao, Csaba Szepesvari, and Dale Schuurmans.
\newblock On the global convergence rates of softmax policy gradient methods.
\newblock In \emph{Proceedings of the 37th International Conference on Machine
  Learning}, 2020.

\bibitem[Min et~al.(2020)Min, Moallemi, and Russo]{min20policy}
Seungki Min, Ciamac Moallemi, and Daniel Russo.
\newblock Policy gradient optimization of {Thompson} sampling policies.
\newblock \emph{CoRR}, abs/2006.16507, 2020.
\newblock URL \url{http://arxiv.org/abs/2006.16507}.

\bibitem[Mishra et~al.(2018)Mishra, Rohaninejad, Chen, and
  Abbeel]{mishra18simple}
Nikhil Mishra, Mostafa Rohaninejad, Xi~Chen, and Pieter Abbeel.
\newblock A simple neural attentive meta-learner.
\newblock In \emph{Proceedings of the 6th International Conference on Learning
  Representations}, 2018.

\bibitem[Mnih et~al.(2013)Mnih, Kavukcuoglu, Silver, Graves, Antonoglou,
  Wierstra, and Riedmiller]{mnih13playing}
Volodymyr Mnih, Koray Kavukcuoglu, David Silver, Alex Graves, Ioannis
  Antonoglou, Daan Wierstra, and Martin Riedmiller.
\newblock Playing {Atari} with deep reinforcement learning.
\newblock \emph{CoRR}, abs/1312.5602, 2013.
\newblock URL \url{http://arxiv.org/abs/1312.5602}.

\bibitem[Munos(2006)]{munos06geometric}
Remi Munos.
\newblock Geometric variance reduction in {Markov} chains: Application to value
  function and gradient estimation.
\newblock \emph{Journal of Machine Learning Research}, 7:\penalty0 413--427,
  2006.

\bibitem[Ortega et~al.(2019)Ortega, Wang, Rowland, Genewein, Kurth-Nelson,
  Pascanu, Heess, Veness, Pritzel, Sprechmann, Jayakumar, McGrath, Miller,
  Azar, Osband, Rabinowitz, Gyorgy, Chiappa, Osindero, Teh, van Hasselt,
  de~Freitas, Botvinick, and Legg]{ortega19metalearning}
Pedro Ortega, Jane Wang, Mark Rowland, Tim Genewein, Zeb Kurth-Nelson, Razvan
  Pascanu, Nicolas Heess, Joel Veness, Alexander Pritzel, Pablo Sprechmann,
  Siddhant Jayakumar, Tom McGrath, Kevin Miller, Mohammad~Gheshlaghi Azar, Ian
  Osband, Neil Rabinowitz, Andras Gyorgy, Silvia Chiappa, Simon Osindero,
  Yee~Whye Teh, Hado van Hasselt, Nando de~Freitas, Matthew Botvinick, and
  Shane Legg.
\newblock Meta-learning of sequential strategies.
\newblock \emph{CoRR}, abs/1905.03030, 2019.
\newblock URL \url{http://arxiv.org/abs/1905.03030}.

\bibitem[Radlinski et~al.(2008)Radlinski, Kleinberg, and
  Joachims]{radlinski08learning}
Filip Radlinski, Robert Kleinberg, and Thorsten Joachims.
\newblock Learning diverse rankings with multi-armed bandits.
\newblock In \emph{Proceedings of the 25th International Conference on Machine
  Learning}, pages 784--791, 2008.

\bibitem[Riquelme et~al.(2018)Riquelme, Tucker, and Snoek]{riquelme18deep}
Carlos Riquelme, George Tucker, and Jasper Snoek.
\newblock Deep {Bayesian} bandits showdown: An empirical comparison of
  {Bayesian} deep networks for {Thompson} sampling.
\newblock In \emph{Proceedings of the 6th International Conference on Learning
  Representations}, 2018.

\bibitem[Rumelhart et~al.(1986)Rumelhart, Hinton, and
  Williams]{rumelhart86learning}
David Rumelhart, Geoffrey Hinton, and Ronald Williams.
\newblock Learning representations by back-propagating errors.
\newblock \emph{Nature}, 323:\penalty0 533–536, 1986.

\bibitem[Russo and {Van Roy}(2014)]{russo14learning}
Daniel Russo and Benjamin {Van Roy}.
\newblock Learning to optimize via posterior sampling.
\newblock \emph{Mathematics of Operations Research}, 39\penalty0 (4):\penalty0
  1221--1243, 2014.

\bibitem[Russo and {Van Roy}(2016)]{russo16information}
Daniel Russo and Benjamin {Van Roy}.
\newblock An information-theoretic analysis of {Thompson} sampling.
\newblock \emph{Journal of Machine Learning Research}, 17\penalty0
  (68):\penalty0 1--30, 2016.

\bibitem[Russo et~al.(2018)Russo, {Van Roy}, Kazerouni, Osband, and
  Wen]{russo18tutorial}
Daniel Russo, Benjamin {Van Roy}, Abbas Kazerouni, Ian Osband, and Zheng Wen.
\newblock A tutorial on {Thompson} sampling.
\newblock \emph{Foundations and Trends in Machine Learning}, 11\penalty0
  (1):\penalty0 1--96, 2018.

\bibitem[Silver et~al.(2014)Silver, Lever, Heess, Degris, Wierstra, and
  Riedmiller]{silver14deterministic}
David Silver, Guy Lever, Nicolas Heess, Thomas Degris, Daan Wierstra, and
  Martin Riedmiller.
\newblock Deterministic policy gradient algorithms.
\newblock In \emph{Proceedings of the 31st International Conference on Machine
  Learning}, pages 387--395, 2014.

\bibitem[Silver et~al.(2017)Silver, Schrittwieser, Simonyan, Antonoglou, Huang,
  Guez, Hubert, Baker, Lai, Bolton, Chen, Lillicrap, Hui, Sifre, van~den
  Driessche, Graepel, and Hassabis]{silver17mastering}
David Silver, Julian Schrittwieser, Karen Simonyan, Ioannis Antonoglou, Aja
  Huang, Arthur Guez, Thomas Hubert, Lucas Baker, Matthew Lai, Adrian Bolton,
  Yutian Chen, Timothy Lillicrap, Fan Hui, Laurent Sifre, George van~den
  Driessche, Thore Graepel, and Demis Hassabis.
\newblock Mastering the game of go without human knowledge.
\newblock \emph{Nature}, 550:\penalty0 354--359, 2017.

\bibitem[Sutton(1988)]{sutton88learning}
Richard Sutton.
\newblock Learning to predict by the methods of temporal differences.
\newblock \emph{Machine Learning}, 3:\penalty0 9--44, 1988.

\bibitem[Sutton and Barto(1998)]{sutton98reinforcement}
Richard Sutton and Andrew Barto.
\newblock \emph{Reinforcement Learning: An Introduction}.
\newblock MIT Press, Cambridge, MA, 1998.

\bibitem[Sutton et~al.(2000)Sutton, McAllester, Singh, and
  Mansour]{sutton00policy}
Richard Sutton, David McAllester, Satinder Singh, and Yishay Mansour.
\newblock Policy gradient methods for reinforcement learning with function
  approximation.
\newblock In \emph{Advances in Neural Information Processing Systems 12}, pages
  1057--1063, 2000.

\bibitem[tf()]{tensorflow}
tf.
\newblock {TensorFlow}.
\newblock https://www.tensorflow.org, 2020.

\bibitem[Thompson(1933)]{thompson33likelihood}
William~R. Thompson.
\newblock On the likelihood that one unknown probability exceeds another in
  view of the evidence of two samples.
\newblock \emph{Biometrika}, 25\penalty0 (3-4):\penalty0 285--294, 1933.

\bibitem[Thrun(1996)]{thrun96explanationbased}
Sebastian Thrun.
\newblock \emph{Explanation-Based Neural Network Learning - A Lifelong Learning
  Approach}.
\newblock PhD thesis, University of Bonn, 1996.

\bibitem[Thrun(1998)]{thrun98lifelong}
Sebastian Thrun.
\newblock Lifelong learning algorithms.
\newblock In \emph{Learning to Learn}, pages 181--209. Springer, 1998.

\bibitem[Tirinzoni et~al.(2020)Tirinzoni, Lazaric, and
  Restelli]{tirinzoni20novel}
Andrea Tirinzoni, Alessandro Lazaric, and Marcello Restelli.
\newblock A novel confidence-based algorithm for structured bandits.
\newblock In \emph{Proceedings of the 23rd International Conference on
  Artificial Intelligence and Statistics}, 2020.

\bibitem[torch()]{pytorch}
torch.
\newblock {PyTorch}.
\newblock https://pytorch.org, 2020.

\bibitem[Tripuraneni et~al.(2020)Tripuraneni, Jin, and
  Jordan]{tripuraneni20provable}
Nilesh Tripuraneni, Chi Jin, and Michael Jordan.
\newblock Provable meta-learning of linear representations.
\newblock \emph{CoRR}, abs/2002.11684, 2020.
\newblock URL \url{http://arxiv.org/abs/2002.11684}.

\bibitem[Vermorel and Mohri(2005)]{vermorel05multiarmed}
Joannes Vermorel and Mehryar Mohri.
\newblock Multi-armed bandit algorithms and empirical evaluation.
\newblock In \emph{Proceedings of the 16th European Conference on Machine
  Learning}, pages 437--448, 2005.

\bibitem[Wen et~al.(2015)Wen, Kveton, and Ashkan]{wen15efficient}
Zheng Wen, Branislav Kveton, and Azin Ashkan.
\newblock Efficient learning in large-scale combinatorial semi-bandits.
\newblock In \emph{Proceedings of the 32nd International Conference on Machine
  Learning}, 2015.

\bibitem[Williams(1992)]{williams92simple}
Ronald Williams.
\newblock Simple statistical gradient-following algorithms for connectionist
  reinforcement learning.
\newblock \emph{Machine Learning}, 8\penalty0 (3-4):\penalty0 229--256, 1992.

\bibitem[Yang and Toni(2020)]{yang20differentiable}
Kaige Yang and Laura Toni.
\newblock Differentiable linear bandit algorithm.
\newblock \emph{CoRR}, abs/2006.03000, 2020.
\newblock URL \url{http://arxiv.org/abs/2006.03000}.

\bibitem[Yu et~al.(2020)Yu, Kveton, Wen, Zhang, and Mengshoel]{yu20graphical}
Tong Yu, Branislav Kveton, Zheng Wen, Ruiyi Zhang, and Ole Mengshoel.
\newblock Graphical models meet bandits: A variational {Thompson} sampling
  approach.
\newblock In \emph{Proceedings of the 37th International Conference on Machine
  Learning}, 2020.

\bibitem[Zaheer et~al.(2018)Zaheer, Reddi, Sachan, Kale, and
  Kumar]{zaheer18adaptive}
Manzil Zaheer, Sashank Reddi, Devendra Sachan, Satyen Kale, and Sanjiv Kumar.
\newblock Adaptive methods for nonconvex optimization.
\newblock In \emph{Advances in Neural Information Processing Systems 31}, pages
  9793--9803, 2018.

\bibitem[Zhao et~al.(2011)Zhao, Hachiya, Niu, and Sugiyama]{zhao11analysis}
Tingting Zhao, Hirotaka Hachiya, Gang Niu, and Masashi Sugiyama.
\newblock Analysis and improvement of policy gradient estimation.
\newblock In \emph{Advances in Neural Information Processing Systems 24}, pages
  262--270, 2011.

\end{thebibliography}

\clearpage
\onecolumn
\appendix

\section{Concave Bayes Reward}
\label{sec:concave bayes reward}

This appendix is organized as follows. In \cref{thm:concavity}, we show that the $n$-round Bayes reward of an explore-then-commit policy is concave in its exploration horizon. In \cref{thm:contextual concavity}, we generalize this results to the contextual setting.

\concavity*
\begin{proof}
We start with the \emph{explore-then-commit} policy \citep{langford08epochgreedy}, which is parameterized by $h \in [\floors{n / 2}]$ and works as follows. In the first $2 h$ rounds, it explores and pulls each arm $h$ times. Let $\hat{\mu}_{i, h}$ be the average reward of arm $i$ after $h$ pulls. Then, if $\hat{\mu}_{1, h} > \hat{\mu}_{2, h}$, arm $1$ is pulled for the remaining $n - 2 h$ rounds. Otherwise arm $2$ is pulled.

Fix any problem instance $\theta_* = (\mu_1, \mu_2)$. Without loss of generality, let arm $1$ be optimal, that is $\mu_1 > \mu_2$. Let $\Delta = \mu_1 - \mu_2$. The key observation is that the expected $n$-round reward in instance $\theta_*$ has a closed form
\begin{align}
  r(n, \theta_*; \pi_h)
  & = \mu_1 n - \Delta \left[h +
  \prob{\hat{\mu}_{1, h} < \hat{\mu}_{2, h}} (n - 2 h)\right]\,,
  \label{eq:concave reward}
\end{align}
where
\begin{align}
  \prob{\hat{\mu}_{1, h} < \hat{\mu}_{2, h}}
  & = \prob{\hat{\mu}_{1, h} - \hat{\mu}_{2, h} < 0}
  = \prob{\hat{\mu}_{1, h} - \hat{\mu}_{2, h} - \Delta < - \Delta}
  \nonumber \\
  & = \Phi\left(- \Delta \sqrt{h / 2}\right)
  = \frac{1}{\sqrt{2 \pi}}
  \int_{x = - \infty}^{- \Delta \sqrt{h / 2}} e^{- \frac{x^2}{2}} \dif x
  \label{eq:bad choice}
\end{align}
is the probability of committing to a suboptimal arm after the exploration phase. The third equality is from $\hat{\mu}_{1, h} - \hat{\mu}_{2, h} - \Delta \sim \cN(0, 2 / h)$, where $\Phi(x)$ denotes the cumulative distribution function of the standard normal distribution.

We want to prove that $r(n, \theta_*; \pi_h)$ is concave in $h$. We rely on the following property of convex functions of a single parameter $x$. Let $f(x)$ and $g(x)$ be non-negative, decreasing, and convex in $x$. Then $f(x) g(x)$ is non-negative, decreasing, and convex in $x$. This follows from
\begin{align*}
  (f(x) g(x))'
  & = f'(x) g(x) + f(x) g'(x)\,, \\
  (f(x) g(x))''
  & = f''(x) g(x) + 2 f'(x) g'(x) + f(x) g''(x)\,.
\end{align*}
It is easy to see that \eqref{eq:bad choice} is non-negative, decreasing, and convex in $h$. The same is true for $n - 2 h$, under our assumption that $h \in [\floors{n / 2}]$. As a result, $\prob{\hat{\mu}_{1, h} < \hat{\mu}_{2, h}} (n - 2 h)$ is convex in $h$, and so is $\Delta [h + \prob{\hat{\mu}_{1, h} < \hat{\mu}_{2, h}} (n - 2 h)]$. Therefore, $\eqref{eq:concave reward}$ is concave in $h$. Finally, the Bayes reward is concave in $h$ because $r(n; \pi_h) = \E{r(n,  \theta_*; \pi_h)}$.

The last remaining issue is that parameter $h$ in the explore-then-commit policy cannot be optimized by \gradband, as it is discrete. To allow its optimization, we extend the policy to continuous $h$ by randomized rounding.

The \emph{randomized explore-then-commit} policy has a continuous parameter $h \in [1, \floors{n / 2}]$. The discrete $\bar{h}$ is chosen as $\bar{h} = \floors{h} + Z$, where $Z \sim \mathrm{Ber}(h - \floors{h})$. Then we run the original policy with $\bar{h}$. The key property of the randomized policy is that its $n$-round Bayes reward is a piecewise linear interpolation of that of the original policy,
\begin{align*}
  (\ceils{h} - h) \, r(n; \pi_{\floors{h}}) +
  (h - \floors{h}) \, r(n; \pi_{\ceils{h}})\,.
\end{align*}
By definition, the above function is continuous and concave in $h$. This concludes the proof.
\end{proof}

\begin{algorithm}[t]
  \caption{Randomized contextual explore-then-commit policy.}
  \label{alg:contextual explore-then-commit}
  \begin{algorithmic}[1]
    \State \textbf{Inputs:} Continuous exploration horizon $h$
    \Statex
    \State $\bar{h} \gets \floors{h} + Z$, where $Z \sim \mathrm{Ber}(h - \floors{h})$
    \Comment{Randomized horizon rounding}
    \State $\forall i \in [2], j \in [L]: \hat{\mu}_{i, j} \gets 0$
    \Comment{Initialize estimated mean rewards of all arms}
    \For{$t = 1, \dots, n$}
      \State $j \gets x_t$
      \State $s \gets \sum_{\ell = 1}^{t - 1} \I{x_\ell = j}$
      \Comment{Number of past observations in context $j$}
      \If{$s \leq 2 \bar{h}$}
      \Comment{Explore}
        \If{$s$ is even}
          \State Pull arm $1$ and observe its reward $Y_{1, t}$
          \State $\hat{\mu}_{1, j} \gets (\hat{\mu}_{1, j} N + Y_{1, t}) / (N + 1)$
          where $N \gets s / 2$
        \Else
          \State Pull arm $2$ and observe its reward $Y_{2, t}$
          \State $\hat{\mu}_{2, j} \gets (\hat{\mu}_{2, j} N + Y_{2, t}) / (N + 1)$
          where $N \gets (s - 1) / 2$
        \EndIf
      \Else
      \Comment{Exploit}
        \State \algorithmicif\ $\hat{\mu}_{1, j} > \hat{\mu}_{2, j}$
        \algorithmicthen\ Pull arm $1$
        \algorithmicelse\ Pull arm $2$
      \EndIf
    \EndFor
  \end{algorithmic}
\end{algorithm}

\coconcavity*
\begin{proof}
The number of arms is $K = 2$ and the number of contexts is $L$. The key step in our proof is that the expected $n$-round reward is concave in $h$ for a carefully-chosen problem instance $\theta_*$. Then $r(n; h)$ is concave for any distribution over $\theta_*$. We define the problem instance as
\begin{align*}
  \theta_*
  = (\mu_{1, 1}, \mu_{2, 1}) \oplus \dots \oplus (\mu_{1, L}, \mu_{2, L})\,,
\end{align*}
where $\mu_{i, j}$ is the expected reward of arm $i$ in context $j$. The realized reward of arm $i$ in round $t$ is $Y_{i, t} \sim \cN(\mu_{i, x_t}, \sigma^2)$ for $\sigma = 1$.

The policy is a contextual variant of the randomized explore-then-commit policy (\cref{thm:concavity}) and we show it in \cref{alg:contextual explore-then-commit}. It is parameterized by a real-valued exploration horizon $h$, which is randomly rounded to the nearest integer $\bar{h}$. In each context $j$, each arm is explored $\bar{h}$ times. After that, the policy commits to the arm with the highest empirical mean in that context.

Let $S_j = \set{t \in [n]: x_t = j}$ be the rounds with context $j$ and $r_j(n, \theta_*; h)$ be the corresponding cumulative reward. Now note that the problem in any context $j$ is an instance of that in \cref{thm:concavity}. Therefore, $r_j(n, \theta_*; h)$ is concave in $h$ for any $h \in [1, \floors{\abs{S_j} / 2}]$. Moreover, since
\begin{align*}
  r(n, \theta_*; h)
  = \sum_{j = 1}^L r_j(n, \theta_*; h)\,,
\end{align*}
$r(n, \theta_*; h)$ is concave in $h$ for any $h \in [1, \min_{j \in [L]} \floors{\abs{S_j} / 2}]$; and so is $r(n; h) = \E{r(n, \theta_*; h)}$ for any distribution $\cP$ over $\theta_*$. This concludes the proof.
\end{proof}

\clearpage

\section{Gradient Proofs}
\label{sec:gradient proofs}

All proofs below are under the assumption that the sequence of contexts $x_{1 : n}$ is fixed (\cref{sec:bayesian bandits}). To simplify notation, do not explicitly condition on $x_{1 : n}$.

\gradient*
\begin{proof}
The $n$-round Bayes reward can be expressed as $r(n; \pi_w) = \E{\condE{\sum_{t = 1}^n Y_{I_t, t}}{Y}}$, where the outer expectation is over problem instances $\theta_*$ and their realized rewards $Y$, which do not depend on $w$. Thus
\begin{align*}
  \nabla_w r(n; \pi_w)
  = \E{\sum_{t = 1}^n \nabla_w \condE{Y_{I_t, t}}{Y}}\,.
\end{align*}
Only the pulled arms are random in the inner expectation. Therefore, for any $t \in [n]$, we have
\begin{align*}
  \condE{Y_{I_t, t}}{Y}
  = \sum_{i_{1 : t}} \condprob{I_{1 : t} = i_{1 : t}}{Y} Y_{i_t, t}\,.
\end{align*}
Now note that $\condprob{I_{1 : t} = i_{1 : t}}{Y}$ can be decomposed by the chain rule of probabilities as
\begin{align}
  \condprob{I_{1 : t} = i_{1 : t}}{Y}
  = \prod_{s = 1}^t \condprob{I_s = i_s}{I_{1 : s - 1} = i_{1 : s - 1}, Y}\,.
  \label{eq:chain rule}
\end{align}
Since the policy does not use $\theta_*$, future contexts, and future rewards, we have for any $s \in [n]$ that
\begin{align}
  \condprob{I_s = i_s}{I_{1 : s - 1} = i_{1 : s - 1}, Y}
  = \pi_w(i_s \mid i_{1 : s - 1}, x_1, \dots, x_s,
  Y_{i_1, 1}, \dots, Y_{i_{s - 1}, s - 1})\,.
  \label{eq:policy equivalence}
\end{align}
Finally, note that $\nabla_w f(\pi_w) = f(\pi_w) \nabla_{\pi_w} \log f(\pi_w)$ holds for any non-negative differentiable function $f$. This is known as the score-function identity \citep{aleksandrov68stochastic} and is the basis of policy-gradient methods. We apply it to $\condE{Y_{I_t, t}}{Y}$ and obtain
\begin{align*}
  \nabla_w \condE{Y_{I_t, t}}{Y}
  & = \sum_{i_{1 : t}} Y_{i_t, t} \nabla_w \condprob{I_{1 : t} = i_{1 : t}}{Y} \\
  & = \sum_{i_{1 : t}} Y_{i_t, t} \, \condprob{I_{1 : t} = i_{1 : t}}{Y}
  \nabla_w \log \condprob{I_{1 : t} = i_{1 : t}}{Y} \\
  & = \sum_{s = 1}^t \condE{Y_{I_t, t} \nabla_w \log \pi_w(I_s \mid H_s)}{Y}\,,
\end{align*}
where the last equality is by \eqref{eq:chain rule} and \eqref{eq:policy equivalence}. Now we chain all equalities and rearrange the result as
\begin{align*}
  \nabla_w r(n; \pi_w)
  = \sum_{t = 1}^n \sum_{s = 1}^t \E{Y_{I_t, t}
  \nabla_w \log \pi_w(I_s \mid H_s)}
  = \sum_{t = 1}^n \E{\nabla_w \log \pi_w(I_t \mid H_t)
  \sum_{s = t}^n Y_{I_s, s}}\,.
\end{align*}
This concludes the proof.
\end{proof}

\baselinegradient*
\begin{proof}
Fix round $t$. We want to show that $b_t$ does not change the expectation in \cref{lem:gradient}. That is,
\begin{align*}
  \E{b_t(I_{1 : t - 1}, Y, x_{1 : n}, \theta_*) \nabla_w \log \pi_w(I_t \mid H_t)}
  = 0\,.
\end{align*}
We proceed as follows. Since $b_t$ does not depend on $I_t$,
\begin{align*}
  & \E{b_t(I_{1 : t - 1}, Y, x_{1 : n}, \theta_*) \nabla_w \log \pi_w(I_t \mid H_t)} \\
  & \quad = \E{b_t(I_{1 : t - 1}, Y, x_{1 : n}, \theta_*)
  \condE{\nabla_w \log \pi_w(I_t \mid H_t)}{I_{1 : t - 1}, Y}}\,.
\end{align*}
Now note that
\begin{align*}
  \condE{\nabla_w \log \pi_w(I_t \mid H_t)}{I_{1 : t - 1}, Y}
  & = \sum_{i = 1}^K \condprob{I_t = i}{I_{1 : t - 1}, Y}
  \nabla_w \log \pi_w(i \mid H_t) \\
  & = \sum_{i = 1}^K \pi_w(i \mid H_t)
  \nabla_w \log \pi_w(i \mid H_t) \\
  & = \nabla_w \sum_{i = 1}^K \pi_w(i \mid H_t)\,.
\end{align*}
Since $\sum_{i = 1}^K \pi_w(i \mid H_t) = 1$, we have $\nabla_w \sum_{i = 1}^K \pi_w(i \mid H_t) = 0$. This concludes the proof.
\end{proof}

\baselineopt*
\begin{proof}
Let $R_s = Y_{I_{*, s}, s} - Y_{I_s, s}$ be the regret in round $s$ and $\bar{R}_s = f_{I_{*, s}}(x_s, \theta_*) - f_{I_s}(x_s, \theta_*)$ be its expectation, conditioned on the pulled arm and context. Then from the definition of $b^\textsc{opt}$, we have
\begin{align*}
  \nabla_w r(n; \pi_w)
  & = \E{\sum_{t = 1}^n \nabla_w \log \pi_w(I_t \mid H_t) \sum_{s = t}^n R_s}
  = \E{\sum_{t = 1}^n \nabla_w \log \pi_w(I_t \mid H_t) \sum_{s = t}^n \bar{R}_s} \\
  & = \E{\sum_{t = 1}^n \bar{R}_t \sum_{s = 1}^t \nabla_w \log \pi_w(I_s \mid H_s)}\,.
\end{align*}
Now we take the norm of the reward gradient and bound it from above as
\begin{align*}
  \normw{\nabla_w r(n; \pi_w)}{2}
  \leq \E{\sum_{t = 1}^n \abs{\bar{R}_t}
  \sum_{s = 1}^t \normw{\nabla_w \log \pi_w(I_s \mid H_s)}{2}}
  \leq c n \, \E{\sum_{t = 1}^n \abs{\bar{R}_t}}\,,
\end{align*}
where the first inequality is from the subadditivity of the Euclidean norm and the second is by the definition of $c$.

Let $\Delta_{\min}$ and $\Delta_{\max}$ be the minimum and maximum gaps, respectively, as defined in the claim. Then we have $\abs{\bar{R}_t} \leq \Delta_{\max} \I{I_t \neq I_{*, t}}$. Moreover, the regret in any problem instance is bounded from below as
\begin{align*}
  \E{R(n, \theta_*; \pi_w)}
  \geq \Delta_{\min} \, \E{\sum_{t = 1}^n \I{I_t \neq I_{*, t}}}\,.
\end{align*}
Now we combine these facts with the regret bound on any problem instance and get
\begin{align*}
  \normw{\nabla_w r(n; \pi_w)}{2}
  & \leq c \Delta_{\max} n \, \E{\sum_{t = 1}^n \I{I_t \neq I_{*, t}}}
  \leq c \frac{\Delta_{\max}}{\Delta_{\min}} n \, \E{R(n, \theta_*; \pi_w)} \\
  & = c \frac{\Delta_{\max}}{\Delta_{\min}} n O(n^\alpha)
  = O(n^{\alpha + 1})\,.
\end{align*}
This concludes the proof.
\end{proof}

\expthreederivative*
\begin{proof}
First, we express the derivative of $\log \pi_{i, t}$ with respect to $w$ as
\begin{align*}
  \nabla_w \log \pi_{i, t}
  = \frac{1}{\pi_{i, t}} \nabla_w \pi_{i, t}
  = \frac{1}{\pi_{i, t}}
  \left[(1 - w) \nabla_w \frac{V_{i, t}}{V_t} -
  \frac{V_{i, t}}{V_t} + \frac{1}{K}\right]\,.
\end{align*}
Conditioned on the history, $S_{i, t}$ is a constant independent of $w$, and thus we have
\begin{align*}
  \nabla_w \frac{V_{i, t}}{V_t}
  & = \frac{1}{V_t} \nabla_w V_{i, t} + V_{i, t} \nabla_w \frac{1}{V_t}
  = \frac{V_{i, t} S_{i, t}}{V_t K} -
  \frac{V_{i, t}}{V_t^2} \sum_{j = 1}^K V_{j, t} \frac{S_{j, t}}{K} \\
  & = \frac{V_{i, t}}{V_t} \left[\frac{S_{i, t}}{K} -
  \sum_{j = 1}^K \frac{V_{j, t}}{V_t} \frac{S_{j, t}}{K}\right]\,.
\end{align*}
This concludes the proof.
\end{proof}

\tsgradient*
\begin{proof}
The key idea is to rederive $\nabla_w \condE{Y_{I_t, t}}{Y}$ in \cref{lem:gradient}, with sampled posterior means in Thompson sampling. The remaining steps are the same as in \cref{lem:gradient,lem:baseline gradient}.

Let $\tilde{\mu}_t = (\tilde{\mu}_{1, t}, \dots, \tilde{\mu}_{K, t})$ be all posterior-sampled means in round $t$ and $\tilde{\mu}_{1 : t}$ be all posterior-sampled means in the first $t$ rounds. Then, analogously to the chain rule in \eqref{eq:chain rule}, we have
\begin{align*}
  \condprob{I_{1 : t} = i_{1 : t}}{Y}
  & = \prod_{s = 1}^t \condprob{I_s = i_s}{I_{1 : s - 1} = i_{1 : s - 1}, Y} \\
  & = \prod_{s = 1}^t \int_{\tilde{\mu}_s}
  \condprob{I_s = i_s, \tilde{\mu}_s} {I_{1 : s - 1} = i_{1 : s - 1}, Y}
  \dif \tilde{\mu}_s \\
  & = \prod_{s = 1}^t \int_{\tilde{\mu}_s}
  \condprob{I_s = i_s}{\tilde{\mu}_s}
  \condprob{\tilde{\mu}_s}{I_{1 : s - 1} = i_{1 : s - 1}, Y}
  \dif \tilde{\mu}_s \\
  & = \int_{\tilde{\mu}_{1 : t}}
  \underbrace{\left(\prod_{s = 1}^t
  \condprob{I_s = i_s}{\tilde{\mu}_s}\right)}_{\cI(\tilde{\mu}_{1 : t})} \
  \underbrace{\left(\prod_{s = 1}^t
  \condprob{\tilde{\mu}_s}{I_{1 : s - 1} = i_{1 : s - 1}, Y}\right)}_
  {\cM(\tilde{\mu}_{1 : t})}
  \dif \tilde{\mu}_{1 : t}\,.
\end{align*}
The third equality holds because $I_s$ depends only on $\tilde{\mu}_s$.

Since $\cI(\tilde{\mu}_{1 : t})$ is independent of $w$, we have
\begin{align*}
  \nabla_w \condprob{I_{1 : t} = i_{1 : t}}{Y}
  & = \nabla_w \int_{\tilde{\mu}_{1 : t}}
  \cI(\tilde{\mu}_{1 : t}) \, \cM(\tilde{\mu}_{1 : t})
  \dif \tilde{\mu}_{1 : t} \\
  & = \int_{\tilde{\mu}_{1 : t}}
  \cI(\tilde{\mu}_{1 : t}) \nabla_w \cM(\tilde{\mu}_{1 : t})
  \dif \tilde{\mu}_{1 : t} \\
  & = \int_{\tilde{\mu}_{1 : t}}
  \cI(\tilde{\mu}_{1 : t}) \, \cM(\tilde{\mu}_{1 : t})
  \nabla_w \log \cM(\tilde{\mu}_{1 : t})
  \dif \tilde{\mu}_{1 : t} \\
  & = \int_{\tilde{\mu}_{1 : t}}
  \condprob{I_{1 : t} = i_{1 : t}, \tilde{\mu}_{1 : t}}{Y}
  \nabla_w \log \cM(\tilde{\mu}_{1 : t})
  \dif \tilde{\mu}_{1 : t}\,.
\end{align*}
The third equality is by the score-function identity \citep{aleksandrov68stochastic}.

Finally, as in \cref{lem:gradient}, we have
\begin{align*}
  \nabla_w \condE{Y_{I_t, t}}{Y}
  & = \sum_{i_{1 : t}}
  Y_{i_t, t} \nabla_w \condprob{I_{1 : t} = i_{1 : t}}{Y} \\
  & = \sum_{i_{1 : t}} Y_{i_t, t} \int_{\tilde{\mu}_{1 : t}}
  \condprob{I_{1 : t} = i_{1 : t}, \tilde{\mu}_{1 : t}}{Y}
  \nabla_w \log \cM(\tilde{\mu}_{1 : t})
  \dif \tilde{\mu}_{1 : t} \\
  & = \condE{Y_{I_t, t} \nabla_w \log \cM(\tilde{\mu}_{1 : t})}{Y} \\
  & = \sum_{s = 1}^t \condE{Y_{I_t, t}
  \nabla_w \log \condprob{\tilde{\mu}_s}{I_{1 : s - 1}, Y}}{Y} \\
  & = \sum_{s = 1}^t \condE{Y_{I_t, t}
  \sum_{i = 1}^K \nabla_w \log p_i(\tilde{\mu}_{i, s} \mid H_s; w)}{Y}\,.
\end{align*}
The last equality follows from the fact that the posterior mean of each arm in round $s$ is sampled independently, and depends only on $H_s$ and $w$. Hence $\condprob{\tilde{\mu}_s}{I_{1 : s - 1}, Y} = \prod_{i = 1}^K p_i(\tilde{\mu}_{i, s} \mid H_s; w)$. This concludes the proof.
\end{proof}

\clearpage

\section{Analysis of \softelim}
\label{sec:softelim analysis}

We informally justify \softelim in \cref{sec:softelim sketch}. Then we bound its regret in \cref{sec:softelim regret bound}.

\subsection{Sketch}
\label{sec:softelim sketch}

We illustrate the main idea behind our analysis in an informal argument in a $2$-armed bandit. Let arm $1$ be optimal, that is $\mu_1 > \mu_2$. Let $\Delta = \mu_1 - \mu_2$ be the gap. Fix any round $t$ by which arm $2$ has been pulled \say{often,} so that we get $T_{2, t - 1} = \Omega(\Delta^{-2} \log n)$ and $\hat{\mu}_{2, t - 1} \leq \mu_2 + \Delta / 3$ with high probability. Let
\begin{align*}
  \hat{\mu}_{\max, t}
  = \max \set{\hat{\mu}_{1, t}, \hat{\mu}_{2, t}}\,.
\end{align*}
Now consider two cases. First, when $\hat{\mu}_{\max, t - 1} = \hat{\mu}_{1, t - 1}$, arm $1$ is pulled with probability at least $0.5$, by definition of $\pi_{1, t}$. Second, when $\hat{\mu}_{\max, t - 1} = \hat{\mu}_{2, t - 1}$, we have
\begin{align*}
  \pi_{1, t}
  = \exp[-2 (\hat{\mu}_{2, t - 1} - \hat{\mu}_{1, t - 1})^2 T_{1, t - 1}] \pi_{2, t}
  \geq \exp[-2 (\mu_1 - \hat{\mu}_{1, t - 1})^2 T_{1, t - 1}] \pi_{2, t}\,,
\end{align*}
where the last inequality follows from $\hat{\mu}_{1, t - 1} \leq \hat{\mu}_{2, t - 1} \leq \mu_2 + \Delta / 3 \leq \mu_1$, which holds with high probability. This means that arm $1$ is pulled \say{sufficiently often} relative to arm $2$, proportionally to how much $\hat{\mu}_{1, t - 1}$ deviates from $\mu_1$.

Therefore, \softelim eventually enters a regime where arm $1$ has been pulled \say{often,} so that $T_{1, t - 1} = \Omega(\Delta^{-2} \log n)$ and $\hat{\mu}_{1, t - 1} \geq \mu_1 - \Delta / 3$ holds with high probability. Then both $S_{1, t} = 0$ and $S_{2, t} = \Omega(\log n)$ hold with high probability, and the suboptimal arm $2$ is unlikely to be pulled.

\subsection{Regret Bound}
\label{sec:softelim regret bound}

We bound the $n$-round regret of \softelim below.

\softelimregretbound*
\begin{proof}
Each arm is initially pulled once. Therefore,
\begin{align*}
  R(n, \theta_*; \pi_w)
  = \sum_{i = 2}^K \Delta_i \left(\sum_{t = K + 1}^n \prob{I_t = i} + 1\right)\,.
\end{align*}
Now we decompose the probability of pulling each arm $i$ as
\begin{align*}
  \sum_{t = K + 1}^n \prob{I_t = i}
  = & \sum_{t = K + 1}^n \prob{I_t = i, T_{i, t - 1} \leq m} + {} \\
  & \sum_{t = K + 1}^n \prob{I_t = i, T_{i, t - 1} > m, T_{1, t - 1} \leq m} + {} \\
  & \sum_{t = K + 1}^n \prob{I_t = i, T_{i, t - 1} > m, T_{1, t - 1} > m}\,,
\end{align*}
where $m$ is chosen later. In the rest of the proof, we bound each above term separately. To simplify notation, use $\gamma = 1 / w^2$ in instead of $w^2$.

\subsection{Upper Bound on Term $1$}
\label{sec:softelim term 1}

Fix suboptimal arm $i$. Since $T_{i, t} = T_{i, t - 1} + 1$ on event $I_t = i$ and arm $i$ is initially pulled once, we have
\begin{align}
  \sum_{t = K + 1}^n \prob{I_t = i, T_{i, t - 1} \leq m}
  \leq m - 1\,.
  \label{eq:term 1}
\end{align}

\subsection{Upper Bound on Term $3$}
\label{sec:softelim term 3}

Fix suboptimal arm $i$ and round $t$. Let
\begin{align*}
  E_{1, t}
  = \set{\hat{\mu}_{1, t - 1} > \mu_1 - \frac{\Delta_i}{4}}\,, \quad
  E_{i, t}
  = \set{\hat{\mu}_{i, t - 1} < \mu_i + \frac{\Delta_i}{4}}\,,
\end{align*}
be the events that empirical means of arms $1$ and $i$, respectively, are \say{close} to their means. Then
\begin{align*}
  & \prob{I_t = i, T_{i, t - 1} > m, T_{1, t - 1} > m} \\
  & \quad \leq \prob{I_t = i, T_{i, t - 1} > m, E_{1, t}} +
  \prob{\bar{E}_{1, t}, T_{1, t - 1} > m} \\
  & \quad \leq \prob{I_t = i, T_{i, t - 1} > m, E_{1, t}, E_{i, t}} +
  \prob{\bar{E}_{1, t}, T_{1, t - 1} > m} +
  \prob{\bar{E}_{i, t}, T_{i, t - 1} > m}\,.
\end{align*}
Let $m = \ceils{16 \Delta_i^{-2} \log n}$. By the union bound and Hoeffding's inequality, we get
\begin{align*}
  \prob{\bar{E}_{1, t}, T_{1, t - 1} > m}
  & \leq \sum_{s = m + 1}^n
  \prob{\mu_1 - \hat{\mu}_{1, t - 1} \geq \frac{\Delta_i}{4}, \, T_{1, t - 1} = s}
  < n \exp\left[-2 \frac{\Delta_i^2}{16} m\right]
  = n^{-1}\,, \\
  \prob{\bar{E}_{i, t}, T_{i, t - 1} > m}
  & \leq \sum_{s = m + 1}^n
  \prob{\hat{\mu}_{i, t - 1} - \mu_i \geq \frac{\Delta_i}{4}, \, T_{i, t - 1} = s}
  < n \exp\left[-2 \frac{\Delta_i^2}{16} m\right]
  = n^{-1}\,.
\end{align*}
It follows that
\begin{align*}
  \prob{I_t = i, T_{i, t - 1} > m, T_{1, t - 1} > m}
  \leq \prob{I_t = i, T_{i, t - 1} > m, E_{1, t}, E_{i, t}} + 2 n^{-1}\,.
\end{align*}
Now note that $\hat{\mu}_{1, t - 1} - \hat{\mu}_{i, t - 1} \geq \Delta_i / 2$ on events $E_{1, t}$ and $E_{i, t}$. Let
\begin{align}
  \hat{\mu}_{\max, t - 1}
  = \max_{i \in [K]} \hat{\mu}_{i, t - 1}
  \label{eq:highest empirical mean}
\end{align}
be the highest empirical mean in round $t$. Since $\hat{\mu}_{\max, t - 1} \geq \hat{\mu}_{1, t - 1}$, we have $\hat{\mu}_{\max, t - 1} - \hat{\mu}_{i, t - 1} \geq \Delta_i / 2$. Therefore, on event $T_{i, t - 1} > m$, we get
\begin{align}
  p_{i, t}
  \leq \exp[-2 \gamma (\hat{\mu}_{\max, t - 1} - \hat{\mu}_{i, t - 1})^2 T_{i, t - 1}]
  \leq \exp\left[-2 \gamma \frac{\Delta_i^2}{4} m\right]
  \leq n^{-8 \gamma}\,.
  \label{eq:unlikely pull}
\end{align}
Finally, we chain all inequalities over all rounds and get that term $3$ is bounded as
\begin{align}
  \sum_{t = K + 1}^n \prob{I_t = i, T_{i, t - 1} > m, T_{1, t - 1} > m}
  \leq n^{1 - 8 \gamma} + 2\,.
  \label{eq:term 3}
\end{align}

\subsection{Upper Bound on Term $2$}
\label{sec:softelim term 2}

Fix suboptimal arm $i$ and round $t$. First, we apply Hoeffding's inequality to arm $i$, as in \cref{sec:softelim term 3}, and get
\begin{align*}
  \prob{I_t = i, T_{i, t - 1} > m, T_{1, t - 1} \leq m}
  & \leq \prob{I_t = i, T_{i, t - 1} > m, T_{1, t - 1} \leq m, E_{i, t}} + n^{-1} \\
  & = \E{p_{i, t} \I{T_{i, t - 1} > m, T_{1, t - 1} \leq m, E_{i, t}}} + n^{-1}\,.
\end{align*}
Let $\hat{\mu}_{\max, t - 1}$ be defined as in \eqref{eq:highest empirical mean}. Now we bound $p_{i, t}$ from above using $p_{1, t}$. We consider two cases. First, suppose that $\hat{\mu}_{\max, t - 1} > \mu_1 - \Delta_i / 4$. Then we have \eqref{eq:unlikely pull}. On the other hand, when $\hat{\mu}_{\max, t - 1} \leq \mu_1 - \Delta_i / 4$, we have
\begin{align}
  p_{i, t}
  = \frac{\exp[-2 \gamma (\hat{\mu}_{\max, t - 1} - \hat{\mu}_{i, t - 1})^2 T_{i, t - 1}]}
  {\exp[-2 \gamma (\hat{\mu}_{\max, t - 1} - \hat{\mu}_{1, t - 1})^2 T_{1, t - 1}]}
  p_{1, t}
  \leq \exp[2 \gamma (\mu_1 - \hat{\mu}_{1, t - 1})^2 T_{1, t - 1}]
  p_{1, t}\,.
  \label{eq:pi to p1}
\end{align}
It follows that
\begin{align*}
  p_{i, t}
  \leq \exp[2 \gamma (\mu_1 - \hat{\mu}_{1, t - 1})^2 T_{1, t - 1}] p_{1, t} +
  n^{-8 \gamma}\,,
\end{align*}
and we further get that
\begin{align*}
  & \E{p_{i, t} \I{T_{i, t - 1} > m, T_{1, t - 1} \leq m, E_{i, t}}} \\
  & \quad \leq \E{\exp[2 \gamma (\mu_1 - \hat{\mu}_{1, t - 1})^2
  T_{1, t - 1}] p_{1, t} \I{T_{1, t - 1} \leq m}} + n^{-8 \gamma} \\
  & \quad = \E{\exp[2 \gamma (\mu_1 - \hat{\mu}_{1, t - 1})^2
  T_{1, t - 1}] \I{I_t = 1, T_{1, t - 1} \leq m}} + n^{-8 \gamma}\,.
\end{align*}
With a slight abuse of notation, let $\hat{\mu}_{1, s}$ denote the average reward of arm $1$ after $s$ pulls. Then, since $T_{1, t} = T_{1, t - 1} + 1$ on event $I_t = 1$, we have
\begin{align*}
  \sum_{t = K + 1}^n \E{\exp[2 \gamma (\mu_1 - \hat{\mu}_{1, t - 1})^2
  T_{1, t - 1}] \I{I_t = 1, T_{1, t - 1} \leq m}}
  \leq \sum_{s = 1}^m \E{\exp[2 \gamma (\mu_1 - \hat{\mu}_{1, s})^2 s]}\,.
\end{align*}
Now fix the number of pulls $s$ and note that
\begin{align*}
  \E{\exp[2 \gamma (\mu_1 - \hat{\mu}_{1, s})^2 s]}
  & \leq \sum_{\ell = 0}^\infty
  \prob{\frac{\ell + 1}{\sqrt{s}} > \abs{\mu_1 - \hat{\mu}_{1, s}} \geq
  \frac{\ell}{\sqrt{s}}} \exp[2 \gamma (\ell + 1)^2] \\
  & \leq \sum_{\ell = 0}^\infty
  \prob{\abs{\mu_1 - \hat{\mu}_{1, s}} \geq \frac{\ell}{\sqrt{s}}}
  \exp[2 \gamma (\ell + 1)^2] \\
  & \leq 2 \sum_{\ell = 0}^\infty \exp[2 \gamma (\ell + 1)^2 - 2 \ell^2]\,,
\end{align*}
where the last step is by Hoeffding's inequality. The above sum can be easily bounded for any $\gamma < 1$. In particular, for $\gamma = 1 / 8$, the bound is
\begin{align*}
  \sum_{\ell = 0}^\infty \exp\left[\frac{(\ell + 1)^2}{4} - 2 \ell^2\right]
  \leq e^{\frac{1}{4}} + \sum_{\ell = 1}^\infty 2^{- \ell}
  \leq e\,.
\end{align*}
Now we combine all above inequalities and get that term $2$ is bounded as
\begin{align}
  \sum_{t = K + 1}^n \prob{I_t = i, T_{i, t - 1} > m, T_{1, t - 1} \leq m}
  \leq 2 e m + n^{1 - 8 \gamma} + 1\,.
  \label{eq:term 2}
\end{align}
Finally, we chain \eqref{eq:term 1}, \eqref{eq:term 3}, and \eqref{eq:term 2}; and use that $m \leq 16 \Delta_i^{-2} \log n + 1$.
\end{proof}

\clearpage

\section{Analysis of \cosoftelim}
\label{sec:cosoftelim analysis}

We sketch the proof of \cosoftelim in \cref{sec:cosoftelim sketch}. Then we bound its regret in \cref{sec:cosoftelim regret bound}.

\subsection{Sketch}
\label{sec:cosoftelim sketch}

We rely on an equivalence of our problem and a linear bandit with $K d$ features. Therefore, we can build on two results from the analysis of \linucb \citep{abbasi-yadkori11improved}, the concentration of the MLE (\cref{lem:concentration} in \cref{sec:cosoftelim regret bound}) and that the sum of squared confidence widths of pulled arms is $\tilde{O}(K d)$ (\cref{lem:sum of squared confidence widths} in \cref{sec:cosoftelim regret bound}).

The last and most novel part of the analysis is an upper bound on the expected regret in round $t$ by the expected confidence widths of pulled arms. This bound is conditioned on history $H_t$ and relies heavily on the properties of softmax, in \eqref{eq:softmax} and \eqref{eq:cosoftelim score}. We argue along the following line. Let
\begin{align*}
  \Delta_{i, t}
  = \max_{j \in [K]} x_t\T \theta_{j, *} - x_t\T \theta_{i, *}
\end{align*}
be the gap of arm $i$ in round $t$. First, we show for any arm $i$ and \say{undersampled} arm $j$, an arm with a lot of uncertainty in direction $x_t$, that
\begin{align}
  \Delta_{i, t} \pi_{i, t}
  \leq 3 c_1 \sqrt{\log n} \normw{x_t}{G_{i, t - 1}^{-1}} \pi_{i, t} +
  12 c_1 \sqrt{\log n} \normw{x_t}{G_{j, t - 1}^{-1}} \pi_{i, t} +
  \Delta_{\max} n^{-1}\,.
  \label{eq:gap upper bound}
\end{align}
Roughly speaking, the bound is proved as follows. If arm $i$ is \say{undersampled}, its gap is bounded by its confidence width; and thus term $1$. If arm $i$ is \say{oversampled} and $\hat{\mu}_{j, t}$ is sufficiently high, arm $i$ is unlikely to be pulled; and thus term $3$. In all other cases, the gap of arm $i$ can be bounded by the confidence width of arm $j$; and thus term $2$. This is proved in \cref{lem:gap upper bound} below.

Second, we choose an appropriate \say{undersampled} arm $j$ to get an upper bound on the second $\pi_{i, t}$ in \eqref{eq:gap upper bound} using $\pi_{j, t}$. Finally, we sum up the upper bounds over all arms $i$ and get
\begin{align*}
  \Et{\Delta_{I_t, t}}
  \leq (12 K e + 3) c_1 \sqrt{\log n} \, \Et{\normw{x_t}{G_{I_t, t - 1}^{-1}}} +
  K \Delta_{\max} \delta\,.
\end{align*}
This is proved in \cref{lem:per-round regret} below.

\subsection{Regret Bound}
\label{sec:cosoftelim regret bound}

Our proof relies on equivalence between our problem and a linear bandit with $K d$ features, which we discuss next. Let $u_{i, t} \in \realset^{K d}$ be a context vector where $x_t$ is at entries $d (i - 1) + 1, \dots, d i$ and all remaining entries are zeros. Let the joint parameter vector be $\theta_* = \theta_{1, *} \oplus \dots \oplus \theta_{K, *} \in \realset^{K d}$ and the joint estimated vector be $\hat{\theta}_t = \hat{\theta}_{1, t} \oplus \dots \oplus \hat{\theta}_{K, t} \in \realset^{K d}$. Let $M_t \in \realset^{K d \times K d}$ be a block-diagonal matrix with blocks $G_{1, t}, \dots, G_{K, t}$. Then, for any arm $i$ in round $t$,
\begin{align*}
  u_{i, t}\T \theta_*
  = x_t\T \theta_{i, *}\,, \quad
  u_{i, t}\T \hat{\theta}_{t - 1}
  = x_t\T \hat{\theta}_{i, t - 1}\,, \quad
  \normw{u_{i, t}}{M_{t - 1}^{-1}}
  = \normw{x_t}{G_{i, t - 1}^{-1}}\,.
\end{align*}
The equivalence is useful because it allows us to reuse two existing results from the analysis of \linucb \citep{abbasi-yadkori11improved}, the concentration of the MLE (\cref{lem:concentration} in \cref{sec:cosoftelim regret bound}) and that the sum of squared confidence widths of pulled arms is $\tilde{O}(K d)$ (\cref{lem:sum of squared confidence widths} in \cref{sec:cosoftelim regret bound}).

The concentration part is solved as follows. Let
\begin{align}
  E_{1, t}
  = \set{\forall i \in [K]:
  |x_t\T \hat{\theta}_{i, t - 1} - x_t\T \theta_{i, *}| \leq
  c_1 \normw{x_t}{G_{i, t - 1}^{-1}}}
  \label{eq:concentration}
\end{align}
be the event that all estimated arm means in round $t$ are \say{close} to their actual means. Let $E_1 = \bigcap_{t = 1}^n E_{1, t}$ and $\bar{E}_1$ be its complement. The next lemma shows how to choose $c_1$ in \eqref{eq:concentration} such that event $\bar{E}_1$ is unlikely.

\begin{lemma}
\label{lem:concentration} For any $\sigma, \lambda, \delta > 0$, and
\begin{align*}
  c_1
  = \sigma \sqrt{K d \log \left(\frac{1 + n L^2 / (K d \lambda)}{\delta}\right)} +
  \lambda^\frac{1}{2} L_*\,,
\end{align*}
event $E_1$ occurs with probability at least $1 - \delta$.
\end{lemma}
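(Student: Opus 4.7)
The plan is to leverage the equivalence, stated just before the lemma, between our contextual bandit and a linear bandit with a $Kd$-dimensional parameter $\theta_* = \theta_{1,*} \oplus \dots \oplus \theta_{K,*}$, block-diagonal sample covariance $M_{t-1} = \mathrm{diag}(G_{1,t-1},\dots,G_{K,t-1})$, padded context vectors $u_{i,t} \in \realset^{Kd}$, and joint estimate $\hat{\theta}_{t-1} = \hat{\theta}_{1,t-1} \oplus \dots \oplus \hat{\theta}_{K,t-1}$. Under this recoding, $x_t\T\theta_{i,*} = u_{i,t}\T\theta_*$, $x_t\T\hat{\theta}_{i,t-1} = u_{i,t}\T\hat{\theta}_{t-1}$, and $\normw{x_t}{G_{i,t-1}^{-1}} = \normw{u_{i,t}}{M_{t-1}^{-1}}$, so the event $E_1$ can be rewritten entirely in terms of quantities in this $Kd$-dimensional linear model with $\sigma^2$-sub-Gaussian noise and $\ell_2$-regularized least squares estimator.

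First I would apply Cauchy--Schwarz in the $M_{t-1}$ norm to peel off the context from the parameter error:
\begin{align*}
  \abs{x_t\T\hat{\theta}_{i,t-1} - x_t\T\theta_{i,*}}
  = \abs{u_{i,t}\T(\hat{\theta}_{t-1} - \theta_*)}
  \leq \normw{x_t}{G_{i,t-1}^{-1}} \normw{\hat{\theta}_{t-1} - \theta_*}{M_{t-1}}.
\end{align*}
Hence it suffices to show that, with probability at least $1-\delta$, $\normw{\hat{\theta}_{t-1} - \theta_*}{M_{t-1}} \leq c_1$ holds \emph{simultaneously} for all $t \in [n]$; the uniformity in $t$ is what makes the union bound over rounds unnecessary.

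The second step is to invoke the self-normalized martingale concentration inequality of \citet{abbasi-yadkori11improved} (their Theorem 2 applied to the $Kd$-dimensional regression), which gives, with probability at least $1-\delta$ and uniformly over $t \geq 1$,
\begin{align*}
  \normw{\hat{\theta}_{t-1} - \theta_*}{M_{t-1}}
  \leq \sigma \sqrt{2 \log(1/\delta) + \log(\det(M_{t-1}) / \lambda^{Kd})} + \sqrt{\lambda}\,\normw{\theta_*}{2}.
\end{align*}
Since $\normw{u_{i,t}}{2} = \normw{x_t}{2} \leq L$, a standard determinant--trace argument (AM--GM on the eigenvalues of $M_{t-1}$, using that the trace is at most $Kd\lambda + (t-1)L^2$) yields $\det(M_{t-1}) \leq (\lambda + nL^2/(Kd))^{Kd}$, so $\log(\det(M_{t-1})/\lambda^{Kd}) \leq Kd \log(1 + nL^2/(Kd\lambda))$. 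Substituting and absorbing the $2\log(1/\delta)$ term into the dominant $Kd$ factor gives exactly the stated $c_1$, and combining with the Cauchy--Schwarz step establishes $E_1$ on the same high-probability event.

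The main obstacle is purely bookkeeping: matching the constants in the stated $c_1$ to the raw self-normalized bound (which carries an explicit $\sqrt{2}$ and an additive $2\log(1/\delta)$ that must be merged inside the $Kd\log$), and verifying that the padded vectors $u_{i,t}$ genuinely satisfy the norm bound used in the determinant--trace inequality. There is no real conceptual difficulty, since the block-diagonal equivalence turns the problem into a textbook application of the self-normalized concentration bound.
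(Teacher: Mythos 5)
Your proposal is correct and follows essentially the same route as the paper's proof: recast the problem as a $Kd$-dimensional linear bandit via the block-diagonal equivalence, apply Cauchy--Schwarz in the $M_{t-1}$ norm, and invoke Theorem 2 of \citet{abbasi-yadkori11improved} for the uniform-in-$t$ bound on $\normw{\hat{\theta}_{t-1} - \theta_*}{M_{t-1}}$. You actually supply more detail than the paper does (the determinant--trace step and the observation that $Kd\log(1/\delta) \geq 2\log(1/\delta)$ absorbs the additive term), which the paper leaves implicit by citing the theorem directly.
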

\begin{proof}
Fix arm $i$ and round $t$. By the Cauchy-Schwarz inequality,
\begin{align*}
  u_{i, t}\T \hat{\theta}_{t - 1} - u_{i, t}\T \theta_*
  = u_{i, t}\T M_{t - 1}^{- \frac{1}{2}}
  M_{t - 1}^\frac{1}{2} (\hat{\theta}_{t - 1} - \theta_*)
  \leq \|\hat{\theta}_{t - 1} - \theta_*\|_{M_{t - 1}}
  \normw{u_{i, t}}{M_{t - 1}^{-1}}\,.
\end{align*}
By Theorem 2 of \citet{abbasi-yadkori11improved}, $\|\hat{\theta}_{t - 1} - \theta_*\|_{M_{t - 1}} \leq c_1$ holds jointly in all rounds $t \in [n]$ with probability of at least $1 - \delta$. This concludes the proof.
\end{proof}

We also use Lemma 11 of \citet{abbasi-yadkori11improved}, which bounds the sum of squared confidence widths of pulled arms.

\begin{lemma}
\label{lem:sum of squared confidence widths} For any $\lambda \geq L^2$, we have
$\displaystyle \sum_{t = 1}^n \normw{x_t}{G_{I_t, t - 1}^{-1}}^2 \leq c_2 = 2 K d \log(1 + n L^2 / (K d \lambda))$.
\end{lemma}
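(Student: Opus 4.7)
The plan is to lift the problem to the $Kd$-dimensional linear bandit via the embedding already spelled out just before the lemma, and then apply the elliptical potential lemma to the lifted quantities. Recall that $u_{i,t}\in\realset^{Kd}$ places $x_t$ in the $i$-th block and zeros elsewhere, and $M_t$ is block-diagonal with blocks $G_{1,t},\dots,G_{K,t}$, so in particular
\begin{align*}
  \normw{x_t}{G_{I_t, t-1}^{-1}}^2 = \normw{u_{I_t, t}}{M_{t-1}^{-1}}^2,
  \qquad
  M_t = M_{t-1} + u_{I_t,t} u_{I_t,t}\T .
\end{align*}
Thus it suffices to show $\sum_{t=1}^n \normw{u_{I_t,t}}{M_{t-1}^{-1}}^2 \leq 2Kd \log\bigl(1 + nL^2/(Kd\lambda)\bigr)$, and the statement becomes exactly the standard elliptical-potential bound in dimension $Kd$.

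The core step is the classical identity $\det(M_t) = \det(M_{t-1})\,\bigl(1 + \normw{u_{I_t,t}}{M_{t-1}^{-1}}^2\bigr)$, combined with the elementary inequality $x \leq 2 \log(1+x)$ valid for $x \in [0,1]$. Because $\normw{u_{I_t,t}}{M_{t-1}^{-1}}^2 \leq \lambda^{-1}\normw{u_{I_t,t}}{2}^2 = \lambda^{-1}\normw{x_t}{2}^2 \leq L^2/\lambda \leq 1$ (using $\lambda \geq L^2$), we can drop the usual $\min(\cdot,1)$ truncation and obtain
\begin{align*}
  \sum_{t=1}^n \normw{u_{I_t,t}}{M_{t-1}^{-1}}^2
  \;\leq\; 2 \sum_{t=1}^n \log\!\bigl(1 + \normw{u_{I_t,t}}{M_{t-1}^{-1}}^2\bigr)
  \;=\; 2 \log \frac{\det(M_n)}{\det(\lambda I_{Kd})}.
\end{align*}

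It remains to bound $\det(M_n)$. Since $\mathrm{tr}(M_n) = Kd\,\lambda + \sum_{t=1}^n \normw{u_{I_t,t}}{2}^2 \leq Kd\,\lambda + nL^2$, AM-GM on the eigenvalues of the PSD matrix $M_n$ gives
\begin{align*}
  \det(M_n) \;\leq\; \left(\frac{\mathrm{tr}(M_n)}{Kd}\right)^{\!Kd}
  \;\leq\; \left(\lambda + \frac{nL^2}{Kd}\right)^{\!Kd}.
\end{align*}
Dividing by $\det(\lambda I_{Kd}) = \lambda^{Kd}$ and taking logs yields $2\log\bigl(\det(M_n)/\det(\lambda I_{Kd})\bigr) \leq 2 Kd\,\log\bigl(1 + nL^2/(Kd\lambda)\bigr) = c_2$, which is exactly the claim.

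No step is really the bottleneck here since the whole argument is an off-the-shelf instance of Lemma 11 of \citet{abbasi-yadkori11improved} in the lifted dimension; the only thing to check carefully is that the embedding preserves both the quadratic form and the rank-one update structure of the covariance, which it does by the block-diagonal construction. The hypothesis $\lambda \geq L^2$ is what lets us avoid the $\min(\cdot,1)$ clipping and land on the clean constant $2Kd$ in the final bound.
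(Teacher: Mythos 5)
Your proof is correct and follows essentially the same route the paper intends: the paper simply invokes Lemma 11 of \citet{abbasi-yadkori11improved} in the lifted $Kd$-dimensional linear bandit defined just before the lemma, and your argument is precisely the standard elliptical-potential proof of that result (determinant telescoping, the bound $x \leq 2\log(1+x)$ enabled by $\lambda \geq L^2$, and AM--GM on the eigenvalues of $M_n$), carried out explicitly in the block-diagonal embedding.
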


Let $\Delta_{i, t} = \max_{j \in [K]} x_t\T \theta_{j, *} - x_t\T \theta_{i, *}$ be the gap of arm $i$ in round $t$. Now we are ready to prove our main result.

\cosoftelimregretbound*
\begin{proof}
First, we split the $n$-round regret by event $E_1$ and apply \cref{lem:concentration} with $\delta = 1 / n$,
\begin{align*}
  R(n, P)
  & = \sum_{t = 1}^n \E{\Delta_{I_t, t}}
  \leq \sum_{t = 1}^n \E{\Delta_{I_t, t} \I{E_{1, t}}} +
  n \Delta_{\max} \prob{\bar{E}_1} \\
  & \leq \sum_{t = 1}^n \E{\Et{\Delta_{I_t, t}} \I{E_{1, t}}} + \Delta_{\max}\,.
\end{align*}
Second, we bound $\Et{\Delta_{I_t, t}} \I{E_{1, t}}$ from above using \cref{lem:per-round regret} with $\delta = 1 / n$ and get
\begin{align*}
  R(n, P)
  \leq (12 K e + 3) c_1 \sqrt{\log n} \,
  \E{\sum_{t = 1}^n \normw{x_t}{G_{I_t, t - 1}^{-1}}} + (K + 1) \Delta_{\max}\,.
\end{align*}
By the Cauchy-Schwarz inequality and \cref{lem:sum of squared confidence widths},
\begin{align*}
  \sum_{t = 1}^n \normw{x_t}{G_{I_t, t - 1}^{-1}}
  \leq \sqrt{n \sum_{t = 1}^n \normw{x_t}{G_{I_t, t - 1}^{-1}}^2}
  \leq \sqrt{c_2 n}\,.
\end{align*}
This concludes the proof.
\end{proof}

Our key lemmas are stated and proved below. We denote the mean reward of arm $i$ in round $t$ by $\mu_{i, t} = x_t\T \theta_{i, *}$. Let $i_{*, t} = \argmax_{i \in [K]} \mu_{i, t}$ be the optimal arm in round $t$ and $\mu_{*, t}$ be its mean reward. The key concepts in our analysis are undersampled and oversampled arms. We say that arm $i$ is \emph{undersampled} in round $t$ when $\displaystyle c_1 \normw{x_t}{G_{i, t - 1}^{-1}} \geq \frac{\Delta_{i, t}}{3 \sqrt{\log(1 / \delta)}}$. Otherwise the arm is \emph{oversampled}. When $\log(1 / \delta) \geq 1$, an oversampled arm $i$ satisfies $|\hat{\mu}_{i, t} - \mu_{i, t}| < \Delta_{i, t} / 3$ on event $E_{1, t}$. To simplify notation, we drop subindexing by $t$ in the proofs of the lemmas.

\begin{lemma}
\label{lem:gap upper bound} Fix history $H_t$ and assume that event $E_{1, t}$ occurs. Let $\gamma = c_1^{-2}$ and $\delta \in (0, 1]$ be chosen such that $\log(1 / \delta) \geq 1$. Then for any arm $i$ and undersampled arm $j$,
\begin{align*}
  \Delta_{i, t} p_{i, t}
  \leq 3 c_1 \sqrt{\log(1 / \delta)} \normw{x_t}{G_{i, t - 1}^{-1}} p_{i, t} +
  12 c_1 \sqrt{\log(1 / \delta)} \normw{x_t}{G_{j, t - 1}^{-1}} p_{i, t} +
  \Delta_{\max} \delta\,.
\end{align*}
\end{lemma}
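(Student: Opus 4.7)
The plan is a case analysis on the sampling status of arm $i$, with the three cases lined up with the three terms on the right-hand side. The definition of ``undersampled'' is the only way to bound a gap directly by a confidence width, so in every other case I must reroute through the softmax or through the undersampled arm $j$.

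First I would dispatch the case that arm $i$ is itself undersampled: the definition instantly gives $\Delta_{i,t} \le 3 c_1 \sqrt{\log(1/\delta)} \, \normw{x_t}{G_{i,t-1}^{-1}}$, and multiplying by $p_{i,t}$ produces the first RHS term. Otherwise arm $i$ is oversampled, and on $E_{1,t}$ with $\log(1/\delta) \ge 1$ this forces $|\hat{\mu}_{i,t} - \mu_{i,t}| < \Delta_{i,t}/3$. I would then split on whether $\hat{\mu}_{j,t} \ge \mu_{*,t} - \Delta_{i,t}/3$.

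In the ``large $\hat{\mu}_{j,t}$'' sub-case I combine that inequality with the oversampled bound on $\hat{\mu}_{i,t}$ to obtain an empirical gap $\hat{\mu}_{\max,t} - \hat{\mu}_{i,t} \ge \Delta_{i,t}/3$. The oversampled assumption also gives $\normw{x_t}{G_{i,t-1}^{-1}} < \Delta_{i,t}/(3 c_1 \sqrt{\log(1/\delta)})$, so plugging into \eqref{eq:cosoftelim score} with $\gamma = c_1^{-2}$ yields $S_{i,t} \ge \log(1/\delta)$. Since the softmax denominator is at least $1$ (because the argmax arm has $S = 0$), $p_{i,t} \le e^{-S_{i,t}} \le \delta$ and $\Delta_{i,t} p_{i,t} \le \Delta_{\max} \delta$, which is the third term. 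In the ``small $\hat{\mu}_{j,t}$'' sub-case I would apply $E_{1,t}$ to arm $j$ to get $\Delta_{j,t} > \Delta_{i,t}/3 - c_1 \normw{x_t}{G_{j,t-1}^{-1}}$, then use that $j$ is undersampled to upper bound $\Delta_{j,t}$ by $3 c_1 \sqrt{\log(1/\delta)} \normw{x_t}{G_{j,t-1}^{-1}}$; rearranging and absorbing the subtracted $c_1 \normw{x_t}{G_{j,t-1}^{-1}}$ term using $\sqrt{\log(1/\delta)} \ge 1$ would give $\Delta_{i,t} \le 12 c_1 \sqrt{\log(1/\delta)} \normw{x_t}{G_{j,t-1}^{-1}}$, which becomes the middle term after multiplying by $p_{i,t}$.

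The main obstacle is structural rather than computational: choosing the right intermediate threshold $\mu_{*,t} - \Delta_{i,t}/3$ so that both sub-cases snap into place, and noticing that the softmax denominator is at least $1$ because $S_{\max,t} = 0$ --- without the latter the step $p_{i,t} \le e^{-S_{i,t}}$ would not be available. The somewhat loose constant $12$ is driven by the need for $\sqrt{\log(1/\delta)} \ge 1$ to absorb a stray $c_1 \normw{x_t}{G_{j,t-1}^{-1}}$ slack; I expect tightening it to be possible but not worth the complication, since only the scaling in $c_1$, $\log(1/\delta)$, and the confidence widths matters for the final regret bound in \cref{thm:cosoftelim regret bound}.
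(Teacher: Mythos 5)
Your proposal is correct and follows essentially the same argument as the paper's proof: the same case analysis on whether arm $i$ is undersampled, the same threshold $\hat{\mu}_{j,t} \gtrless \mu_{*,t} - \Delta_{i,t}/3$, the same use of the unit softmax denominator to get $p_{i,t} \leq e^{-S_{i,t}} \leq \delta$, and the same absorption of the stray $c_1 \normw{x_t}{G_{j,t-1}^{-1}}$ via $\sqrt{\log(1/\delta)} \geq 1$. The only (harmless) difference is that the paper inserts an extra sub-case $\Delta_{j,t} \geq \Delta_{i,t}/4$ handled by $\Delta_{i,t} \leq 4\Delta_{j,t}$, which your three-case split renders unnecessary.
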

\begin{proof}
We consider four cases. Case $1$ is that arm $i$ is undersampled. Then trivially
\begin{align*}
  \Delta_i
  \leq 3 c_1 \sqrt{\log(1 / \delta)} \normw{x}{G_i^{-1}}\,.
\end{align*}
Case $2$ is that arm $i$ is oversampled and the gap of arm $j$ is \say{large}, $\Delta_j \geq \Delta_i / 4$. Then
\begin{align*}
  \Delta_i
  \leq 4 \Delta_j
  \leq 12 c_1 \sqrt{\log(1 / \delta)} \normw{x}{G_j^{-1}}\,.
\end{align*}
Case $3$ is that arm $i$ is oversampled; the gap of arm $j$ is \say{small}, $\Delta_j < \Delta_i / 4$; and $\hat{\mu}_j \geq \mu_* - \Delta_i / 3$. In this case, arm $i$ is unlikely to be pulled for $\gamma = c_1^{-2}$,
\begin{align*}
  p_i
  \leq \exp\left[- \gamma \frac{(\hat{\mu}_{\max} - \hat{\mu}_i)^2}
  {\normw{x}{G_i^{-1}}^2}\right]
  \leq \exp\left[- \gamma \frac{(\hat{\mu}_j - \hat{\mu}_i)^2}
  {\normw{x}{G_i^{-1}}^2}\right]
  \leq \exp\left[- \gamma \frac{\Delta_i^2}{9}
  \frac{9 c_1^2 \log(1 / \delta)}{\Delta_i^2}\right]
  = \delta\,.
\end{align*}
The first inequality holds because the denominator in $p_i$ is at least $1$. The second inequality follows from $\hat{\mu}_{\max} - \hat{\mu}_i \geq \hat{\mu}_j - \hat{\mu}_i$, which holds from our assumption on $\hat{\mu}_j$ and that arm $i$ is oversampled. The last inequality follows from $\hat{\mu}_j - \hat{\mu}_i \geq \Delta_i / 3$ and that arm $i$ is oversampled. Since $\Delta_i \leq \Delta_{\max}$, we have that $\Delta_i p_i \leq \Delta_{\max} \delta$.

Case $4$ is that arm $i$ is oversampled, $\Delta_j < \Delta_i / 4$, and $\hat{\mu}_j < \mu_* - \Delta_i / 3$. Then
\begin{align*}
  \Delta_i
  \leq 3 (\mu_* - \hat{\mu}_j)
  = 3 (\mu_* - \mu_j + \mu_j - \hat{\mu}_j)
  \leq 3 (\Delta_j + c_1 \normw{x}{G_j^{-1}})
  \leq 12 c_1 \sqrt{\log(1 / \delta)} \normw{x}{G_j^{-1}}\,.
\end{align*}
Now we combine all four cases and get our claim.
\end{proof}

The above lemma is critical to prove \cref{lem:per-round regret} below, which bounds the expected regret in round $t$ by the expected confidence widths of pulled arms.

\begin{lemma}
\label{lem:per-round regret} Fix history $H_t$ and assume that event $E_{1, t}$ occurs. Let $\gamma = c_1^{-2}$ and $\delta \in (0, 1]$ be chosen such that $\log(1 / \delta) \geq 1$. Then
\begin{align*}
  \Et{\Delta_{I_t, t}}
  \leq (12 K e + 3) c_1 \sqrt{\log(1 / \delta)} \, \Et{\normw{x_t}{G_{I_t, t - 1}^{-1}}} +
  K \Delta_{\max} \delta\,.
\end{align*}
\end{lemma}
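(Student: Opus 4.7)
The plan is to apply \cref{lem:gap upper bound} to each arm $i\in[K]$ with a common undersampled arm $j^\star=j^\star_t$ chosen independently of $i$, and then sum the resulting $K$ inequalities. Because $\sum_i p_{i,t}=1$, the sum collapses to
\begin{align*}
  \Et{\Delta_{I_t,t}}
  \le 3 c_1\sqrt{\log(1/\delta)}\,\Et{\normw{x_t}{G_{I_t,t-1}^{-1}}}
  + 12 c_1\sqrt{\log(1/\delta)}\,\normw{x_t}{G_{j^\star,t-1}^{-1}}
  + K\Delta_{\max}\delta\,.
\end{align*}
The main remaining task is to show $\normw{x_t}{G_{j^\star,t-1}^{-1}}\le eK\,\Et{\normw{x_t}{G_{I_t,t-1}^{-1}}}$, which follows once $p_{j^\star,t}\ge 1/(eK)$, since $\Et{\normw{x_t}{G_{I_t,t-1}^{-1}}}\ge p_{j^\star,t}\normw{x_t}{G_{j^\star,t-1}^{-1}}$ as a single-arm lower bound on an expectation over $I_t$.

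To secure $p_{j^\star,t}\ge 1/(eK)$ I will exploit the softmax form of $\pi_{\cdot,t}$: every $S_{k,t}\ge 0$, so the normalising constant satisfies $\sum_k\exp(-S_{k,t})\le K$, and hence any arm with score at most $1$ has pull probability at least $e^{-1}/K$. The choice of $j^\star$ will depend on $k_0:=\argmax_k \hat\mu_{k,t}$. If $k_0$ is undersampled, take $j^\star=k_0$; then $\hat\mu_{\max,t}-\hat\mu_{j^\star,t}=0$ and $S_{j^\star,t}=0$. Otherwise take $j^\star=i_{*,t}$, which is always undersampled because $\Delta_{i_*,t}=0$ makes the undersampling inequality trivially hold.

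The delicate step, and the main obstacle, is bounding $S_{i_*,t}\le 1$ in this second case. Since $k_0$ is then oversampled and necessarily non-optimal, on the concentration event $E_{1,t}$ the upper confidence bound combined with the oversampling inequality yields
\begin{align*}
  \hat\mu_{k_0,t}\le \mu_{k_0,t}+c_1\normw{x_t}{G_{k_0,t-1}^{-1}}
  < \mu_{k_0,t}+\tfrac{1}{3}\Delta_{k_0,t}
  = \mu_{*,t}-\tfrac{2}{3}\Delta_{k_0,t}\,,
\end{align*}
where I use $\log(1/\delta)\ge 1$. Simultaneously, the lower confidence bound gives $\hat\mu_{i_*,t}\ge \mu_{*,t}-c_1\normw{x_t}{G_{i_*,t-1}^{-1}}$. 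Subtracting and discarding the non-positive $-\tfrac{2}{3}\Delta_{k_0,t}$ yields $\hat\mu_{\max,t}-\hat\mu_{i_*,t}\le c_1\normw{x_t}{G_{i_*,t-1}^{-1}}$; squaring and dividing by $(c_1\normw{x_t}{G_{i_*,t-1}^{-1}})^2$ together with $\gamma=c_1^{-2}$ then gives $S_{i_*,t}\le 1$.

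Combining the two cases yields $p_{j^\star,t}\ge 1/(eK)$ unconditionally on $E_{1,t}$, which converts the middle term of the opening display into $12eK\cdot c_1\sqrt{\log(1/\delta)}\,\Et{\normw{x_t}{G_{I_t,t-1}^{-1}}}$ and produces the claimed constant $12eK+3$. The sole subtlety throughout is that $j^\star$ must be chosen to have simultaneously (i) moderate empirical gap to the apparent winner, so that its softmax weight is not exponentially suppressed, and (ii) the undersampling property required by \cref{lem:gap upper bound}; it is the optimal arm that serves both roles exactly when the empirical winner fails to.
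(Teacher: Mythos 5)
Your proposal is correct and follows essentially the same route as the paper's proof: the same choice of the undersampled reference arm (the empirical maximizer when it is undersampled, and otherwise the optimal arm, which is undersampled by definition), the same use of event $E_{1,t}$ together with the oversampling inequality to show the reference arm's score is at most $1$ and hence its softmax probability is at least $1/(eK)$, and the same final summation yielding the constant $12Ke+3$. The only difference is a trivial rearrangement — you sum \cref{lem:gap upper bound} over $i$ first and then absorb the $j^\star$ term into $\Et{\normw{x_t}{G_{I_t,t-1}^{-1}}}$ via $p_{j^\star,t}\ge 1/(eK)$, whereas the paper writes $p_i \le Ke\,p_i p_{j}$ before summing — which changes nothing of substance.
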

\begin{proof}
The proof has two parts. First, we bound $p_i$ from above using pulled undersampled arms. We consider two cases. Case $1$ is that the best empirical arm $i_{\max} = \argmax_{i \in [K]} \hat{\mu}_i$ is undersampled. Since $i_{\max}$ has the highest empirical mean, $p_{i_{\max}} \geq 1 / K$ and we have $p_i \leq K p_i p_j$ for $j = i_{\max}$.

Case $2$ is that arm $i_{\max}$ is oversampled. Because of that, $\hat{\mu}_{\max} \leq \mu_*$. Since the optimal arm $i_*$ is undersampled by definition, we have for $j = i_*$ that
\begin{align*}
  p_i
  = \frac{1}{p_j} p_i p_j
  \leq K \exp\left[\gamma \frac{(\hat{\mu}_{\max} - \hat{\mu}_j)^2}
  {\normw{x}{G_j^{-1}}^2}\right] p_i p_j
  \leq K \exp\left[\gamma \frac{(\mu_j - \hat{\mu}_j)^2}
  {\normw{x}{G_j^{-1}}^2}\right] p_i p_j
  \leq K e p_i p_j\,.
\end{align*}
By \cref{lem:gap upper bound} and from above, there exists an undersampled arm $j$ such that for any arm $i$,
\begin{align*}
  \Delta_i p_i
  \leq 3 c_1 \sqrt{\log(1 / \delta)} \normw{x}{G_i^{-1}} p_i +
  12 K e c_1 \sqrt{\log(1 / \delta)} \normw{x}{G_j^{-1}} p_i p_j +
  \Delta_{\max} \delta\,.
\end{align*}
Finally, we sum over all arms $i$ and get
\begin{align*}
  \Et{\Delta_{I_t}}
  & = \sum_{i = 1}^K \Delta_i p_i \\
  & \leq 3 c_1 \sqrt{\log(1 / \delta)}
  \left(\sum_{i = 1}^K \normw{x}{G_i^{-1}} p_i\right) +
  12 K e c_1 \sqrt{\log(1 / \delta)} \normw{x}{G_j^{-1}}
  \sum_{i = 1}^K p_i p_j +
  K \Delta_{\max} \delta \\
  & \leq 3 c_1 \sqrt{\log(1 / \delta)}
  \left(\sum_{i = 1}^K \normw{x}{G_i^{-1}} p_i\right) +
  12 K e c_1 \sqrt{\log(1 / \delta)} \normw{x}{G_j^{-1}} p_j +
  K \Delta_{\max} \delta \\
  & \leq (12 K e + 3) c_1 \sqrt{\log(1 / \delta)}
  \left(\sum_{i = 1}^K \normw{x}{G_i^{-1}} p_i\right) +
  K \Delta_{\max} \delta \\
  & = (12 K e + 3) c_1 \sqrt{\log(1 / \delta)} \, \Et{\normw{x}{G_{I_t}^{-1}}} +
  K \Delta_{\max} \delta\,.
\end{align*}
This concludes the proof.
\end{proof}

\clearpage

\section{Supplementary Experiments}
\label{sec:supplementary experiments}

\subsection{Subspace Recovery}

\begin{figure*}[t]
  \centering
  \includegraphics[width=1.35in]{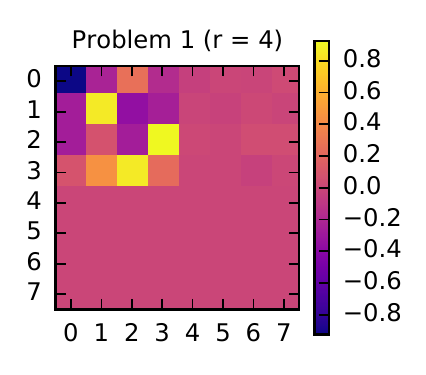}
  \includegraphics[width=1.35in]{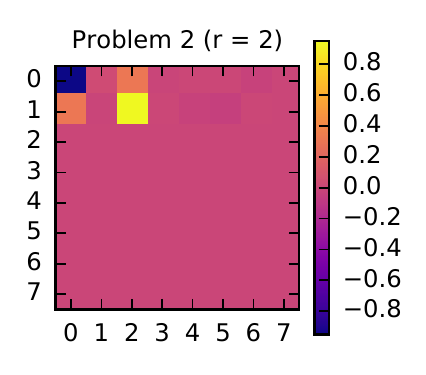}
  \includegraphics[width=1.35in]{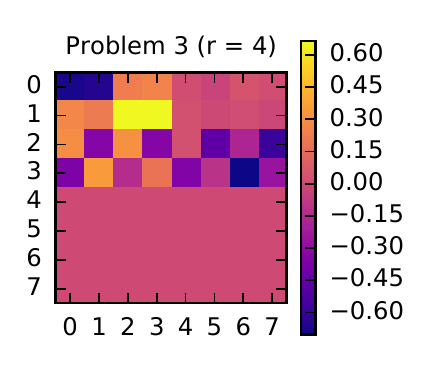}
  \includegraphics[width=1.35in]{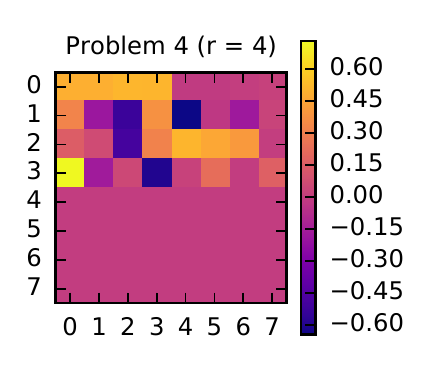} \\
  \vspace{-0.1in}
  \caption{Projection matrices $W$ estimated by MOM.}
  \label{fig:mom projections}
\end{figure*}

We use the method-of-moments (MOM) estimator for meta learning in linear models (Algorithm 1 of \citet{tripuraneni20provable}) to learn projection $W$. In particular, we sample $n = 100\, 000$ i.i.d.\ pairs $(x_t, Y_t)$ as
\begin{align*}
  x_t
  \sim \mathcal{N}(\mu_x, \Sigma_x)\,, \quad
  \theta_*
  \sim \mathcal{N}(\mu_\theta, \Sigma_\theta)\,, \quad
  Z_t
  \sim \mathcal{N}(0, \sigma^2)\,, \quad
  Y_t
  = x_t\T \theta_* + Z_t\,;
\end{align*}
and then estimate the subspace by applying PCA to $\sum_{t = 1}^n Y_t^2 x_t x_t\T$. The learned projection matrices $W$, together with the dimensionality of subspace $r$, are reported in \cref{fig:mom projections}. We hand-tuned $r$ to get good empirical performance in \cref{sec:subspace meta-learning experiment} and report these values of $r$ in \cref{fig:mom projections}. Note that \cosoftelim does not require such tuning.

\subsection{Real-World Experiments}

\begin{figure*}[t]
  \centering
  \includegraphics[width=1.8in]{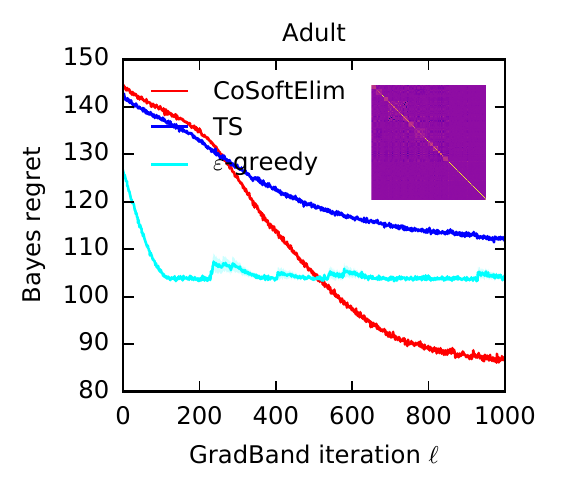}
  \includegraphics[width=1.8in]{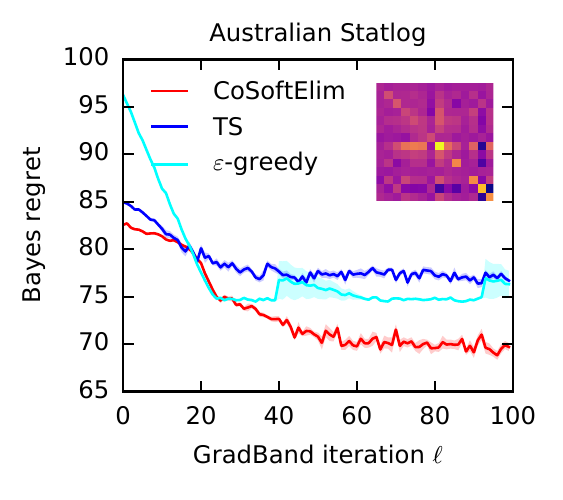}
  \includegraphics[width=1.8in]{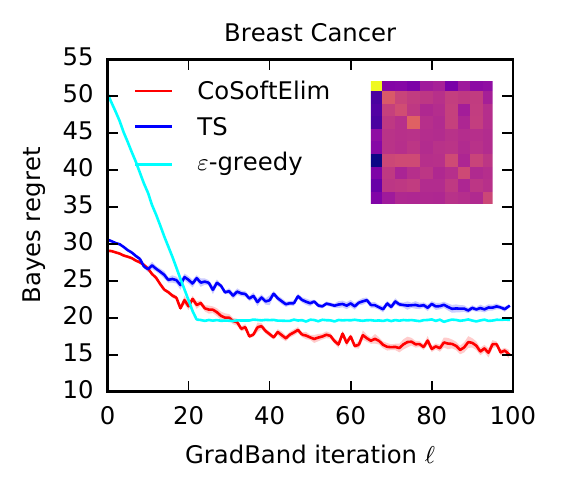}
  \includegraphics[width=1.8in]{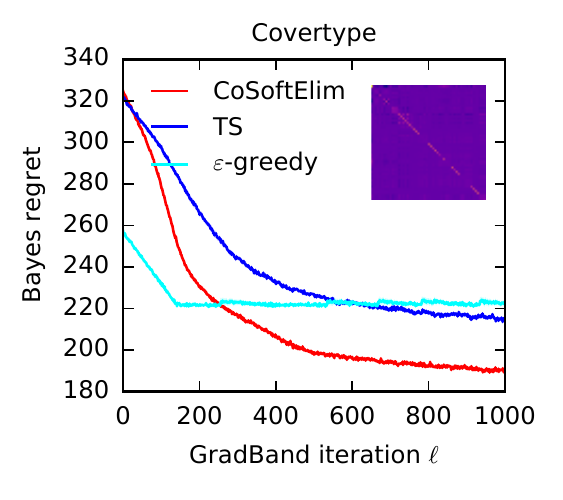}
  \includegraphics[width=1.8in]{Figures/Iris.pdf}
  \includegraphics[width=1.8in]{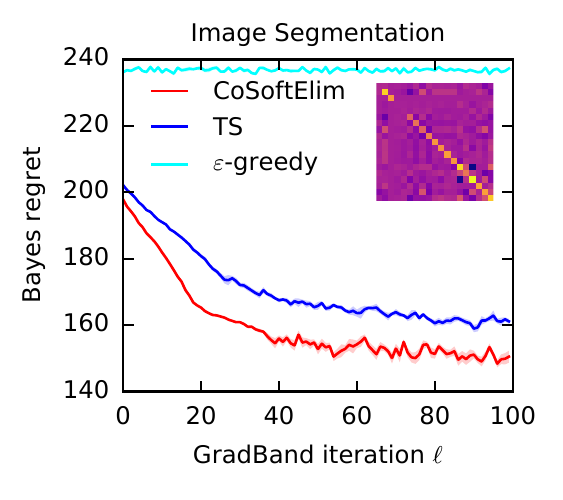}
  \includegraphics[width=1.8in]{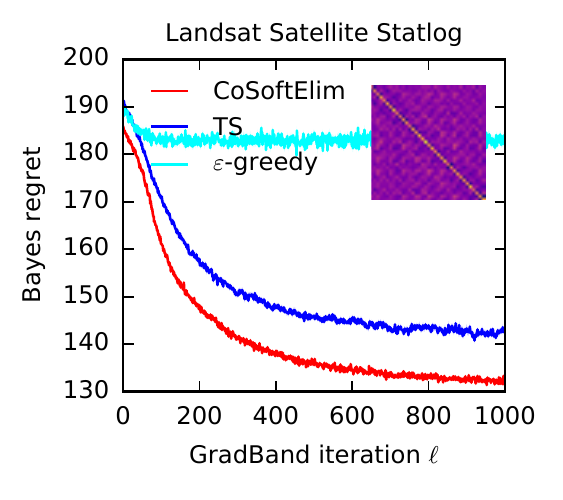}
  \includegraphics[width=1.8in]{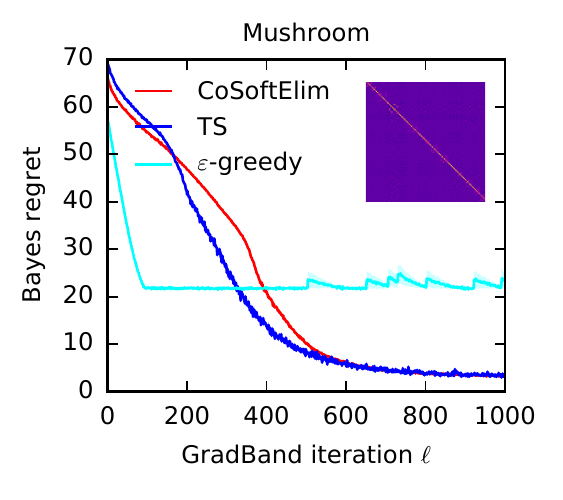}
  \includegraphics[width=1.8in]{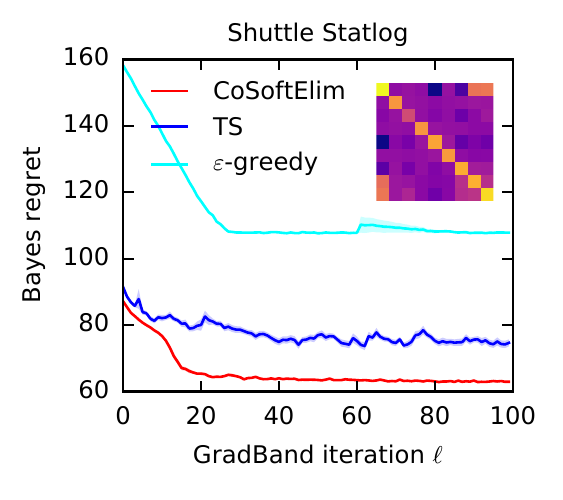}
  \includegraphics[width=1.8in]{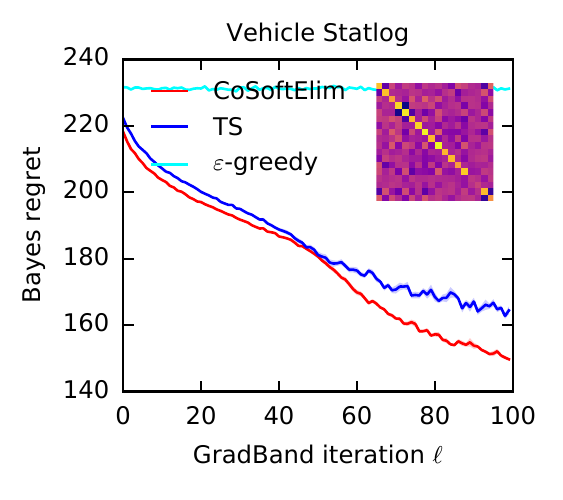}
  \includegraphics[width=1.8in]{Figures/Wine.pdf} \\
  \vspace{-0.1in}
  \caption{The Bayes regret of \cosoftelim, \ts, and $\eps$-greedy policies on all UCI ML repository problems in \cref{sec:multi-class classification experiment}. The regret is averaged over $20$ runs. We also show the learned projection matrix $W$ in \cosoftelim.}
  \label{fig:uci ml all}
\end{figure*}

Results for all UCI ML Repository datasets in \cref{sec:multi-class classification experiment} are reported in \cref{fig:uci ml all}. We observe significant improvements due to optimizing \cosoftelim. In particular, the regret decreases as
\begin{itemize}
  \item Adult: From $144.26 \pm 0.18$ to $86.90 \pm 0.35$, by $39.77\%$.
  \item Australian Statlog: From $82.58 \pm 0.11$ to $69.71 \pm 0.42$, by $16\%$.
  \item Breast Cancer: From $29.08 \pm 0.05$ to $15.10 \pm 0.57$, by $48\%$.
  \item Covertype: From $324.65 \pm 0.45$ to $190.28 \pm 0.46$m by $41.39\%$.
  \item Iris: From $72.21 \pm 0.09$ to $66.27 \pm 0.16$, by $8\%$.
  \item Image Segmentation: From $197.82 \pm 0.19$ to $150.53 \pm 1.81$, by $24\%$.
  \item Landsat Satellite Statlog: From $185.78 \pm 0.29$ to $132.24 \pm 0.41$, by $28.82\%$.
  \item Mushroom: From $66.25 \pm 0.11$ to $3.26 \pm 0.06$, by $95.07\%$.
  \item Shuttle Statlog: From $87.17 \pm 0.11$ to $62.99 \pm 0.15$, by $27.74\%$.
  \item Vehicle Statlog: From $218.16 \pm 0.22$ to $149.75 \pm 0.38$, by $31\%$.
  \item Wine: From $45.16 \pm 0.05$ to $8.18 \pm 0.24$, by $82\%$.
\end{itemize}

\clearpage

\definecolor{codegreen}{rgb}{0,0.6,0}
\definecolor{codegray}{rgb}{0.5,0.5,0.5}
\definecolor{codepurple}{rgb}{0.58,0,0.82}
\definecolor{backcolour}{rgb}{0.95,0.95,0.92}

\lstdefinestyle{mystyle}{
  backgroundcolor=\color{backcolour},
  commentstyle=\color{codegreen},
  keywordstyle=\color{magenta},
  numberstyle=\tiny\color{codegray},
  stringstyle=\color{codepurple},
  basicstyle=\ttfamily\footnotesize,
  breakatwhitespace=false,
  breaklines=true,
  captionpos=b,
  keepspaces=true,
  numbers=left,
  numbersep=5pt,
  showspaces=false,
  showstringspaces=false,
  showtabs=false,
  tabsize=2
}
\lstset{style=mystyle}

\section{RNN Implementation}
\label{sec:rnn implementation}

We carry out the RNN experiments using PyTorch framework.
In this paper, we restrict ourselves to binary 0/1 rewards.
For all experiments, our policy network is a single layer LSTM followed by LeakyRELU non-linearity and a fully connected layer. 
We use the fixed LSTM latent state dimension of 50, irrespective of numbers of arms.
The implementation of the policy network is provided in the code snippet below:

\begin{lstlisting}[language=Python, caption=Policy Network]
class RecurrentPolicyNet(nn.Module):
  def __init__(self, K=2, d=50):
    super(RecurrentPolicyNet, self).__init__()
    self.action_size = K  # Number of arms
    self.hidden_size = d
    self.input_size = 2*d

    self.arm_emb = nn.Embedding(K, self.hidden_size)     # Number of arms
    self.reward_emb = nn.Embedding(2, self.hidden_size)  # For 0 reward or 1 reward
    self.rnn = nn.LSTMCell(input_size=self.input_size,
                          hidden_size=self.hidden_size)
    self.relu = nn.LeakyReLU()
    self.linear = nn.Linear(self.hidden_size, self.action_size)

    self.hprev = None

  def reset(self):
    self.hprev = None

  def forward(self, action, reward):
    arm = self.arm_emb(action)
    rew = self.rew_emb(reward)

    inp = torch.cat((arm, rew), 1)
    h = self.rnn(inp, self.hprev)
    self.hprev = h

    h = self.relu(h[0])
    y = self.linear(h)

    return y
\end{lstlisting}

To train the policy we use the proposed \gradband algorithm as presented in Alg.~\ref{alg:gradband}. We used a batch-size $m=500$ for all experiments.
Along with theoretically motivated steps, we had to apply a few practical tricks:
\begin{itemize}
    \item Instead of SGD, we used adaptive optimizers like Adam or Yogi \citep{zaheer18adaptive}.
    \item We used an exponential decaying learning rate schedule. We start with a learning rate of 0.001 and decay every step by a factor of 0.999.
    \item We used annealing over the probability to play an arm. This encourages exploration in early phase of training. In particular we used temperature = $1/(1-\exp(-5i/L))$, where $i$ is current training iteration and $L$ is the total number of training iterations.
    \item We applied curriculum learning as described in Section~\ref{sec:rnn experiment}.
\end{itemize}
Our training procedure is highlighted in the code snippet below.

\begin{lstlisting}[language=Python, caption=Training overview]
optimizer = torch.optim.Adam(policy.parameters(), lr=0.001)
scheduler = torch.optim.lr_scheduler.ExponentialLR(optimizer, 0.999)

...

probs = rnn_policy_network(previous_action, previous_reward)  
m = Categorical(probs/temperature)   # probability over K arms with temperature
action = m.sample()                  # select one arm
reward = bandit.play(action)         # receive reward

...

loss = -m.log_prob(action) * (cummulative_reward - baseline)  # Eq (3)
loss.backward()   # Eq (9)
optimizer.step()
scheduler.step()

...
\end{lstlisting}

\end{document}